\def\iid{{i.i.d}\onedot}
\def\eg{{e.g}\onedot} 
\def\ie{{i.e}\onedot}
\def\etal{\emph{et al}\onedot}
\DeclareRobustCommand\onedot{\futurelet\@let@token\@onedot}
\def\@onedot{\ifx\@let@token.\else.\null\fi\xspace}
\DeclareMathOperator{\tr}{\operatorname{tr}}
\DeclareMathOperator{\diag}{\operatorname{diag}}
\DeclareMathOperator{\face}{\bullet}
\let\STATE\State
\let\FOR\For
\let\ENDFOR\EndFor
\newcommand{\INPUT}{\Require}
\newcommand{\OUTPUT}{\Ensure}
\theoremstyle{plain}
\newtheorem{theorem}{Theorem}[section]
\newtheorem{proposition}[theorem]{Proposition}
\newtheorem{lemma}[theorem]{Lemma}
\newtheorem{corollary}[theorem]{Corollary}
\theoremstyle{definition}
\newtheorem{definition}[theorem]{Definition}
\newtheorem{problem}{Problem}
\theoremstyle{remark}
\renewcommand{\paragraph}[1]{\noindent\textbf{#1}}
\title{Continual Release Moment Estimation \\ with Differential Privacy}
\author{%
 Nikita P.~Kalinin \\
 Institute of Science and Technology Austria (ISTA) \\
 Klosterneuburg, Austria \\
 \texttt{nikita.kalinin@ist.ac.at} \\
 \And
 Jalaj Upadhyay \\
 Department of Computer Science \\
 Rutgers University \\
 Piscataway, NJ 08854, USA \\
 \texttt{jalaj.upadhyay@rutgers.edu} \\
 \And
 Christoph H.~Lampert \\
 Institute of Science and Technology Austria (ISTA) \\
 Klosterneuburg, Austria \\
 \texttt{christoph.lampert@ist.ac.at} \\
}
\DeclareMathOperator{\E}{\mathbb E}
\newcommand{\method}{Joint Moment Estimation\xspace}
\newcommand{\acronym}{JME\xspace}
\newcommand{\Fr}{\text{F}}
\begin{document}

\maketitle

\begin{abstract}
 We propose \textit{\method} (\acronym), a method for continually and privately estimating both the first and second moments of a data stream with reduced noise compared to naive approaches. \acronym supports the {\em matrix mechanism} and exploits a joint sensitivity analysis to identify a privacy regime in which the second-moment estimation incurs no additional privacy cost, thereby improving accuracy while maintaining privacy. 
We demonstrate JME’s effectiveness in two applications: estimating the running mean and covariance matrix for Gaussian density estimation and model training with DP-Adam. \end{abstract}

\section{Introduction}
Estimating the first and second moments of data is a fundamental step in many machine learning algorithms, ranging from foundational methods, such as linear regression, principal component analysis, or Gaussian model fitting, to state-of-the-art neural network components, such as BatchNorm~\citep{pmlr-v37-ioffe15}, and optimizers, such as Adam~\citep{kingma2014adam}. 
Often, these estimates must be computed and updated continuously, called the \emph{continual release} setting~\citep{dwork2010differential}. 
For example, this is the case when the data arrives sequentially and 
intermediate results are needed without delay, such as for sequential optimization algorithms like Adam, or for real-time systems in healthcare, finance, or recommendation systems. In such scenarios, ensuring the privacy of sensitive data, such as user information or medical records, is essential.

\emph{Differential Privacy (DP)} is a widely established formal notion of privacy that ensures that the inclusion or exclusion of any single individual’s data has a limited impact on the output of an algorithm. 
Technically, DP masks sensitive information by adding suitably scaled noise, thereby creating a trade-off between privacy (formalized as a \emph{privacy budget}) and utility (measured by the expected accuracy of the estimates).
This trade-off becomes particularly challenging when more than one quantity is meant to be estimated from the same private data, as is the case when estimating multiple data moments. 
Done naively, the privacy budget has to be split between the estimates, resulting in more noise and lower accuracy for both of them. 

In this work, we introduce a new method, \emph{\method\ (\acronym)}, that is able to privately estimate the first and second moments of vector-valued data without suffering from this shortcoming.
Algorithmically, \acronym relies on the recent \emph{matrix factorization (MF)} mechanism~\citep{li2015matrix} for continual release DP
to individually compute the first and the second moments of the data, thereby making it flexible to accommodate a variety of settings. 
For example, besides the standard uniform sum or weighted average across data items, exponentially weighted averages or sliding-window estimates are readily possible.
The key innovation of \acronym lies in our theoretical analysis of its properties. By jointly analyzing the \emph{sensitivity} of the otherwise independent estimation processes, in combination with considering a carefully calibrated trade-off between them, we show that one can estimate privately the second-moment matrix \emph{without having to increase the amount of noise required to keep the first moment private.}
In this sense, we obtain privacy of the second moment \emph{for free}. 

\acronym is practical and easily implemented using standard programming languages and toolboxes. 
We demonstrate this by showcasing two applications, one classical and one modern.
First, we use \acronym for continual density estimation using a multivariate Gaussian model, \eg we estimate the running mean and covariance matrix of a sequence of vector-valued observations. 
Our experiments confirm that \acronym achieves a lower Frobenius norm error for the covariance matrix in high-privacy regimes as well as a better fit to the true distribution as measured by the Kullback-Leibler divergence.
Second, we integrate \acronym into the \emph{Adam} optimizer, which is widely used in deep learning. 
Here, as well, we observe that JME achieves better optimization accuracy than baseline methods in high-privacy and small-batch-size regimes.

\section{Background}\label{sec:background}
%
%
We study the problem of differentially private continually estimation of the pair of weighted sums of the first and second moments, see Section~\ref{sec:relatedwork} for a 
discussion of related works.
Specifically, consider a sequence of $d$-dimensional vectors $x_1, \dots, x_n\in\mathcal{X}$, where $\mathcal{X}=\{(x_1,\dots,x_n)\;:\; x_i \in\mathbb{R}^{d}
\ \wedge\ \max_i \|x_i\|_2\leq \zeta\}$ for some fixed constant $\zeta>0$. At each step $t$, we aim to privately estimate the following pair of sums:
\begin{align}
 Y_t &= \sum\limits_{i = 1}^{t} a^t_{i} x_i \in \mathbb{R}^{d} \ \ \text{and} \  \ S_t =\sum\limits_{i = 1}^{t} b^t_{i} x_i x_i^{\top} \in \mathbb{R}^{d \times d},
\label{eq:moments}
\end{align}
for arbitrary coefficients $a^t_{i} \in \mathbb R$ and $b^t_{i} \in \mathbb R$.
This formulation includes many practical schemes; see Table~\ref{tab:matrixentries} in the appendix.
To express this problem compactly, we rewrite it in matrix form. 
We collect the coefficients into lower triangular workload matrices, 
$A_1 = (a_{i}^t)_{1 \le i,t\le n} \in \mathbb{R}^{n \times n}$ and 
$A_2 = (b_{i}^t)_{1 \le i,t\le n} \in \mathbb{R}^{n \times n}$. 
We stack the data as rows into a matrix $X\in\mathbb{R}^{n\times d}$.
Then, all terms of the sums in~\eqref{eq:moments} can then be expressed compactly\footnote{We slightly abuse the notation and make $S$ a matrix of shape $n\times d^2$ instead of a tensor of size $n\times d\times d$.} as $Y=A_1X$ and $S=A_2(X \face X)$, where the second 
 matrix consists of the data vectors stacked as rows, and $\bullet$ denotes the \emph{Face-Splitting Product} or \emph{transposed Khatri-Rao product}~\citep{esteve2009}, $(X \face X)_{ij}:= (x_i \otimes x_i)_j = \textsf{vec}(x_i x_i^\top)_j$, where $\otimes$ is the \textit{Kronecker product} of two vectors.

To protect the privacy of individual data elements, we rely on the notion of \emph{differential privacy}, considering neighboring datasets as those differing by a single data point; see \eg,~\citep{dwork2014algorithmic} for an introduction.
Recently, a series of works have developed effective methods for privately estimating individual matrices in the aforementioned product form by means of the \emph{matrix mechanism}~\citep{li2015matrix, CNPBINDPL, Denisov,MultiEpoch,dvijotham2024efficient,kalinin2024,henzinger2024}.
Its core insight is that, in the continual release setting, outputs at later steps should require less noise to be made private than earlier ones because more data points contributed to their computation. 
One can exploit this fact by adding noise that is \emph{correlated} between the steps instead of being independent.
This way, at each step, some of the noise added at an earlier stage can be removed again, thereby resulting in a less noisy result without lowering the privacy guarantees.

Technically, the matrix mechanism relies on an invertible \emph{noise shaping matrix} $C$.
For our theoretical results, we often also assume that $C$ has decreasing column norms, 
As the private estimate of a product $AX$, one computes $\widehat{AX}=A(X+C^{-1}Z)$, where $Z$ is a matrix of \iid Gaussian noise.
This estimate is $(\epsilon,\delta)$-differentially private if the noise magnitude is at least $\text{sens}(CX)\cdot\sigma_{\epsilon,\delta}$,
where $\sigma_{\epsilon,\delta}$ denotes the variance required in the Gaussian distribution to ensure $(\epsilon, \delta)$-DP for sensitivity $1$ queries\footnote{Here and in the following we do not provide a formula for $\sigma_{\epsilon,\delta}$, because closed-form expressions have been shown to be suboptimal in some regimes. Instead, we recommend determining its value numerically, such as in~\citep{balle2018improving}.
For reference, typical values of $\sigma_{\epsilon,\delta}$ lie between approximately $0.5$ (low privacy, \eg $\epsilon=8, \delta=10^{-3})$ and $50$ (high privacy, \eg $\epsilon=0.1, \delta=10^{-9})$.}.
$\text{sens}(\cdot)$ denotes the \emph{sensitivity}, which, for any function $F$, is defined as
\begin{align}
\text{sens}(F)
= \max_{X\sim X'}\|F(X)\!-\!F(X')\|_{\Fr} 
\end{align}
where $X\sim X'$ denote two neighboring data matrices, \ie identical except for one of the data vectors.

\section{\method (\acronym)}\label{sec:method}

We now introduce our main algorithmic contribution: the \method (\acronym) algorithm for solving the problem of differentially private (weighted) moment estimation in a continual release setting. 
\Cref{alg:JME} shows its pseudo-code. 
\acronym takes as input a stream of input data vectors, the weights for the desired estimate in the form of two lower triangular workload matrices, and privacy parameters $\epsilon,\delta>0$.
Optionally, it takes two noise-shaping matrices as input.
If these are not provided, $C_1=C_2=\text{I}_{n\times n}$, can serve as defaults. 

The algorithm uses the matrix mechanism to privately estimate 
the weighted sums of vectors for both the first and second moments. 
At each step, the algorithm receives a new data point, $x_t$, it creates private version of both $x_t$ and $x_t \otimes x_t$ by 
adding suitably scaled Gaussian noise.
The noise shaping matrices, if provided, determine the covariance structure of the noise. 

To balance between the estimates of the first moment and 
of the second moment, a scaling parameter, $\lambda$, is
used. 
In Algorithm~\ref{alg:JME}, this parameter is fixed such that 
the total sensitivity of estimating both moments is equal to the sensitivity of just 
estimating the first moment alone. 
This implies that the overall noise variance to make both moment estimates private is equal to that needed to achieve the same level of privacy for the first moment. 
Consequently, the fact that we also privately estimate the 
second moment does not increase the necessary noise level 
for the first moment, a property we call \textbf{(second moment) privacy for free}. 

\begin{algorithm}[t]
\caption{\method (\acronym)}\label{alg:JME}
\begin{algorithmic}
\INPUT stream of vectors $x_1, \dots, x_n\in \mathbb{R}^d$ with \mbox{$\|x_t\|_2 \leq \zeta$}, 
\INPUT workload matrices $A_1, A_2\in \mathbb{R}^{n \times n}$

\INPUT privacy parameters $(\epsilon,\delta)$
\INPUT noise shaping matrices $C_1, C_2\in\mathbb{R}^{n \times n}$ (invertible, decreasing column norm),  defaults: $\text{Id}$

\smallskip
\STATE $\sigma_{\epsilon, \delta}\leftarrow$ noise strength for $(\epsilon,\delta)$-DP Gaussian mechanism
\STATE $\lambda\leftarrow \|C_1\|_{1\to 2}^2/(c_d\zeta^2\|C_2\|_{1 \to 2}^2)$ with $c_1=\frac{8}{11 + 5\sqrt{5}}$, and $c_d = 2$ for $d\geq 2$
\quad // scaling parameter

\STATE $s\leftarrow 2\zeta \|C_1\|_{1 \to 2}$ \hfill// joint sensitivity
\STATE $Z_1\sim \big[\mathcal{N}(0, \sigma_{\epsilon, \delta}^2 s^2) \big]^{n\times d}$,
$Z_2\sim \big[\mathcal{N}(0, \sigma_{\epsilon, \delta}^2 s^2)
\big]^{n\times d^2}$ \hfill// 1st and 2nd moment noise 
\smallskip
\FOR{$t = 1, 2, \dots ,n$}
\STATE $\widehat{x_t} \leftarrow x_t +  [C_{1}^{-1}Z_1]_{[t,\cdot]}$ and  $\widehat{x_t \otimes x_t} \leftarrow x_t \otimes x_t + \lambda^{-1/2}[C_{2}^{-1}Z_2]_{[t,\cdot,\cdot]}$
\STATE \textbf{yield} \ \ $\widehat{Y_t}=\sum\limits_{i = 1}^{t} [A_1]_{t,i}\widehat{x_i}, \ \ \widehat{S_t}=\sum\limits_{i = 1}^{t} [A_2]_{t,i} \widehat{x_i \otimes x_i}$  
\ENDFOR
\end{algorithmic}
\end{algorithm}

Note that \emph{privacy for free} is a quite remarkable property of \acronym. 
Generally, when using the same data more than once for private
computations, more noise for each of them is required to ensure 
the overall privacy, following, \eg, the \emph{composition 
theorems of DP}~\citep{kairouz15}. 
In some situations, however, one might also prefer a more flexible way to trade-off between the two moment estimates.
For this, we present $\lambda$-\acronym in the appendix 
(\Cref{alg:lambdaJME}), a variant of \acronym that has $\lambda$ as a free hyper-parameter.

\paragraph{Properties of \acronym}
For some inputs $X=(x_1,\dots,x_n)\in\mathbb{R}^{n\times d}$,
let $Y=(Y_1,\dots,Y_n)\in\mathbb{R}^{n\times d}$ and 
$S=(S_1,\dots,S_n)\in\mathbb{R}^{n\times d^2}$ be the 
matrices of their first and second moments after each step,
and let $\widehat Y=(Y_1,\dots,Y_n)\in\mathbb{R}^{n\times d}$ 
and $\widehat S=(S_1,\dots,S_n)\in\mathbb{R}^{n\times d^2}$ 
be the private estimates computed by \Cref{alg:JME}
with workload matrix $A$ and noise shaping matrix $C$. 
Then, we have the following.

\begin{theorem}[Noise properties of \acronym]\label{thm:jmenoise}
$\widehat Y$ and $\widehat S$ are unbiased estimates of $Y$ and $S$. 
Their estimation noise, $\widehat Y-Y$ and $\widehat S-S$, has 
a symmetric distribution.
\end{theorem}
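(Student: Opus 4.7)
The plan is to write $\widehat{Y}-Y$ and $\widehat{S}-S$ explicitly as linear functions of the Gaussian noise matrices $Z_1$ and $Z_2$, and then invoke two elementary properties of the Gaussian: zero mean and symmetry around the origin. Both claims then follow immediately, since linear maps preserve both expectation and the property of having a symmetric distribution.

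Concretely, I would first unroll the per-step updates in Algorithm~\ref{alg:JME} into matrix form. Stacking the outputs row-wise yields
\begin{align}
\widehat{Y} &= A_1\bigl(X + C_1^{-1}Z_1\bigr) = Y + A_1 C_1^{-1}Z_1,\\
\widehat{S} &= A_2\bigl((X\face X) + \lambda^{-1/2}C_2^{-1}Z_2\bigr) = S + \lambda^{-1/2}A_2 C_2^{-1}Z_2,
\end{align}
so the estimation noises are exactly the deterministic linear transforms $A_1 C_1^{-1}Z_1$ and $\lambda^{-1/2}A_2 C_2^{-1}Z_2$ of the i.i.d.\ Gaussian matrices $Z_1$ and $Z_2$.

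For unbiasedness, I would observe that the entries of $Z_1$ and $Z_2$ are zero-mean, hence $\mathbb{E}[Z_1]=0$ and $\mathbb{E}[Z_2]=0$; by linearity of expectation, $\mathbb{E}[\widehat{Y}-Y]=0$ and $\mathbb{E}[\widehat{S}-S]=0$. For symmetry, I would use the fact that for an i.i.d.\ Gaussian matrix $Z$, the matrix $-Z$ has the same distribution as $Z$. Hence for any deterministic matrix $M$, $MZ \stackrel{d}{=} -MZ$, so the distribution of $MZ$ is symmetric about the origin. Applying this with $M=A_1 C_1^{-1}$ and $M=\lambda^{-1/2}A_2 C_2^{-1}$ gives the symmetry of $\widehat{Y}-Y$ and $\widehat{S}-S$, respectively.

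There is no real obstacle here; the statement is essentially a corollary of the fact that the matrix mechanism adds a linear functional of a centered, symmetric Gaussian noise. The only care needed is to make sure the per-step formulas in the algorithm really do aggregate into the clean closed forms above, which follows because $A_1$ and $A_2$ are applied row-by-row in the final \textbf{yield} step in exact correspondence with matrix multiplication. No assumption on invertibility of $C_1, C_2$ beyond what the algorithm already requires is needed, and neither the precise value of $\lambda$ nor the calibration of $\sigma_{\epsilon,\delta}$ and $s$ enters the argument.
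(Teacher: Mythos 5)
Your proposal is correct and takes essentially the same approach as the paper: both reduce the claim to the fact that the algorithm adds zero-mean Gaussian noise at each step, so that $\widehat Y - Y$ and $\widehat S - S$ are deterministic linear images of centered, symmetric Gaussians, whence unbiasedness and symmetry follow. You simply spell out the unrolled matrix form $\widehat Y = Y + A_1C_1^{-1}Z_1$, $\widehat S = S + \lambda^{-1/2}A_2C_2^{-1}Z_2$ explicitly, whereas the paper states the per-step version and leaves the aggregation implicit.
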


\begin{theorem}[Privacy of \acronym]\label{thm:mainprivacy}
\Cref{alg:JME} is $(\epsilon, \delta)$-differentially private. 
\end{theorem}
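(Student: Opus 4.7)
My plan is to recognise the entire algorithm as a single Gaussian mechanism applied to a jointly-defined query $F$, and then bound the $\ell_2$-sensitivity of $F$ by $s$; the theorem then follows immediately from the definition of $\sigma_{\epsilon,\delta}$ on line~1 of Algorithm~\ref{alg:JME}. The only non-trivial step is establishing the sensitivity bound, which hinges on a sharp geometric inequality relating $\|u-v\|_2^2$ and $\|uu^\top - vv^\top\|_{\Fr}^2$.

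First I would observe that the output of \acronym is a deterministic function of the noised streams $\widehat X = X + C_1^{-1}Z_1$ and $\widehat{X\face X} = X\face X + \lambda^{-1/2}C_2^{-1}Z_2$. Since $C_1, C_2$ are invertible and $\lambda>0$, left-multiplying the first by $C_1$ and the second by $\sqrt{\lambda}\, C_2$ is a bijection, so by post-processing the algorithm is equivalent, for DP purposes, to releasing
\begin{align*}
M(X) = \bigl(C_1 X + Z_1,\; \sqrt{\lambda}\, C_2(X\face X) + Z_2\bigr),
\end{align*}
which is the Gaussian mechanism with i.i.d.\ per-entry variance $\sigma_{\epsilon,\delta}^2 s^2$ applied to the joint query $F(X) = (C_1 X,\; \sqrt{\lambda}\, C_2(X\face X))$. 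For neighbours $X, X'$ differing only in row $k$, the block structure of $F$ then gives
\begin{align*}
\|F(X)-F(X')\|_{\Fr}^2 &= \|c_k^{(1)}\|_2^2\,\|x_k - x'_k\|_2^2 \\
&\quad + \lambda\,\|c_k^{(2)}\|_2^2\,\|x_k x_k^\top - x'_k (x'_k)^\top\|_{\Fr}^2,
\end{align*}
where $c_k^{(j)}$ is the $k$-th column of $C_j$. Bounding each column norm by $\|C_j\|_{1\to 2}$ (the maximum column norm) and substituting $\lambda=\|C_1\|_{1\to 2}^2/(c_d\zeta^2\|C_2\|_{1\to 2}^2)$, the required inequality $\text{sens}(F)^2\le 4\zeta^2\|C_1\|_{1\to 2}^2$ reduces to the scalar bound
\begin{align*}
\|u-v\|_2^2 + \tfrac{1}{c_d\zeta^2}\,\|uu^\top-vv^\top\|_{\Fr}^2 \le 4\zeta^2
\end{align*}
for all $u,v\in\mathbb R^d$ with $\|u\|_2,\|v\|_2\le\zeta$.

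Proving this scalar inequality with the \emph{sharp} constants $c_d$ is the main obstacle. I would introduce $a=\|u\|^2$, $b=\|v\|^2$, $c=u^\top v$, with $a,b\in[0,\zeta^2]$ and $c^2\le ab$ (with equality forced when $d=1$), turning the bound into $a+b-2c+(a^2+b^2-2c^2)/(c_d\zeta^2)\le 4\zeta^2$. For $d\ge 2$ the LHS is a concave quadratic in $c$ with unconstrained maximiser $c^\star=-c_d\zeta^2/2$, feasible at the extremal configuration $a=b=\zeta^2$ (\ie $u=-v$ with $\|u\|_2=\zeta$); plugging this in and maximising over $a,b$ collapses the condition to $(c_d-2)^2\le 0$, pinning down the tight constant $c_d=2$. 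For $d=1$ the constraint $c=\pm\sqrt{ab}$ cuts out a smaller feasible region, allowing a strictly smaller $c_1$; setting $c=-\sqrt{ab}$ (worst case), reducing to the boundary $a=\zeta^2$, and substituting $t=\sqrt{b}/\zeta$ turns the problem into $\max_{t\in[0,1]}(1-t)(1+t)^2/(3+t)$, whose critical point $t^\star=\sqrt{5}-2$ yields the stated $c_1=8/(11+5\sqrt{5})$. Combining this inequality with the sensitivity reduction above completes the privacy proof.
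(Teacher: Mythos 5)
Your proposal is correct and follows essentially the same route as the paper: reduce privacy to bounding the joint sensitivity of $(C_1X,\sqrt{\lambda}\,C_2(X\face X))$, localize to the single differing row via the column norms, and solve the resulting two-vector optimization in the variables $(\|u\|_2^2,\|v\|_2^2,\langle u,v\rangle)$ with the same $d=1$ versus $d\ge 2$ case split, arriving at the identical constants $c_d=2$ and $c_1=8/(11+5\sqrt{5})$ (your critical point $t^\star=\sqrt{5}-2$ indeed gives $10\sqrt{5}-22=8/(11+5\sqrt{5})$). The only inessential difference is that you verify the inequality directly at the algorithm's specific $\lambda$ rather than first deriving the full sensitivity function $r_d(\nu)$ of \Cref{lem:jointsensitivity} and then specializing.
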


\begin{theorem}[Utility of \acronym]\label{thm:main}
For any input $X$, let $c_d$ be as defined in \Cref{alg:JME}. Then the expected approximation error of \Cref{alg:JME} 
for the first and second moments are 
\begin{align}
     \sqrt{\mathbb{E}\|Y - \widehat{Y}\|^2_{\Fr}} &= 2\zeta \sqrt{d} \sigma_{\epsilon, \delta} \|C_1\|_{1 \to 2} \|A_1 C_1^{-1}\|_{\Fr}.
     \label{eq:JMEquality1} \\
     \sqrt{\mathbb{E}\|S - \widehat{S}\|^2_{\Fr}} &=2\zeta^2\sqrt{c_d}  d  \sigma_{\epsilon, \delta} \|C_2\|_{1 \to 2} \|A_2 C_2^{-1}\|_{\Fr}.\label{eq:JMEquality2}
\end{align}
\end{theorem}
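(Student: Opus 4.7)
The plan is to reduce both error identities to a single computation of the Frobenius variance of a linear function of i.i.d.\ Gaussian noise, exploiting the fact that the algorithm adds noise to the data \emph{before} applying the workload matrices, so the true moments cancel exactly and only a noise term remains. From the update rules in \Cref{alg:JME}, the private iterates satisfy $\widehat{x_i} = x_i + [C_1^{-1}Z_1]_{i,\cdot}$ and $\widehat{x_i\otimes x_i} = x_i\otimes x_i + \lambda^{-1/2}[C_2^{-1}Z_2]_{i,\cdot}$. Summing along the workload rows and collecting into matrix form, the yielded estimates equal $\widehat Y = A_1(X + C_1^{-1}Z_1)$ and $\widehat S = A_2\bigl((X\face X) + \lambda^{-1/2} C_2^{-1}Z_2\bigr)$, so the errors admit the clean closed forms
\begin{align*}
\widehat Y - Y &= A_1 C_1^{-1} Z_1, \\
\widehat S - S &= \lambda^{-1/2}\, A_2 C_2^{-1} Z_2.
\end{align*}

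Next, I would invoke the standard identity that whenever $Z\in\mathbb{R}^{n\times k}$ has i.i.d.\ $\mathcal{N}(0,\tau^2)$ entries and $M\in\mathbb{R}^{n\times n}$ is deterministic, $\mathbb{E}\|MZ\|_{\Fr}^2 = k\tau^2\|M\|_{\Fr}^2$. This is a one-line entrywise expansion using independence across columns and the basic variance formula for weighted sums of independent Gaussians. Applying it with $M = A_1 C_1^{-1}$, $k=d$, and $\tau^2 = \sigma_{\epsilon,\delta}^2 s^2$ yields the squared first-moment error; applying it with $M = A_2 C_2^{-1}$, $k=d^2$, and effective variance $\lambda^{-1}\sigma_{\epsilon,\delta}^2 s^2$ yields the squared second-moment error.

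The final step is pure bookkeeping: substitute $s = 2\zeta\|C_1\|_{1\to 2}$ and $\lambda^{-1} = c_d\zeta^2\|C_2\|_{1\to 2}^2/\|C_1\|_{1\to 2}^2$ as defined in the algorithm, then take square roots. The cancellation of $\|C_1\|_{1\to 2}^2$ in the second-moment error is precisely what the choice of $\lambda$ engineers, so no residual factor of $C_1$ remains.

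I do not expect any real obstacle: once the error identities are written out, everything is deterministic linear algebra combined with the Gaussian variance formula. The only minor care needed is to remember that $Z_2$ has $d^2$ columns rather than $d$, because each outer product $x_i x_i^\top$ is vectorized; this is what produces the factor of $d$ (rather than $\sqrt{d}$) in the second identity. Unbiasedness of $\widehat Y$ and $\widehat S$, already recorded in \Cref{thm:jmenoise}, also justifies interpreting these $L_2$ quantities as pure noise variances without mean-shift corrections.
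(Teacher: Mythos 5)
Your proposal is correct and takes essentially the same route as the paper's proof: write $\widehat Y-Y=A_1C_1^{-1}Z_1$ and $\widehat S-S=\lambda^{-1/2}A_2C_2^{-1}Z_2$, apply the identity $\mathbb{E}\|MZ\|_{\Fr}^2=k\tau^2\|M\|_{\Fr}^2$, and substitute the algorithm's choices of $s$ and $\lambda$. One minor note: carrying your bookkeeping through for the second moment gives a prefactor $2\sqrt{c_d}\,\zeta^2$ rather than the $2\sqrt{c_d}\,\zeta$ printed in \eqref{eq:JMEquality2} (since $\lambda^{-1}s^2=4c_d\zeta^4\|C_2\|_{1\to2}^2$), which appears to be a typo in the paper's statement and is immaterial once $\zeta=1$ is assumed in the later sections.
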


\begin{proof}[Proof of \Cref{thm:jmenoise}]
The statement follows from the fact that for any $t=1,\dots,n$,
\acronym's estimates $\widehat{x_t}$ 
of $x_t$ and $\widehat{x_t\otimes x_t}$ 
of $x_t\otimes x_t$ are unbiased with symmetric noise distribution because they are constructed by adding zero-mean 
Gaussian noise.
\end{proof}

\begin{proof}[Proof sketch of \Cref{thm:mainprivacy}]
The result follows using Gaussian mechanism once we show 
that the overall sensitivity of jointly estimating both moments 
has at most $s=2\zeta\|C_1\|_{1\to 2}$, as stated in \Cref{alg:JME}. 
\begin{definition}
For any noise-shaping matrices $C_1$ and $C_2$ and any $\lambda>0$, we define
the \emph{joint sensitivity} of estimating the first and $\lambda$-weighted 
second moment by
\begin{align}
    \text{sens}_{\lambda}^2(C_1,C_2)
    &= \sup_{X \sim X'}\!\Big\|
    \begingroup\setlength{\arraycolsep}{1pt}\begin{pmatrix}
        C_1 & \textbf{0}\\
        \textbf{0} & \sqrt{\lambda} C_2\\
    \end{pmatrix}\endgroup
    \begingroup\setlength{\arraycolsep}{1pt}\begin{pmatrix}
        X \!-\! X'\\
        X \face X \!-\! X' \face X'
    \end{pmatrix}\endgroup \Big\|_{\Fr}^2 
    \\
    &=\sup_{X \sim X'}\!\!\Big[\|C_1(X \!-\! X')\|_{\Fr}^2 +  \lambda \|C_2
    (X \face X \!-\! X' \face X')\|_{\Fr}^2\Big].\notag
\end{align}
\end{definition}

The following lemma (shown in the appendix) characterizes the values of the joint sensitivity as a function of $\lambda$. %

\begin{lemma}[Joint Sensitivity]\label{lem:jointsensitivity}
Assume that the matrices $C_1$ and $C_2$ have norm-decreasing columns\footnote{The decreasing column norm structure ensures that the optimum is simultaneously achieved for the first columns of the matrices $C_1$ and $C_2$. This can be relaxed by solving a maximization problem over the column indices, which could be done numerically. Moreover, the matrices that we have in mind for practical tasks are Toeplitz, for which the norm-decreasing property is fulfilled naturally.}. Then, for any $\lambda > 0$ holds:  
\begin{align}
     \text{sens}_{\lambda}^2(C_1, C_2) &= \zeta^2 \|C_1\|_{1 \to 2}^2 r_d\left(\frac{\lambda \zeta^2 \|C_2\|_{1 \to 2}^2}{\|C_1\|_{1 \to 2}^2}\right),
\end{align}
where $\|\cdot\|_{1 \to 2}$ denotes the maximum column norm, corresponding to the norm of the first column of $C_1$ and $C_2$, respectively. The function $r_d(\nu)$ is given by 
\begin{equation}
        \label{eq:r_d}
        r_d(\nu) = 
        \begin{cases}
            \frac{1}{8} (3 - \tau)^2 (\nu \tau + 1 + \nu),  \text{ with }\tau = \sqrt{1 - \frac{2}{\nu}}, \quad & \text{if } \nu > \frac{11 + 5\sqrt{5}}{8}, \, d = 1, \\ 
             2 + 2\nu + \frac{1}{2\nu}, & \text{if } \nu > \frac{1}{2}, \, d > 1, \\[0.5em]
             4,  & \text{otherwise.} 
        \end{cases}
\end{equation}
\end{lemma}

\Cref{alg:JME} uses the scaling parameter $\lambda=\|C_1\|_{1\to 2}^2/(c_d\zeta^2\|C_2\|_{1 \to 2}^2)$, so, by \Cref{lem:jointsensitivity}, the sensitivity of estimating both moment has the value $\sqrt{\zeta^2 \|C_1\|_{1 \to 2}^2 r_d(c_d)}$. The choice of $c_d$ implies that $r_d(c_d)=4$, which yields 
$\text{sens}_{\lambda}(C_1,C_2)=s$. This concludes the proof of \Cref{thm:mainprivacy}.
\end{proof}

\begin{proof}[Proof sketch of \Cref{thm:main}] The identities follow from the general 
properties of the matrix mechanism.
For any input $X$, the output of \Cref{alg:JME} for the first moment is $\widehat Y=Y + A_1C^{-1}_1Z_1$, where $Z_1\sim[\mathcal{N}(0, \sigma_{\epsilon, \delta}^2 s^2)]^{n\times d}$. Consequently, $\widehat Y-Y = A_1C^{-1}_1Z_1$, and hence 
\begin{align}
\E_{Z_1}\|\widehat Y-Y\|^2_{\Fr} 
&= \|A_1C^{-1}_1\|^2_{\Fr}\cdot \sigma^2_{\epsilon, \delta} s^2 d
\end{align}
Equation~\eqref{eq:JMEquality1} follows by inserting $s=2\zeta \|C_1\|_{1 \to 2}$. 
For the second moment, Equation~\eqref{eq:JMEquality2} follows analogously using
that the output of \Cref{alg:JME} is $\widehat S=S+\lambda^{-\frac12}A_2C^{-1}_{2}Z_2$
with $Z_2\sim[\mathcal{N}(0, \sigma_{\epsilon, \delta}^2 s^2)]^{n\times d^2}$. 
\end{proof}

\subsection{Comparison with alternative techniques}
Besides \acronym, alternatives methods are possible that could be used to privately estimate the first and second moments.
In this section, we introduce some of them and discuss their relation to \acronym.

\paragraph{Independent Moment Estimation (IME).} 
A straight-forward way to privately estimate first and second moments is to estimate and privatize both of them separately, where the 
necessary amount of noise is determined by the composition theorem of the Gaussian mechanism~\citep{abadi2016deep} (see \Cref{alg:alphaIME} in the appendix).

IME resembles \acronym in the sense that (i) separate estimates of the 
moments are created, and (ii) the privatized results are unbiased 
estimators.
However, it does not have \acronym's \emph{privacy for free} property. 
This is because IME relies on the composition theorem, so 
the privacy budget is split into two parts, one per estimate, 
where the exact split is a hyperparameter of the method.
As a consequence, IME's estimate of the first moment is always more noisy, and thereby of lower expected accuracy, than JME's. 
For the second moment, IME could in principle achieve a lower noise than plain \acronym by adjusting the budget split parameter in an uneven way. 
This, however, would come at the expense of further increase in the error for estimating  the first moment. 

The following theorem establishes that \textbf{$\lambda$-\acronym, the variant of \acronym with adjustable $\lambda$ parameter offers a strictly better trade-off than IME.}

\begin{restatable}[\acronym vs IME]{theorem}{theoremJMEvsIME}\label{lem:JMEvsIME}
For any $\epsilon,\delta>0$, $\lambda$-\acronym Pareto-dominates IME with 
respect to the approximation error for the first vs second moment estimates.
\end{restatable}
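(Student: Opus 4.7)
The plan is to parametrize both methods' Pareto frontiers in a shared normalized coordinate system and show that JME's frontier is strictly closer to the origin. The key driver is that JME exploits the tight joint sensitivity of \Cref{lem:jointsensitivity}, whereas IME's composition analysis bounds each moment's sensitivity separately and sums them; the gap between the true joint sensitivity and this loose upper bound is exactly what produces the improvement.

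First I would express IME's Pareto frontier. IME adds independent Gaussian noise with standard deviations $\sigma_1,\sigma_2$ to the two moments, whose replacement-neighbor sensitivities are $2\zeta\|C_1\|_{1\to 2}$ and $2\zeta^2\|C_2\|_{1\to 2}$. Gaussian composition (equivalently, the concatenated-output Gaussian mechanism using these per-moment sensitivities) gives the privacy constraint $(2\zeta\|C_1\|_{1\to 2})^2/\sigma_1^2 + (2\zeta^2\|C_2\|_{1\to 2})^2/\sigma_2^2 \leq 1/\sigma_{\epsilon,\delta}^2$. Inserting the matrix-mechanism error formulas $e_1^2=\sigma_1^2\|A_1 C_1^{-1}\|_\Fr^2 d$ and $e_2^2=\sigma_2^2\|A_2 C_2^{-1}\|_\Fr^2 d^2$ and normalizing
\begin{align*}
\alpha_1 &= e_1^2/(\sigma_{\epsilon,\delta}^2\zeta^2\|C_1\|_{1\to 2}^2\|A_1 C_1^{-1}\|_\Fr^2 d), \\
\alpha_2 &= e_2^2/(\sigma_{\epsilon,\delta}^2\zeta^4\|C_2\|_{1\to 2}^2\|A_2 C_2^{-1}\|_\Fr^2 d^2),
\end{align*}
collapses the IME frontier to $1/\alpha_1+1/\alpha_2 = 1/4$.

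Next I would derive $\lambda$-JME's frontier. Reusing the argument of \Cref{thm:main} with the $\lambda$-dependent noise scales and substituting the tight joint sensitivity $\text{sens}_\lambda^2(C_1,C_2)=\zeta^2\|C_1\|_{1\to 2}^2 r_d(\nu)$ with $\nu=\lambda\zeta^2\|C_2\|_{1\to 2}^2/\|C_1\|_{1\to 2}^2$, the same normalization yields $\alpha_1^{\mathrm{JME}}(\nu)=r_d(\nu)$ and $\alpha_2^{\mathrm{JME}}(\nu)=r_d(\nu)/\nu$. For any point on the IME frontier, I would match JME's parameter to $\nu=\alpha_1^{\mathrm{IME}}/\alpha_2^{\mathrm{IME}}$; the IME constraint then fixes $\alpha_1^{\mathrm{IME}}=4(1+\nu)$ and $\alpha_2^{\mathrm{IME}}=4(1+\nu)/\nu$, so the entire two-coordinate Pareto comparison reduces to the single scalar inequality $r_d(\nu)<4(1+\nu)$ for every $\nu>0$.

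The hard part is then verifying this scalar inequality, which I would do by a short case analysis on the piecewise formula~\eqref{eq:r_d}. The saturated regime $r_d\equiv 4$ is trivial since $4<4(1+\nu)$ for $\nu>0$. The $d\geq 2$, $\nu>1/2$ case with $r_d(\nu)=2+2\nu+1/(2\nu)$ reduces to $4\nu^2+4\nu>1$, which holds throughout $\nu>1/2$. The $d=1$, $\nu>(11+5\sqrt 5)/8$ case with $r_1(\nu)=(3-\tau)^2(1+\nu+\nu\tau)/8$ can be dispatched by the crude bounds $(3-\tau)^2<9$ and $1+\nu+\nu\tau<1+2\nu$, giving $r_1(\nu)<\tfrac{9}{8}(1+2\nu)<4(1+\nu)$. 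Combining the three cases establishes strict Pareto dominance in both coordinates and yields the theorem.
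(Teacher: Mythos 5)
Your overall strategy is the same as the paper's: account for privacy via Gaussian/GDP composition, match the two methods at equal first-moment error (equivalently, at equal ratio $\nu$ of normalized errors), and reduce Pareto dominance to a scalar inequality between $r_d(\nu)$ and IME's frontier. The reduction itself is sound, and the observation that matching the ratio makes one coordinate's dominance imply the other's is correct.

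There is, however, a concrete quantitative error that weakens what you actually prove: you take IME's second-moment sensitivity to be $2\zeta^2\|C_2\|_{1\to 2}$ (squared: $4\zeta^4\|C_2\|_{1\to2}^2$), which is the loose triangle-inequality bound $\|x\otimes x\|+\|y\otimes y\|\le 2\zeta^2$. The tight value, and the one the paper's \Cref{alg:alphaIME} actually uses, is $\sqrt{2}\,\zeta^2\|C_2\|_{1\to2}$, since $\|x\otimes x-y\otimes y\|_{\Fr}^2=\|x\|^4+\|y\|^4-2\langle x,y\rangle^2\le 2\zeta^4$. This halves IME's required second-moment noise variance relative to your normalization, so the IME frontier point at ratio $\nu$ is $\alpha_1^{\mathrm{IME}}=4+2\nu$ rather than $4(1+\nu)$, and the inequality you must verify is the strictly stronger $r_d(\nu)<4+2\nu$, not $r_d(\nu)<4(1+\nu)$. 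The theorem survives: for $d\ge 2$ and $\nu>1/2$ the condition $2+2\nu+\tfrac{1}{2\nu}<4+2\nu$ is exactly $\nu>1/4$, which is the same condition the paper's proof arrives at. But your crude $d=1$ estimate $(3-\tau)^2<9$, $1+\nu+\nu\tau<1+2\nu$ gives only $r_1(\nu)<\tfrac98(1+2\nu)$, which does \emph{not} imply $r_1(\nu)<4+2\nu$ for large $\nu$; you would need the sharper observation that $r_1(\nu)\le 2+2\nu+\tfrac1{2\nu}$ (the $d=1$ feasible set is contained in the relaxed one where $\langle x,y\rangle$ is free). As written, your argument establishes dominance only over a handicapped IME whose second-moment noise is inflated by a factor of two.
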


The proof can be found in the appendix. Figure~\ref{fig:composition_vs_JME} visualizes this property graphically. 

\paragraph{Concatenate-and-split (CS).}
In the special case where the two noise shaping matrices are meant to be the same (\eg the trivial case where both are the identity matrix), it is possible to use a single privatization step for both moments.
For this, one forms a new observation vector, $\tilde x_i$ by concatenating $x_i$ with a vectorized (and potentially rescaled version of) $x_i \otimes x_i$. 
Then, one privatizes the resulting vector, taking into account that $\tilde x_i\in\mathbb{R}^{d(d+1)}$ has higher dimension and a larger norm than the original $x_i\in\mathbb{R}^{d}$. 
The result is split again into first and second order components, and the latter is unscaled. 
The result are private estimate of $x_t$ and $x_t\otimes x_t$, from which the two weighted  moment estimates can be constructed.
\Cref{alg:tauCS} in the appendix provides pseudocode 
for this \emph{concatenate-and-split (CS)} method.

\begin{wrapfigure}[16]{r}{0.4\textwidth}
    \centering\vspace{-1.2\baselineskip}
   \includegraphics[width=0.4\textwidth]{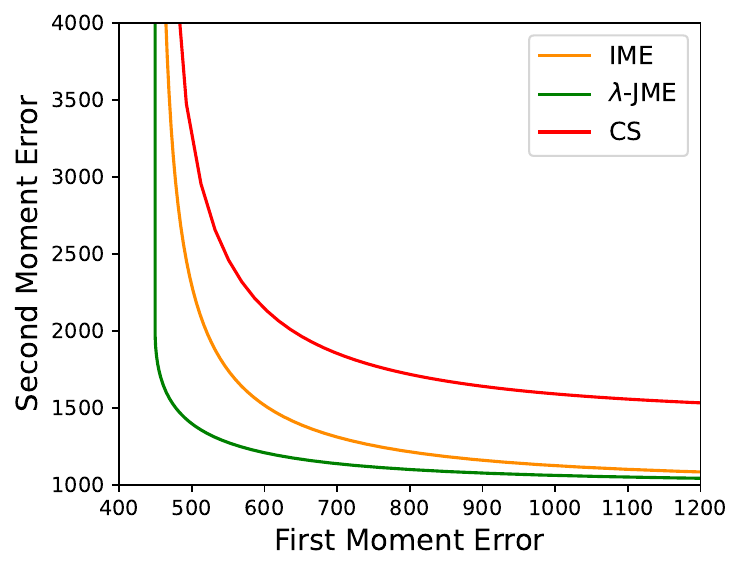}
    \caption{Approximation errors for the 1st and 2nd moments across three methods (see text for details): $\lambda$-\acronym with $\lambda$, IME with varying $\alpha$ parameters, and CS with varying $\tau$ parameter. $\lambda$-\acronym Pareto-dominates the other two methods.}
    \label{fig:composition_vs_JME}
\end{wrapfigure}

When applicable, CS is easy to implement and produces unbiased moment estimates.
However, like IME, it has an unfavorable privacy-accuracy trade-off curve compared to \acronym, as formalized in the following theorem and shown in the appendix.

\begin{restatable}[\acronym vs CS]{theorem}{lemmaclippingvsjointmomentestimation}\label{thm:JME_vs_CS}
For any $\epsilon,\delta>0$, $\lambda$-\acronym Pareto-dominates CS with 
respect to the approximation error for the first vs second moment estimates.
\end{restatable}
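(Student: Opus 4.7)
The plan is to pair each CS configuration parameterized by $\tau$ with the $\lambda$-\acronym configuration obtained by setting $\lambda=\tau^2$ and taking $C_1=C_2=C$, $A_1=A_2=A$ (the common setting that CS requires), and to show that both error components of $\lambda$-\acronym are then weakly smaller than those of CS. I would first generalize the utility calculation of \Cref{thm:main} to arbitrary $\lambda$, which is immediate from its proof sketch and yields
\begin{align*}
\mathrm{err}_1^{\text{JME}}(\lambda) &= \sigma_{\epsilon,\delta}\sqrt{d}\,\|A C^{-1}\|_{\Fr}\,s_\lambda^{\text{JME}},\\
\mathrm{err}_2^{\text{JME}}(\lambda) &= \sigma_{\epsilon,\delta}\,d\,\|A C^{-1}\|_{\Fr}\,s_\lambda^{\text{JME}}/\sqrt{\lambda},
\end{align*}
where $s_\lambda^{\text{JME}}$ is the joint sensitivity from \Cref{lem:jointsensitivity}, together with an analogous pair for CS with $s_\tau^{\text{CS}}$ and $\tau$ in place of $s_\lambda^{\text{JME}}$ and $\sqrt{\lambda}$. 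The choice $\lambda=\tau^2$ aligns the ratio $\mathrm{err}_2/\mathrm{err}_1$ between the two methods, reducing Pareto dominance to the scalar inequality $s_{\tau^2}^{\text{JME}}\le s_\tau^{\text{CS}}$.

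Next I would compute the two sides of this inequality. Because CS privatizes the concatenated vectors $\tilde x_i=(x_i,\tau\,\mathrm{vec}(x_ix_i^{\top}))$ through the matrix mechanism with shaping matrix $C$, and $\|\tilde x_i\|^2=\|x_i\|^2+\tau^2\|x_i\|^4\le\zeta^2+\tau^2\zeta^4$, the standard norm-based bound that CS uses gives $s_\tau^{\text{CS}}=2\|C\|_{1\to 2}\sqrt{\zeta^2+\tau^2\zeta^4}$. \Cref{lem:jointsensitivity} specialized to $\|C_1\|_{1\to 2}=\|C_2\|_{1\to 2}=\|C\|_{1\to 2}$ gives $(s_\lambda^{\text{JME}})^2=\|C\|_{1\to 2}^2\,\zeta^2\,r_d(\lambda\zeta^2)$. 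With $\nu:=\tau^2\zeta^2=\lambda\zeta^2$, the target inequality then reduces to the purely scalar bound $r_d(\nu)\le 4(1+\nu)$.

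I would verify $r_d(\nu)\le 4(1+\nu)$ by a case analysis on \eqref{eq:r_d}. For $d\ge 2$ and $\nu>\tfrac12$, $r_d(\nu)=2+2\nu+\tfrac{1}{2\nu}$, and the inequality rearranges to $1\le 4\nu(1+\nu)$, which holds at the threshold and monotonically above. In the remaining $d\ge 2$ regime and in the $d=1$ regime $\nu\le (11+5\sqrt{5})/8$, $r_d(\nu)=4\le 4(1+\nu)$ is trivial. For $d=1$ with $\nu>(11+5\sqrt{5})/8$, the crude estimates $(3-\tau^\prime)^2\le 9$ and $\nu\tau^\prime+1+\nu\le 2\nu+1$ (using $\tau^\prime=\sqrt{1-2/\nu}\in(0,1)$) already give $r_1(\nu)\le\tfrac{9(2\nu+1)}{8}\le 4(1+\nu)$. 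Chaining these bounds gives $s_{\tau^2}^{\text{JME}}\le s_\tau^{\text{CS}}$ and hence Pareto dominance.

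The principal obstacle is identifying \emph{why} CS loses relative to $\lambda$-\acronym: both methods formally confront the same joint first/second moment sensitivity, but CS bounds the sensitivity of the concatenated vector via the triangle inequality on its norm (yielding the $\sqrt{\zeta^2+\tau^2\zeta^4}$ factor), whereas $\lambda$-\acronym uses the tight joint supremum of \Cref{lem:jointsensitivity}. The scalar gap $4(1+\nu)-r_d(\nu)$ quantifies precisely this slack; the $d=1$ branch is the most delicate to verify because of the $\sqrt{1-2/\nu}$ factor, but the crude bounds above already suffice, so no heavy computation is required.
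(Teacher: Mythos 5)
Your proof is correct, and it takes a genuinely different route from the paper's. The paper works in the $\mu$-GDP formalism: for each CS configuration $\tau$, it picks the $\lambda$ that makes the \emph{first}-moment noise variances of the two methods equal (an implicit relation, $2+2\zeta^2\lambda+\tfrac{1}{2\zeta^2\lambda}=4+4\tau\zeta^2$ in its parametrization), shows this forces $\lambda>\tau$, and infers that the second-moment variance of JME is then strictly smaller. You instead make the choice of $\lambda$ \emph{explicit} (your $\lambda=\tau^2$, i.e.\ $\lambda=\tau_{\text{paper}}$), observe that this makes the ratio $\mathrm{err}_2/\mathrm{err}_1$ identical for the two methods, and so reduce Pareto dominance to the single scalar inequality $r_d(\nu)\le 4(1+\nu)$ between JME's tight joint sensitivity and CS's norm-based sensitivity bound. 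That inequality you then verify case by case from~\eqref{eq:r_d}.

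The two approaches buy different things. The paper's pairing matches the first-moment error exactly, which makes the Pareto picture visually transparent (equal abscissa, strictly smaller ordinate) but requires solving an implicit equation and implicitly restricting to the branch $\zeta^2\lambda\ge\tfrac12$; it also dismisses $d=1$ as ``trivial'' without writing it out. Your pairing sacrifices exact first-moment matching for a closed-form $\lambda$ and a cleaner reduction: the entire theorem collapses to one sensitivity inequality, all three branches of $r_d$ (including $d=1$) are handled uniformly, and you isolate the \emph{source} of CS's loss — it uses the ambient norm bound $4\zeta^2(1+\nu)$ for the concatenated vector rather than the tight joint supremum $r_d(\nu)\zeta^2$ that JME uses — with $4(1+\nu)-r_d(\nu)$ quantifying the slack. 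One could sharpen your conclusion slightly by noting that the inequality $r_d(\nu)<4(1+\nu)$ is actually strict for all $\nu>0$ (the only candidate equality point $4\nu(1+\nu)=1$ lies in the regime $\nu<\tfrac12$ where $r_d\equiv 4$, which is also strict), so your pairing in fact yields strict dominance in \emph{both} coordinates, not just weak dominance.
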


Figure~\ref{fig:composition_vs_JME} visualizes these cases (Theorems~\ref{lem:JMEvsIME} and~\ref{thm:JME_vs_CS}) graphically in an exemplary setting $(d=10, n = 100, C_1 = C_2 = I, A=E_1, \zeta = 1, \sigma_{\epsilon, \delta} = 1/2)$. 

\paragraph{Post-processing (PP).} 
Post-processing (PP) is another easy-to-use method for joint moment estimation. It has appeared in the literature~\cite{dp_covariance_wishart_dist}, at least in its naive form without the matrix factorization mechanism. 
For any $x_i$, it first computes a private estimate $\widehat{x_t}$ by 
adding sufficient noise to it. 
It then sets $\widehat{x_t\otimes x_t}:=\widehat{x_t}\otimes \widehat{x_t}$, 
which is automatically private by the postprocessing property of DP, 
and uses it to estimate the second moment matrix without additional privacy protection.
\Cref{alg:PP} in the appendix provides pseudocode. 

\begin{figure*}[t]
    \begin{center}
        \begin{subfigure}[c]{0.24\textwidth}
        \includegraphics[width=\linewidth]{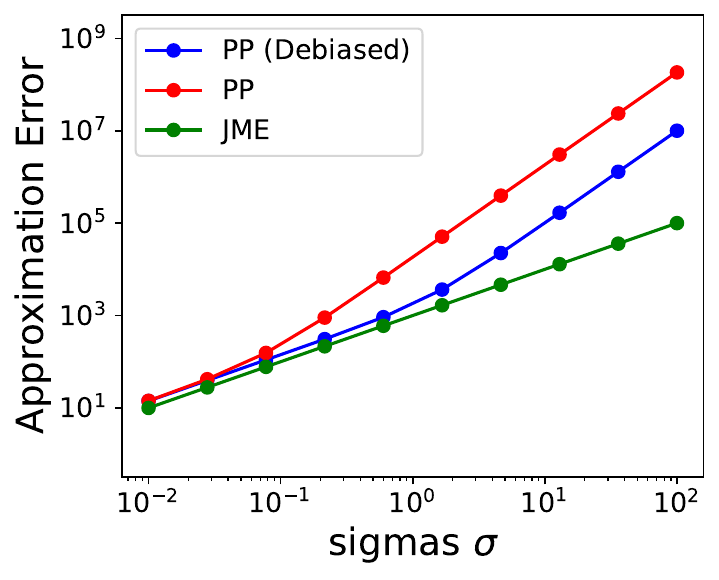}
        \subcaption{$d=1$, trivial}
        \end{subfigure}
        \begin{subfigure}[c]{0.24\textwidth}
        \includegraphics[width=\linewidth]{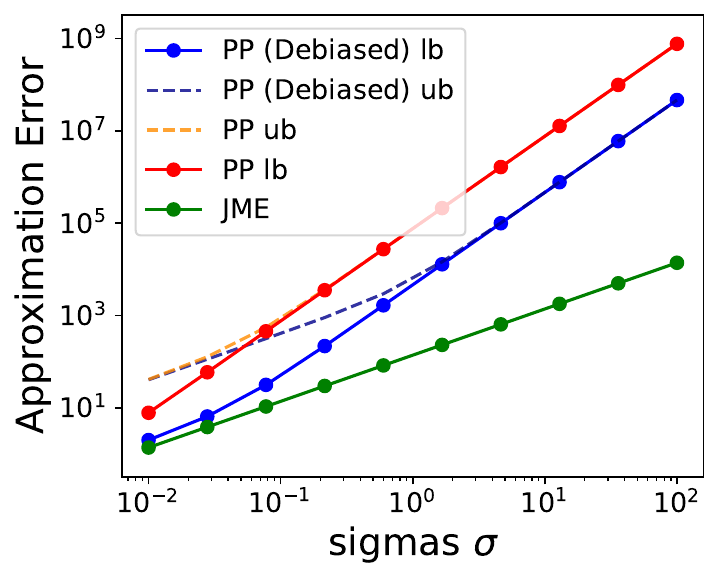}
        \subcaption{$d=1$, square root}
        \end{subfigure}
        \begin{subfigure}[c]{0.24\textwidth}
        \includegraphics[width=\linewidth]{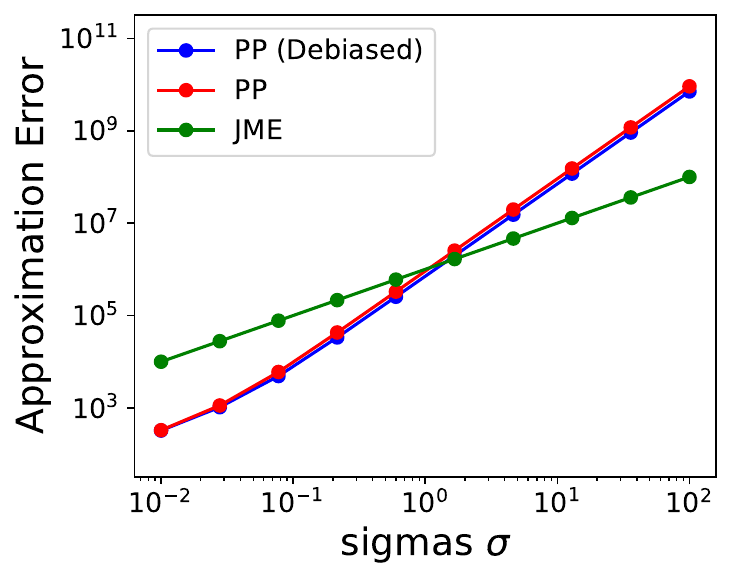}
        \subcaption{$d=10^3$, trivial}
        \end{subfigure}
        \begin{subfigure}[c]{0.24\textwidth}
        \includegraphics[width=\linewidth]{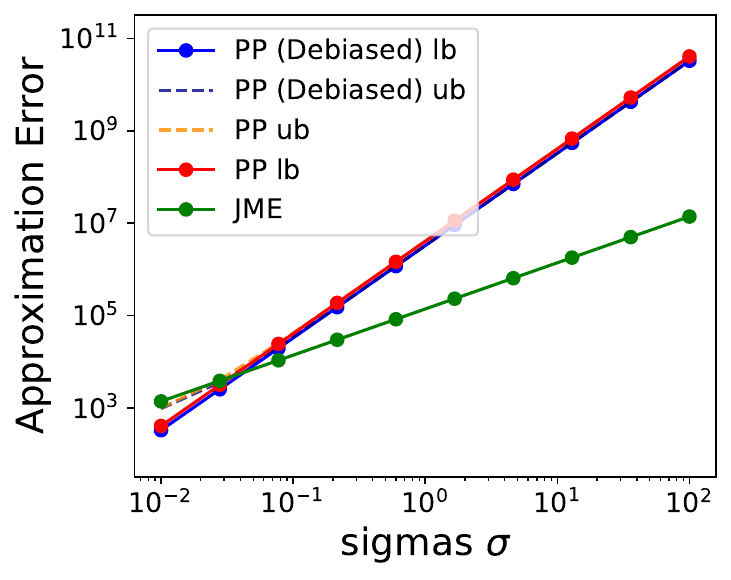}
        \subcaption{$d=10^3$, square root}
        \end{subfigure}
    \caption{Expected error of second moment estimation with JME versus PP with and without debiasing ($A=E_1$ (prefix sum), $n=1000$)
    In line with our analysis, for $d=1$ JME consistently achieves a higher quality than PP. 
    For $d=1000$, JME is preferable to PP in the high privacy regime. 
    Furthermore, the square root matrix factorization substantially improves the quality of both methods.}
    \label{fig:moments_plot}
    \end{center}
\end{figure*}

Like \acronym, PP has the \emph{privacy-for-free} property, i.e., the private estimate of the second moment does not reduce the quality of the first moment. 
In contrast to \acronym, however, PP's estimate of the second moments is not unbiased because the noise that was added to the first moment is squared during the process. 
It is possible to compensate for this by explicitly subtracting the bias, which can be computed analytically. The bias depends only on hyperparameters such as $n$, $d$, and $\sigma$, and not on the private data $X$ or the sampled noise used to protect it. 
Its exact expression can be found in equation~\eqref{eq:PP_bias_term}.

For given privacy parameters, PP and \acronym use the same noise strength toprivatize the first moment and therefore their estimates are of equal quality.
For the second moment, their characteristics differ between the low privacy (small noise variance) and the high privacy (large noise variance) regime. 
To allow for a quantitative comparison, we derive characterizations of the 
approximation quality of PP. 

\begin{restatable}[Expected Second Moment Error for PP]{lemma}{lemmaexpectedsecondmomentPostPr}\label{lem:PostPr_Moments}
Assume the same setting as for Theorem~\ref{thm:main}, except that 
we compute the estimates $\widehat{Y}$ and $\widehat{S}_{\text{PP}}$ with the PP method.
Let $Q = C_1^{-1} C_1^{-\top}$ and $E_Q = \diag(Q)\diag(Q)^{\top}$. 
Then, the expected approximation error of the second moment satisfies:
\begin{align}
&\sup_{X \in \mathcal{X}}\mathbb{E}\|S - \widehat{S}_{\text{PP}}\|^2_{\Fr} = 
16 d(d + 1)\sigma_{\epsilon, \delta}^4\zeta^4  \|C_1\|_{1 \to 2}^4 \tr(A_2^\top A_2 (Q \circ Q)) 
\label{eq:PPquality}
\\[-\baselineskip]
&\quad \quad +8(d + 1)\sigma_{\epsilon, \delta}^2\zeta^2  \|C_1\|_{1 \to 2}^2 \sup_{X \in \mathcal{X}} \tr((A_2^\top A_2 \circ Q)XX^T)
\overbrace{+ 16 d\sigma_{\epsilon, \delta}^4 \zeta^4 \|C_1\|_{1 \to 2}^4 \tr(A_2^\top A_2 E_Q)}^{\text{bias}}.
\notag\end{align}

For debiased PP, the term marked "bias" does not occur.
\end{restatable}

In order to get a better impression of the relation between PP and JME, we first study the special case of a trivial factorization. %

\begin{corollary} %
\label{cor:Comparison_Errors}
Assume that \acronym and PP use trivial factorizations, \ie, $C_1=C_2=I$. 
Then, \acronym's expected approximation error for the second moment is 
\begin{align}
\mathbb{E}\|S - \widehat{S}\|^2_{\Fr} &= 4c_d d^2 \sigma_{\epsilon, \delta}^2 \zeta^4 \| A_2 \|_{\Fr}^2\label{eq:JMEqualitytrivial}
\end{align}
and the corresponding value for PP is
\vspace{-1\baselineskip}
\begin{align*}
\sup_{X \in \mathcal{X}}\mathbb{E}\|S - \widehat{S}_{\text{PP}}\|^2_{\Fr} &= 
\big(16 d(d + 1) \zeta^4 \sigma_{\epsilon, \delta}^4 + 8(d + 1)\sigma_{\epsilon, \delta}^2\zeta^2\big)\| A_2 \|_{\Fr}^2
 \overbrace{+16 d \sigma_{\epsilon, \delta}^4 \zeta^4 \| \sum_{i} (A_2)_{[i,\cdot]}\|_2^2}^{\text{bias}},\label{eq:PPqualitytrivial}
\end{align*}
where $\sum_{i} (A_2)_{[i,\cdot]}$ is the row-wise summation of $A_2$. For debiased PP, the term marked "bias" does not occur.
\end{corollary}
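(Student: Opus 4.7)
The plan is to obtain both equalities by direct substitution into previously established results: equation~\eqref{eq:JMEquality2} from Theorem~\ref{thm:main} for the \acronym\ part, and Lemma~\ref{lem:PostPr_Moments} for the PP part, both specialized to the trivial factorization $C_1 = C_2 = I_n$.

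For the \acronym\ part, I would simply observe that $C_2 = I$ forces $\|C_2\|_{1\to 2} = 1$ and $A_2 C_2^{-1} = A_2$, so squaring both sides of~\eqref{eq:JMEquality2} reproduces~\eqref{eq:JMEqualitytrivial} in one line.

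For the PP part, I would evaluate each of the three terms of Lemma~\ref{lem:PostPr_Moments} under $C_1 = I$, which yields $\|C_1\|_{1\to 2} = 1$ and $Q = C_1^{-1} C_1^{-\top} = I$. The bias term simplifies using $\diag(Q) = \mathbf{1}$, so $E_Q = \mathbf{1}\mathbf{1}^\top$, and by cyclicity of the trace $\tr(A_2^\top A_2\,\mathbf{1}\mathbf{1}^\top) = \|A_2\mathbf{1}\|_2^2$, the squared norm of the row-wise sum of $A_2$. The $\sigma^4$-variance term uses $Q \circ Q = I \circ I = I$, giving $\tr(A_2^\top A_2) = \|A_2\|_{\Fr}^2$. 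The $\sigma^2$-variance term is the only one still containing a supremum; here the Hadamard product $A_2^\top A_2 \circ I$ equals the diagonal matrix $\diag\bigl((A_2^\top A_2)_{ii}\bigr)_{i=1}^n$, so the trace reduces to $\sum_i (A_2^\top A_2)_{ii}\|x_i\|_2^2$, whose supremum over $\mathcal{X}$ is clearly attained at $\|x_i\|_2 = \zeta$ for every $i$ and equals $\zeta^2 \|A_2\|_{\Fr}^2$. Collecting the three contributions reproduces~\eqref{eq:PPqualitytrivial}; the statement about debiased PP follows because debiasing removes exactly the bias term, by construction.

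I do not anticipate a real obstacle: the proof is essentially a sequence of routine simplifications once one observes that $Q = I$ collapses all $Q$-dependent quantities and that Hadamard-with-identity extracts the matrix diagonal. The only substantive step is recognizing that the worst-case $X$ in the supremum is attained by placing every data vector at the boundary of the $\zeta$-ball, which is immediate from the definition of $\mathcal{X}$.
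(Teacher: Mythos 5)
Your proposal matches the paper's proof, which likewise obtains~\eqref{eq:JMEqualitytrivial} by squaring \eqref{eq:JMEquality2} with $\|C_2\|_{1\to 2}=1$ and obtains~\eqref{eq:PPqualitytrivial} by specializing Lemma~\ref{lem:PostPr_Moments} to $C_1=I$; the paper simply states the one-line observation $\|C_1\|_{1\to 2}=\|C_2\|_{1\to 2}=1$ and omits the routine simplifications ($Q=I$, $E_Q=\mathbf{1}\mathbf{1}^\top$, $Q\circ Q=I$, Hadamard-with-identity extracting the diagonal) that you spell out explicitly and correctly. One small care point worth noting: the bias contribution you derive is $\|A_2\mathbf{1}\|_2^2=\sum_k\bigl(\sum_t (A_2)_{k,t}\bigr)^2$, the squared norm of the \emph{vector of row sums}, which is what the proof of Lemma~\ref{lem:PostPr_Moments} actually yields and should be read as the intended meaning of the corollary's notation $\sum_i (A_2)_{[i,\cdot]}$.
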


\emph{Proof.} The proofs follow directly from \Cref{thm:main} and 
\Cref{lem:PostPr_Moments} by observing that $\|C_1\|_{1\to 2}= \|C_2\|_{1\to 2}=1$. In dimension $d = 1$, $\zeta = 1$, JME has an error of $4c_1 \sigma_{\epsilon, \delta}^2 \| A_2 \|_{\Fr}^2$ versus $(32\sigma_{\epsilon, \delta}^4 + 16\sigma_{\epsilon, \delta}^2) \| A_2 \|_{\Fr}^2$ for (debiased) PP. 
Since $c_1 < 1$, \textbf{for one-dimensional data, \acronym's error is always lower than PP's} (see Figure \ref{fig:moments_plot} for visual confirmation). 
In high dimensions ($d \gg 1$), the terms quadratic in $d$ dominate, so the comparison is between $8\sigma_{\epsilon, \delta}^2 \| A_2 \|_{\Fr}^2$ 
for \acronym and $16 d^2 \sigma_{\epsilon, \delta}^4 \| A_2 \|_{\Fr}^2$ for PP.
Consequently, at least \textbf{for $\sigma_{\epsilon, \delta} \ge \frac{1}{\sqrt{2}}$, 
JME achieves privacy with less added noise than PP.}

In the regime of \emph{low privacy} ($\sigma_{\epsilon, \delta}\to 0$) in high dimension $(d\gg 1)$, PP can be expected to result in higher accuracy estimates than JME, because the terms involving $\sigma_{\epsilon, \delta}^4$ make only minor 
contributions.

For settings with general noise shaping matrix, we cannot provide an exact comparison between 
PP and JME, because the $\sup_X$-term in \cref{eq:PPquality} has no closed-form solution. 
Instead, we derive upper and lower bounds (\Cref{lem:supX_upperlowerbound}),
and provide a numeric comparison in \Cref{fig:moments_plot}.

\section{Applications}
\label{sec:applications}
To demonstrate potential uses of \acronym, we highlight two applications:
\emph{private Gaussian density estimation}, a classical probabilistic 
technique, and \emph{private Adam optimization}, which is  common in deep 
learning.
The proofs for all theoretical results can be found in the appendix.

\paragraph{Private Gaussian density estimation}
The maximum likelihood solutions to Gaussian density estimation from \iid data,
$x_1,\dots,x_t$ famously is $\widehat p(x) = \mathcal{N}(x;\;\mu_t,\Sigma_t)$, where 
$\mu_t=\frac{1}{t}\sum_{i=1}^t x_i$ is the \emph{sample mean} and 
$\Sigma_t=\frac{1}{t}\sum_{i=1}^t (x_i-\mu_t)(x_i-\mu_t)^\top$
is the \emph{sample covariance}.
Using the alternative identity $\widehat\Sigma_t=\frac{1}{t}\sum_{i=1}^t x_ix^\top_i-\mu_t\mu^{\top}_t$,
one sees that the task can indeed be solved in a continuous release setting and 
that only estimates of the first and second moments are required.

To do so privately, we use the \emph{averaging} workload matrix $V=(a^t_i)$ with $a^t_i=\frac{1}{t}$ for $1\leq i\leq t$ and $a^t_i=0$ otherwise, and we compute private estimates $\widehat\mu := \widehat{VX}$ (private mean) and $\widehat\Sigma=\widehat{V(X\face X)} - \widehat{(VX)}\face \widehat{(VX)}$ (private covariance) using JME. 

Note that $\widehat\mu$ is simply the first-moment vector as above. It is therefore
unbiased and the guarantees of Theorem~\ref{thm:main} holds for it.
However, $\widehat\Sigma$ is not an unbiased estimate because in its computation
the noise within $\widehat{VX}$ is squared. 
However, it is possible to characterize the bias analytically and subtract it if required. 
We provide the exact expression for the bias in equation~\eqref{eq:JME_cov_bias} in the Appendix, where the debiased version is referred to as \textbf{JME (Debiased)}.

The expected approximation error of $\widehat\mu$ is identical to the one 
in Theorem~\ref{thm:main} with $A=V$ and $C_1=C_2=\text{I}$.
The following Theorem establishes the approximation quality of the covariance estimate. In this and the following sections we denote $\sigma = 2 \sigma_{\epsilon, \delta}$, referring to the strength of the noise we add to the first moment for $\zeta = 1$ and trivial factorization.

\begin{restatable}[Private covariance matrix estimation with \acronym]{theorem}{theoremJMEwithbiascorrection}
\label{thm:JME_error_with_bias_correction}
Assume that all input vectors have norm at most $1$. %
Let $\widehat{\Sigma}$ be the results of the above construction, where privacy is 
obtained by running JME with noise strength $\sigma$ and debiasing. Then it holds:
\begin{align}
\sup_{X \in \mathcal{X}}\mathbb{E}\|\Sigma - \widehat{\Sigma}\|_{\Fr}^2 &= (c_d d^2 + 2d + 2)\sigma^2 H_{n, 1} + d(d+1)\sigma^4 H_{n, 2}, 
\end{align}
with $c_d$ as in \Cref{thm:main}, and $H_{n, m} := \sum\limits_{k = 1}^{n} \frac{1}{k^m}$.
\end{restatable}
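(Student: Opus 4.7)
The plan is to reduce to a single time index $t$, compute the expected squared Frobenius error of the debiased covariance estimate at that $t$, take the supremum over admissible inputs, and sum over $t$ to recover the harmonic sums $H_{n,1}$ and $H_{n,2}$. First I would expand $\widehat{\Sigma}_t = \widehat{S}_t - \widehat{\mu}_t\widehat{\mu}_t^\top$ using $\widehat{\mu}_t = \mu_t + \bar z_t$ and $\widehat{S}_t = \tfrac{1}{t}\sum_{i\leq t}x_i x_i^\top + \bar W_t$, where $\bar z_t\sim\mathcal{N}(0,(\sigma^2/t)I_d)$ is the averaged first-moment noise and the entries of $\bar W_t$ are i.i.d.\ Gaussians with variance $c_d\sigma^2/t$ (produced by the JME scaling $\lambda=1/c_d$ with trivial $C_1=C_2=I$). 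Debiasing subtracts $\mathbb{E}[\bar z_t\bar z_t^\top]=(\sigma^2/t)I_d$, so the residual decomposes as
\[
\Sigma_t - \widehat{\Sigma}_t^{\mathrm{deb}} \;=\; -\bar W_t \;+\; \bigl(\mu_t\bar z_t^\top + \bar z_t\mu_t^\top\bigr) \;+\; \bigl(\bar z_t\bar z_t^\top - \tfrac{\sigma^2}{t}I_d\bigr).
\]

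Next I would show that the three summands are pairwise orthogonal in $L^2$. Independence of the noise blocks $Z_1,Z_2$ in \Cref{alg:JME} makes $\bar W_t$ independent of $\bar z_t$, so every cross term containing $\bar W_t$ vanishes in expectation. The one remaining cross term pairs an odd power of $\bar z_t$ with a centred even power, and vanishes by Gaussian symmetry. I would then evaluate the three expected squared Frobenius norms individually: $\mathbb{E}\|\bar W_t\|_{\Fr}^2 = c_d d^2\sigma^2/t$ by summing $d^2$ entry-wise variances; $\mathbb{E}\|\mu_t\bar z_t^\top + \bar z_t\mu_t^\top\|_{\Fr}^2 = 2(d+1)\|\mu_t\|^2\sigma^2/t$ by expanding the square and using $\mathbb{E}\|\bar z_t\|^2=d\sigma^2/t$ together with $\mathbb{E}[(\mu_t^\top\bar z_t)^2]=\|\mu_t\|^2\sigma^2/t$; and $\mathbb{E}\|\bar z_t\bar z_t^\top - (\sigma^2/t)I_d\|_{\Fr}^2 = d(d+1)\sigma^4/t^2$ via the fourth-moment identity $\mathbb{E}\|g\|^4 = d(d+2)\tau^4$ for $g\sim\mathcal{N}(0,\tau^2 I_d)$.

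Finally I would take the supremum over $\mathcal{X}$: convexity gives $\|\mu_t\|\leq \tfrac{1}{t}\sum_{i\leq t}\|x_i\|\leq 1$, and equality holds simultaneously for all $t$ along any constant unit sequence $x_i\equiv u$, so the middle term's supremum is $2(d+1)\sigma^2/t$. Adding the three contributions gives the per-$t$ bound $(c_d d^2+2d+2)\sigma^2/t + d(d+1)\sigma^4/t^2$; summing over $t=1,\dots,n$ produces the harmonic sums $H_{n,1}$ and $H_{n,2}$ and the stated identity. The main obstacle I expect is the centred fourth-moment calculation for the outer-product term: one has to verify that $(\sigma^2/t)I_d$ coincides precisely with $\mathbb{E}[\bar z_t\bar z_t^\top]$ so the residual is genuinely mean-zero, and then execute the cancellation $(d^2+2d)\tau^4 - 2d\tau^4 + d\tau^4 = d(d+1)\tau^4$ cleanly. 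The rest is routine Gaussian second-moment bookkeeping.
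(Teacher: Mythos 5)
Your proposal is correct and reaches the stated identity, but it takes a genuinely different route from the paper. The paper works globally in matrix form: it expands the Frobenius error using the Face-Splitting product, reduces each piece to traces via Propositions~\ref{prop:normofVZ} and~\ref{prop:normofVZVX}, evaluates the resulting sums for the averaging workload $V$, and finally subtracts the bias term~\eqref{eq:JME_cov_bias}. You instead slice by time index, write the residual at step $t$ as the sum of three $L^2$-orthogonal pieces,
\[
\Sigma_t - \widehat{\Sigma}_t^{\mathrm{deb}} \;=\; -\bar W_t \;+\; \bigl(\mu_t\bar z_t^\top + \bar z_t\mu_t^\top\bigr) \;+\; \bigl(\bar z_t\bar z_t^\top - \tfrac{\sigma^2}{t}I_d\bigr),
\]
and compute each norm with elementary Gaussian moment identities. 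Your cross-term orthogonality (independence of $Z_1,Z_2$; odd-power symmetry) is sound, the per-$t$ magnitudes $c_d d^2\sigma^2/t$, $2(d+1)\|\mu_t\|^2\sigma^2/t$, and $d(d+1)\sigma^4/t^2$ are each correct, and the supremum $\|\mu_t\|\le 1$ is attained simultaneously for all $t$ at a constant unit sequence, so summing recovers exactly $(c_d d^2+2d+2)\sigma^2 H_{n,1} + d(d+1)\sigma^4 H_{n,2}$. The trade-off between the two approaches: your per-timestep decomposition is shorter, makes the debiasing self-evident (subtracting $\mathbb{E}[\bar z_t\bar z_t^\top]$ centres the quadratic term), and exposes which piece of the error each summand owns; the paper's trace machinery is heavier here but is reused verbatim for the $\sup_X$ upper/lower bounds and for general noise-shaping matrices $C_1,C_2$ (\Cref{lem:supX_upperlowerbound}, \Cref{thm:PostPr_with_bias_corr}), where a clean per-$t$ slicing is no longer available because $C_1^{-1}Z_1$ correlates across rows.
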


For comparison, we also analyze the case where the PP method is 
used to privatize the covariance matrix, $\widehat{\Sigma}_{\text{PP}} := V(\widehat{X} \face \widehat{X}) - (V\widehat{X} \face V\widehat{X})$, where $\widehat{X}$ are privatized entries of $X$.
Again, the resulting biased estimate can be explicitly debiased, see 
Appendix~\eqref{eq:PP_cov_bias} for the expression.

The following theorem states upper and lower bounds on the expected approximation error:
\begin{restatable}[Private covariance matrix estimation with PP]{theorem}{theoremPostPrwithbiascorrection}
\label{thm:PostPr_with_bias_corr}
Assume the same setting as for \Cref{thm:JME_error_with_bias_correction}. 
Let $\widehat{\Sigma}_{\text{PP}}$ be the result of the above construction, 
where privacy is obtained by running PP with noise strength $\sigma$ and debiasing. 
Let $S(n,d,\sigma):=d(d + 1)\sigma^4 H_{n, 1} -d(d + 1)\sigma^4 H_{n, 2} + 2(d + 1)\sigma^2 H_{n, 1}$. Then, for the expected error of %
the covariance matrix estimate it holds:
\begin{align}
     S(n,d,\sigma)- 2(d + 1)\sigma^2 H_{n, 3}\leq\sup_{X \in \mathcal{X}}\mathbb{E}\|\Sigma - \widehat{\Sigma}_{\text{PP}}\|_{\Fr}^2 \leq S(n,d,\sigma).
\end{align}

\end{restatable}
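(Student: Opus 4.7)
The plan is to trace the debiased PP deviation through a clean decomposition. Writing $\widehat{X}=X+Z$ with \iid $\mathcal{N}(0,\sigma^2)$ entries in $Z$, I first expand the raw estimate: $\widehat\Sigma_{\text{PP},t}-\Sigma_t=\frac{1}{t}\sum_{i\le t}(x_iz_i^\top+z_ix_i^\top)+\frac{1}{t}\sum_{i\le t}z_iz_i^\top-\mu_t\bar z_t^\top-\bar z_t\mu_t^\top-\bar z_t\bar z_t^\top$. Taking expectations gives the scalar bias $\frac{t-1}{t}\sigma^2 I_d$, which debiasing subtracts. Applying the two algebraic identities $\frac{1}{t}\sum x_iz_i^\top-\mu_t\bar z_t^\top=\frac{1}{t}\sum x_i(z_i-\bar z_t)^\top$ and $\frac{1}{t}\sum z_iz_i^\top-\bar z_t\bar z_t^\top=\frac{1}{t}\sum(z_i-\bar z_t)(z_i-\bar z_t)^\top$, the deviation collapses to
\[
\widehat\Sigma_{\text{PP},t}^{\text{deb}}-\Sigma_t \;=\; M_t+K_t,
\]
where $M_t:=\frac{1}{t}\sum_{i\le t}[x_i(z_i-\bar z_t)^\top+(z_i-\bar z_t)x_i^\top]$ is linear and $K_t:=\frac{1}{t}\sum_{i\le t}(z_i-\bar z_t)(z_i-\bar z_t)^\top-\frac{t-1}{t}\sigma^2 I_d$ is centered quadratic in the same centered-noise vector $(z_i-\bar z_t)_{i\le t}$. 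Their inner product is a third-order moment of a centered Gaussian, hence zero, so the per-step error splits as $\mathbb{E}\|\widehat\Sigma_{\text{PP},t}^{\text{deb}}-\Sigma_t\|_{\Fr}^2=\mathbb{E}\|M_t\|_{\Fr}^2+\mathbb{E}\|K_t\|_{\Fr}^2$.

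Next I compute the two pieces. The matrix $\sum_{i\le t}(z_i-\bar z_t)(z_i-\bar z_t)^\top$ follows $W_d(t-1,\sigma^2 I_d)$, whose per-entry variances are $\sigma^4(t-1)$ off-diagonal and $2\sigma^4(t-1)$ on-diagonal; this yields $\mathbb{E}\|K_t\|_{\Fr}^2=d(d+1)(t-1)\sigma^4/t^2$ and, summing over $t$, the data-free $d(d+1)\sigma^4(H_{n,1}-H_{n,2})$ term appearing in both stated bounds. For $M_t$, let $P:=I_t-\frac{1}{t}\mathbf 1\mathbf 1^\top$ be the centering projection and write $M_t=\frac{1}{t}(X_{1:t}^\top PZ_{1:t}+Z_{1:t}^\top PX_{1:t})$; a direct index computation against $\mathbb{E}[Z_{ij}Z_{kl}]=\sigma^2\delta_{ik}\delta_{jl}$ yields $\mathbb{E}\|M_t\|_{\Fr}^2=\frac{2(d+1)\sigma^2}{t^2}\|PX_{1:t}\|_{\Fr}^2=\frac{2(d+1)\sigma^2}{t^2}\sum_{i\le t}\|x_i-\mu_t\|^2$.

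Finally I bound $\sup_X f(X)$ where $f(X):=\sum_{t=1}^n t^{-2}\sum_{i\le t}\|x_i-\mu_t\|^2$. For the upper bound, $\sum_{i\le t}\|x_i-\mu_t\|^2\le\sum_{i\le t}\|x_i\|^2\le t$ under $\|x_i\|\le 1$, so $f(X)\le H_{n,1}$ and the upper bound $2(d+1)\sigma^2 H_{n,1}$ follows. For the matching lower bound I exhibit a pair-cancelling stream $x_{2k-1}=v_k,\;x_{2k}=-v_k$ with unit vectors $v_k$ (feasible for any $d\ge 1$, with $v_k=\pm 1$ in the $d=1$ case): at even $t$ the mean vanishes and the inner sum equals $t$; at odd $t$ one has $\|\mu_t\|^2=1/t^2$ and the inner sum equals $t-1/t$. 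This gives $f(X)=H_{n,1}-\sum_{t\le n,\;t\text{ odd}}t^{-3}\ge H_{n,1}-H_{n,3}$, which is exactly the claimed lower bound.

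The hard part will be spotting the $M_t+K_t$ decomposition up front: attacking the six surviving cross-correlations in the brute-force expansion is doable but messy, whereas rewriting both bias-corrected pieces as functionals of the centered noise $(z_i-\bar z_t)$ makes the Gaussian odd-moment argument immediate and isolates the only serious bookkeeping into the two pieces separately. A secondary subtlety is that $\sup_X f(X)$ admits no closed form, because the two constraints that would make $f(X)=H_{n,1}$ (namely $\|x_i\|=1$ for all $i$ and $\mu_t=0$ for all $t$, which would force $x_1=0$) cannot hold simultaneously; this is why the theorem states matching upper and lower bounds rather than an equality.
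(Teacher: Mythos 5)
Your proof is correct and arrives at exactly the same bounds, but by a genuinely different and considerably more structural route than the paper's. The paper expands $\widehat{\Sigma}^{b}_{\text{PP}}-\Sigma$ in Face-Splitting notation into a sum of terms $V(Z\face Z)$, $V(Z\face X)$, $V(X\face Z)$, $(VZ)\face(VX)$, $(VX)\face(VZ)$, $(VZ)\face(VZ)$, then brute-forces the six nonzero cross-correlations $A_1,\dots,A_6$ by index bookkeeping plus three more terms inherited from the JME derivation, and only afterwards subtracts the explicit bias. You instead pass immediately to the centered noise $z_i-\bar z_t$, using the two algebraic identities to collapse the debiased deviation into $M_t+K_t$ with $M_t$ linear and $K_t$ quadratic-centered in that same Gaussian; the odd-moment argument then kills the cross term in one line, $K_t$ is a centered Wishart deviation whose squared Frobenius norm is read off from the known entrywise variances of $W_d(t-1,\sigma^2 I_d)$, and $M_t$ reduces to $\|P X_{1:t}\|_{\Fr}^2$ via the standard identity $\mathbb{E}[Z^\top A Z]=\sigma^2\tr(A)\,I_d$ together with the mixed term evaluating to $\sigma^2\|PX\|_{\Fr}^2$. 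Your $f(X)$ is exactly the paper's $T_n(X)=\sum_k k^{-2}\bigl(\sum_{t\le k}\|x_t\|^2-\tfrac1k\|\sum_{t\le k}x_t\|^2\bigr)$, and your upper bound (drop $-t\|\mu_t\|^2\le 0$) and lower bound (the pair-cancelling stream, which is the paper's $x_i=(-1)^i e_1$ up to sign) are the same argument. What your route buys is that the orthogonality of $M_t$ and $K_t$ replaces a page of index computations with a parity observation, and the Wishart variance formulas replace the explicit fourth-moment Gaussian expansion of $\mathbb{E}(C^{-1}Z)^2_{t_1,j}(C^{-1}Z)^2_{t_2,j}$. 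What the paper's route buys is that the intermediate expressions are written for a general noise-shaping matrix $C_1$, which lets it share formulas (and the $\sup_X$ bounds of \Cref{lem:supX_upperlowerbound}) with the non-trivial-factorization variants elsewhere in the appendix; your centered-projection trick is tied to the averaging workload $V$, where $P=I_t-\tfrac1t\mathbf 1\mathbf 1^\top$ has a clean meaning.
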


\paragraph{Comparison.}
In the high privacy regime ($\sigma \gg 1$), the leading term for JME is $d(d+1)H_{n, 2} \sigma^4$, and for PP it is $d(d + 1)H_{n, 1} \sigma^4$. 
Given that $H_{n,1}=O(\log n)$, while $H_{n,m}< \pi^2/6$ for $m\geq 2$, 
the error introduced by PP is logarithmically worse than JME. 
Figure \ref{fig:covariance_error} shows a numerical plot that confirms this observation. 
Note that our results match the lower bounds established by G. Kamath 2020 \citep{dp_covariance_lower_bound}, 
who proved that private covariance estimation in the Frobenius norm requires $\Omega(d^2)$ samples.

\paragraph{Private Adam optimization}

The \emph{Adam} optimizer~\citep{kingma2014adam}, has become a de-facto standard 
for optimization in deep learning.
The defining property of Adam is its update rule, 
$\theta_{i} \leftarrow \theta_{i-1} - \alpha m_i / (\sqrt{v_i} +\epsilon)$, 
where $\alpha$ is a learning rate, and $m_i$ and $v_i$ are exponentially 
running averages of computed model gradients and 
componentwise squared model gradients, respectively.
In the context of our work,  these quantities correspond to a weighted 
first-moment vector and the diagonal of the weighted second-moment matrix. 

Previous attempts to make Adam differentially private relied on postprocessing~\citep{anil2021large,  li2022large}, 
potentially with debiasing~\citep{dp_adam_is_not_adam}, \ie 
they privatized the model gradients and derived the squared values from these.
We demonstrate that JME's approach of privatizing both 
quantities separately can be a competitive alternative.
Algorithm~\ref{alg:jme_adam} in the Appendix shows the pseudocode. 

It contains some modifications compared to the original JME. 
In particular, we adjust JME to only estimate and privatize the diagonal of 
the second moment matrix, which reduces the runtime and memory requirements.
Interestingly, as the next theorem shows, having to estimate only the diagonal
elements of the covariance matrix does not reduce the problem's sensitivity, 
so privatizing the estimates remains equally hard.
This implies that our previous analyses, including the relation to the baselines, 
remain valid. 

\begin{theorem}\label{thm:Adam_sens}
The sensitivity of JME, with the whole second-moment matrix $(C_1 X, \sqrt{\lambda} C_2 X \face X)$, and with just the diagonal terms $(C_1 X, \sqrt{\lambda} C_2 X\!\circ\!X)$, are identical.
\end{theorem}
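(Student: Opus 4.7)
The plan is to reduce both sensitivities to the same explicit optimization problem and verify that at the specific $\lambda$ chosen by \acronym they coincide.

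First, I would mirror the reduction used in the proof of \Cref{lem:jointsensitivity}: since neighboring $X$ and $X'$ differ in a single row (say row $i$), one has $C_j(X-X') = (C_j)_{[:,i]}(x_i-x'_i)^\top$ and analogously for $X\face X - X' \face X'$ or $X \circ X - X' \circ X'$. Under the norm-decreasing-columns assumption, the worst row is $i=1$, so with $\nu := \lambda\zeta^2 \|C_2\|_{1\to 2}^2 / \|C_1\|_{1\to 2}^2$ both sensitivities reduce to $\zeta^2 \|C_1\|_{1\to 2}^2$ times a supremum over unit-ball pairs $u, u' \in \mathbb{R}^d$ of the form
\[
\|u-u'\|^2 + \nu\,\|\phi(u) - \phi(u')\|^2,
\]
where $\phi(u) = u \otimes u$ in the full-second-moment version and $\phi(u) = u \circ u$ (entrywise square) in the diagonal-only version.

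Next, I would note that $u \circ u$ is exactly the diagonal of the rank-one matrix $uu^\top$, so entrywise $|(u \circ u - u'\circ u')_j| = |(u\otimes u - u'\otimes u')_{jj}|$, which yields $\|u \circ u - u'\circ u'\|^2 \leq \|u\otimes u - u'\otimes u'\|^2$. Thus the diagonal joint sensitivity is always upper bounded by the full one, and by \Cref{lem:jointsensitivity} the latter equals $\zeta^2 \|C_1\|_{1\to 2}^2 \cdot r_d(\nu)$. For \acronym's choice $\lambda = \|C_1\|_{1\to 2}^2/(c_d \zeta^2\|C_2\|_{1\to 2}^2)$ one has $\nu = 1/c_d$, which by the definition of $c_d$ sits on the boundary of the \emph{otherwise} branch of \eqref{eq:r_d}, so $r_d(1/c_d) = 4$. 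It thus remains to exhibit a feasible pair making the diagonal objective attain $4$. The witness $u = e_1$, $u' = -e_1$ works in any dimension: $\|u - u'\|^2 = 4$, while $u \circ u = u'\circ u' = e_1$, so the $\nu$-term vanishes. Consequently the diagonal sensitivity equals $\zeta^2 \|C_1\|_{1\to 2}^2 \cdot 4$, matching the full one; and in the degenerate case $d=1$ the identity $u\circ u = u\otimes u$ makes the two notions trivially equal.

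The main obstacle is only conceptual rather than computational: one must verify that \acronym's choice of $\lambda$ places $\nu$ in the flat region of $r_d$ where the maximum is dominated by the distance term. Outside that region the curvature of the outer-product map strictly exceeds that of its diagonal and the two sensitivities would no longer coincide, so the claim really hinges on the boundary calibration $\nu = 1/c_d$ built into \Cref{alg:JME}.
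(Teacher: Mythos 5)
Your argument is clean and correct in the regime it covers, but it establishes the theorem only for $\nu := \lambda\zeta^2\|C_2\|_{1\to 2}^2/\|C_1\|_{1\to 2}^2 \leq 1/c_d$, i.e.\ for $\lambda$ at or below Algorithm~\ref{alg:JME}'s default $\lambda^*$. There you sandwich the diagonal sensitivity between the entrywise upper bound $\|u\circ u-u'\circ u'\|^2 \leq \|u\otimes u - u'\otimes u'\|^2$ (giving $r_d^{\text{diag}}(\nu)\leq r_d(\nu)=4$) and the witness $u=e_1$, $u'=-e_1$ (giving $r_d^{\text{diag}}(\nu)\geq 4$). That works because at $\nu\leq 1/c_d$ the value is dominated by the $\|u-u'\|^2$ term alone and the quadratic term vanishes at the witness.

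The paper, however, proves the stronger identity $r_d^{\text{diag}}(\lambda)=r_d(\lambda)$ for \emph{all} $\lambda>0$, which is what it needs to transfer the earlier $\lambda$-dependent comparisons (\Cref{lem:JMEvsIME}, \Cref{thm:JME_vs_CS}) to the diagonal-only Adam setting and is what the remark ``does not follow from \Cref{lem:jointsensitivity} and is significantly more intricate'' is alluding to. For $\nu > 1/c_d$ your sandwich opens up: $r_d(\nu) = 2+2\nu+\frac{1}{2\nu} > 4$, while $e_1,-e_1$ still only attains $4$, and the entrywise inequality is generally strict. The paper closes this gap with an explicit case analysis computing $r_2^{\text{diag}}(\lambda)$ (\Cref{lem:d=2}) followed by a dimension-reduction lemma (\Cref{lem:dim_red}) showing $r_d^{\text{diag}}=r_2^{\text{diag}}$ for $d\geq 2$; the nontrivial fact is that the optimum of the diagonal problem at $\nu>1/2$ sits at a different point, roughly $x=(x_1^*,\sqrt{1-(x_1^*)^2})$, $y=-(\sqrt{1-(x_1^*)^2},x_1^*)$, where the diagonal and full objectives happen to coincide and both attain $2+2\nu+\tfrac{1}{2\nu}$.

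Your closing claim that ``outside that region \ldots the two sensitivities would no longer coincide'' is therefore the opposite of what the paper proves: they do coincide for all $\lambda$, and that is precisely what makes the theorem substantive rather than a boundary coincidence. So the gap is not merely a smaller scope --- the proposal asserts (without proof) that the general statement is false, whereas the paper shows it is true.
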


The proof can be found in the \cref{sec:adam_sens_proof}. It does not follow from \Cref{lem:jointsensitivity} and is significantly more intricate.

As in the previous cases, JME's \emph{for free} property ensures that its 
expected approximation error of the first moment is identical to PP.
The following theorem establishes the expected approximation errors of 
both methods for the computed second moments (\ie the diagonal of the second moment matrix):

\begin{lemma}[Comparison of JME and PP for DP-Adam]
\label{lem:Comparison_Errors}
Let $D=(v_1,\dots,v_n)$ be the matrix of second moment estimates of the Adam algorithm.
Denote by $\widehat D=(\widehat v_1,\dots,\widehat v_n)$ the private estimate of $D$ computed by Algorithm~\ref{alg:jme_adam} with trivial factorization and noise strength $\sigma$, 
and let $\widehat D_{\text{PP}}$ be the analog quantity computed by DP-Adam with postprocessing.
Then it holds: %
\begin{align}
\mathbb{E}_{Z} \|\widehat D-D\|^2 &= 2d \sigma^2 \cdot \| A_2 \|_{\Fr}^2,\\[-\baselineskip]
\sup_{X \in \mathcal{X}}\mathbb{E}_{Z} \|\widehat D_{\text{PP}}-D\|^2 
&= (2d \sigma^4 + 4 \sigma^2) \cdot \| A_2 \|_{\Fr}^2 
\ \overbrace{+ d \sigma^4 \cdot \big\| \sum_{i} (A_2)_i \big\|_2^2}^{\text{bias}},
\end{align}
where $A_2$ is the workload matrix obtained from the coefficients 
of Adam's exponentially weighted averaging operations.
The term marked "bias" disappears if PP is debiased.
\end{lemma}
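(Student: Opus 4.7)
My plan is to handle the JME and PP bounds separately. The JME bound reduces cleanly to the sensitivity statement of \Cref{thm:Adam_sens} together with a short Frobenius-norm calculation; the PP bound demands an explicit elementwise expansion of the noisy squared gradient and careful bookkeeping of the fourth moments of a Gaussian.

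For JME, \Cref{thm:Adam_sens} guarantees that using only the diagonal $X\!\circ\!X$ in place of the full $X\face X$ leaves the joint sensitivity unchanged, so \mbox{Algorithm~\ref{alg:jme_adam}} can reuse exactly the scaling $\lambda=\|C_1\|_{1\to 2}^2/(c_d\zeta^2\|C_2\|_{1\to 2}^2)$ and the joint sensitivity $s$ of \Cref{alg:JME}. Specialising to trivial factorisation $C_1=C_2=I$ and $\zeta=1$, and interpreting ``noise strength $\sigma$'' as the per-coordinate standard deviation of the Gaussian noise added to $x_t$, the same argument as in the proof sketch of \Cref{thm:main} yields $\widehat D-D=\lambda^{-1/2}A_2 Z_2$ with $Z_2\in\mathbb{R}^{n\times d}$ having i.i.d.\ $\mathcal{N}(0,\sigma^2)$ entries. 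A direct Frobenius-norm computation then gives $\mathbb{E}_Z\|\widehat D-D\|^2=\lambda^{-1}d\sigma^2\|A_2\|_{\Fr}^2=c_d\,d\,\sigma^2\|A_2\|_{\Fr}^2$, which equals $2d\sigma^2\|A_2\|_{\Fr}^2$ for $d\ge 2$.

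For PP I would write $\widehat{x_t}=x_t+z_t$ with $z_t\sim\mathcal{N}(0,\sigma^2 I_d)$ and expand $\widehat{x_t}\!\circ\!\widehat{x_t}-x_t\!\circ\! x_t=2\,x_t\!\circ\! z_t+z_t\!\circ\! z_t$, so that $\widehat D_{\text{PP}}-D=2\,A_2(X\!\circ\! Z)+A_2(Z\!\circ\! Z)$. Expanding $\mathbb{E}_Z\|\widehat D_{\text{PP}}-D\|_{\Fr}^2$ yields three summands. The cross term vanishes in expectation because every mixed moment it contains involves $\mathbb{E}[z_{i,j}^{k}]$ with $k\in\{1,3\}$ odd, which is zero for a centred Gaussian. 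The linear-in-$Z$ term evaluates entrywise to $4\sigma^2\sum_i\|(A_2)_{\cdot,i}\|_2^2\|x_i\|_2^2$; under $\|x_i\|_2\le 1$ the supremum is $4\sigma^2\|A_2\|_{\Fr}^2$, attained at $\|x_i\|_2=1$ for every $i$. The quadratic-in-$Z$ term is the crux: using $\mathbb{E}[z_{i,j}^4]=3\sigma^4$ for coincident indices and $\mathbb{E}[z_{i_1,j}^2]\mathbb{E}[z_{i_2,j}^2]=\sigma^4$ for $i_1\neq i_2$, one decomposes it into a variance contribution $2d\sigma^4\|A_2\|_{\Fr}^2$ and a squared-mean contribution $d\sigma^4\sum_t(\sum_i(A_2)_{t,i})^2=d\sigma^4\|\sum_i(A_2)_i\|_2^2$. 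Summing the three contributions gives the stated PP bound.

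Finally, the debiased variant follows by subtracting the deterministic mean $\mathbb{E}_Z[A_2(Z\!\circ\! Z)]=\sigma^2 A_2 J$ (with $J$ the $n\times d$ all-ones matrix) from $\widehat D_{\text{PP}}$, which replaces $Z\!\circ\! Z$ by the centred matrix $W:=Z\!\circ\! Z-\sigma^2 J$. Since $\mathbb{E}[W_{i,j}^2]=2\sigma^4$ and $\mathbb{E}[W_{i_1,j}W_{i_2,j}]=0$ for $i_1\neq i_2$, the $d\sigma^4\|\sum_i(A_2)_i\|_2^2$ summand disappears while the $2d\sigma^4\|A_2\|_{\Fr}^2$ variance contribution and the $4\sigma^2\|A_2\|_{\Fr}^2$ linear contribution are preserved, producing $(2d\sigma^4+4\sigma^2)\|A_2\|_{\Fr}^2$. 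The main obstacle throughout is the careful accounting in the PP quadratic term, where the Gaussian fourth moment $3\sigma^4$ must be split into a variance part ($2\sigma^4$) and a squared-mean part ($\sigma^4$), the latter producing exactly the bias factor; everything else follows from linearity of expectation, independence of the entries of $Z$, and the invocation of \Cref{thm:Adam_sens}.
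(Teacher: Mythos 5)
Your proof is correct and follows essentially the same path as the paper. The paper proves the JME part by invoking Theorem~\ref{thm:main} (together with Theorem~\ref{thm:Adam_sens} to justify reusing $\lambda$ and $s$ for the diagonal-only estimate), and proves the PP part in the appendix (Lemma~\ref{lem:PostPr_DP_Adam}) via the same decomposition $\widehat D_{\text{PP}}-D = 2A_2(X\circ Z)+A_2(Z\circ Z)$, vanishing cross term by odd Gaussian moments, and fourth-moment accounting of the quadratic term; you simply specialize to $C_1=I$ from the outset whereas the paper's appendix lemma carries a general $C_1$ and specializes at the end, which is only a cosmetic difference.
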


The proof of the first part follows directly from \cref{thm:main}. The error for PP is computed separately in \Cref{lem:PostPr_DP_Adam} in the appendix.

This lemma shows that as long as $\sigma \geq 1$, the error introduced by JME is strictly lower than that of PP (\ie classic DP-Adam~\citep{li2022large}) and even the debiased version of PP (\ie DP-AdamBC~\citep{dp_adam_is_not_adam}). \Cref{fig:DP-Adam_error} illustrates the relation 
in a prototypical case.

\section{Experiments}
Our main contributions in this work are both algorithmic and theoretical. Specifically, 
\acronym is the general purpose technique for moment estimation, which is promising for some scenarios and less promising for others. 
However, it is also a \emph{practical} algorithm that can be easily implemented
and integrated into standard machine learning pipelines. 
To demonstrate this, we report on the experimental result of using \Cref{alg:JME} and \Cref{alg:jme_adam} in two exemplary settings, reflecting the application scenarios described above.

\noindent \textbf{Private Gaussian density estimation.}
From a given data distribution, $p(x)=\mathcal{N}(\mu,\Sigma)$, we sample $n=200$ data 
points and use either \acronym or PP to form a private estimates, $\widehat p_t(x)=\mathcal{N}(\widehat\mu_t,\widehat\Sigma_t)$, at each step $t=1,\dots,n$, of the continuous release process.
To ensure positive definiteness, we symmetrize \acronym's estimated covariance 
matrices and project them onto the positive definite cone. As a postprocessing 
operation, this does not affect their privacy.

\Cref{fig:covariance_over_steps} shows the results, with the approximation quality,
measured by the Kullback-Leibler (KL) divergence, $\text{KL}(\widehat p_t\|p)$, at each 
step, $t$, as average and standard deviation over 1000 runs, with 
$\mu \sim \mathcal{N}(0, \frac{1}{2}I_d)$ and $\Sigma \sim W_d\left(\frac{1}{2}I_d, 2d\right)$
(\ie a \emph{Wishart distribution}) in each case.
One can see that in the high-privacy regime (here: $\sigma = 2$), on average, 
\acronym achieves a better estimate of the true density than PP, with and without 
debiasing.

\begin{figure}[t]
    \centering
    \begin{subfigure}[c]{.48\linewidth}\scriptsize
        \centering \includegraphics[width=0.8\linewidth]{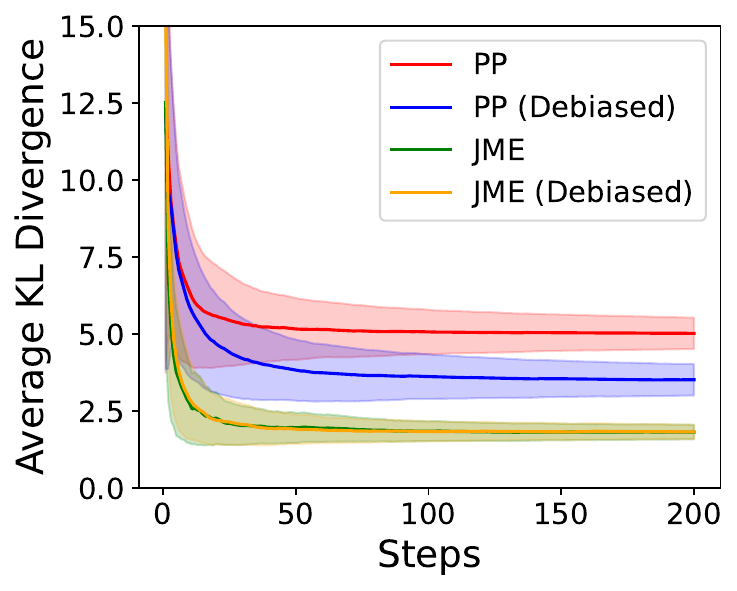}
        \subcaption{$d = 5, n =100, \sigma =2$} 
    \end{subfigure}
    \begin{subfigure}[c]{0.48\linewidth}
        \centering\includegraphics[width=0.8\linewidth]{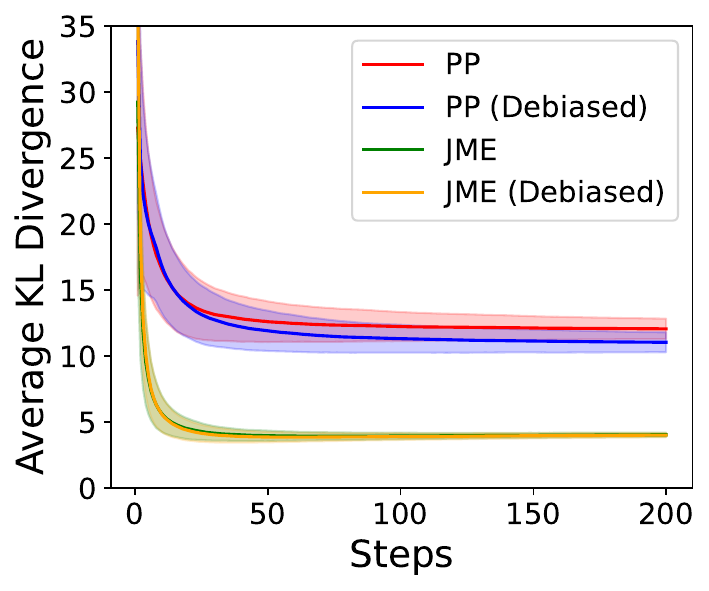}
        \subcaption{$d=10, n =200, \sigma=4$}
    \end{subfigure}
    \caption{Approximation quality of JME and PP for private Gaussian density estimation as average and standard deviation over 1000 runs. In the high-privacy regime ($\sigma = 2$ and $\sigma = 4$), \acronym achieves lower KL divergence than PP after  $\approx 10$ samples.}
    \label{fig:covariance_over_steps}
\end{figure}

\begin{figure}[h]
    \centering
    \begin{subfigure}[c]{.48\linewidth}
    \includegraphics[width=0.8\linewidth]{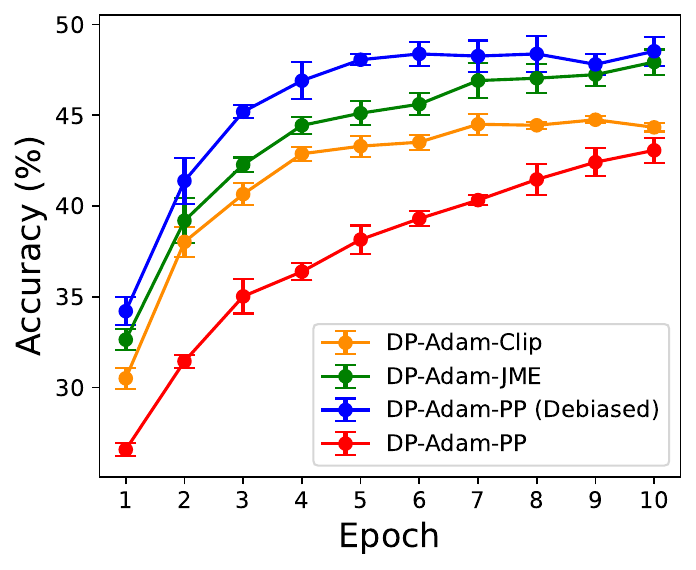}
    \caption{batchsize $B\!=\!256$, $\varepsilon \approx 1.7$}
    \end{subfigure}
    \begin{subfigure}[c]{.48\linewidth}\scriptsize
    \includegraphics[width=0.8\linewidth]{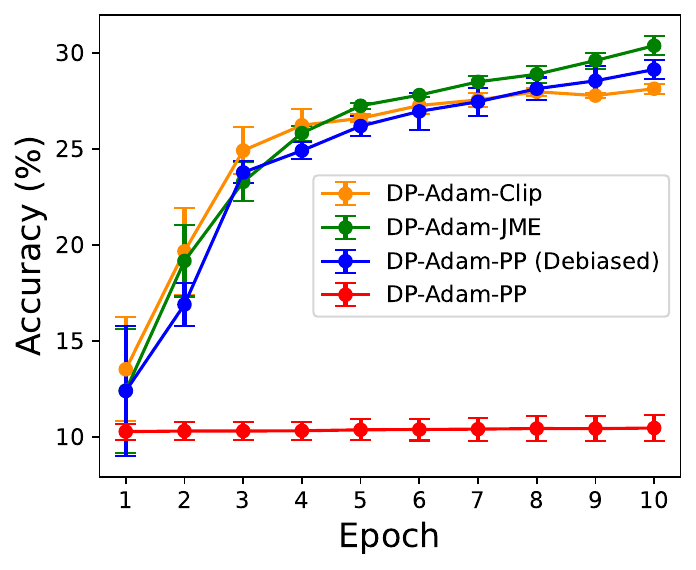}
    \caption{batchsize $B\!=\!1$, $\varepsilon \approx 0.16$}
    \end{subfigure}
    \caption{Results of private Adam training on CIFAR-10 experiments comparing four methods: DP-Adam based PP (with and without debiasing), \acronym, and joint clipping, over 10 epochs. 
    Left: for low to medium privacy, JME outperforms vanilla DP-Adam but is slightly worse than the debiased version. Right: for high privacy, \acronym is slightly better than the debiased version, whereas vanilla DP-Adam cannot handle the necessarily amounts of noise.}
    \label{fig:cifar_10}
\end{figure}

\noindent \textbf{Private model training with Adam.}
We train a convolutional network on the CIFAR-10 dataset with DP-Adam, 
which is privatized either with JME or PP.
Because gradients could have arbitrarily large norm, we apply gradient clipping
to a model-selected threshold in both methods. 
In addition, we include a heuristic baseline that uses the concatenate-and-split 
techniques in which not the norm of the gradient vector but the norm of the 
concatenation vector is clipped. 
While inferior to \acronym in the worst case, this might be beneficial for real data,
so we include it in the experimental evalution.

%
The results in Figure \ref{fig:cifar_10} confirm our expectations:
in a setting with small noise variance (large batchsize, low privacy), 
DP-Adam-JME achieved better results than standard DP-Adam, but worse 
than the debiased variant of DP-Adam.
If the noise variance is large (small batchsize, high privacy), DP-Adam-JME slightly improves over the other methods.
For detailed accuracy results, see Table \ref{tab:accuracy_results} 
in the appendix with hyperparameters provided in Table \ref{tab:hyperparameters}.
%

\section{Related works}\label{sec:relatedwork}
The problem of differentially private moments (and the related problem of covariance estimation) has a rich history of development, with optimal results known in the central model of privacy \citep{smith2011privacy} as well as the \emph{local model of privacy}~\citep{duchi2013local}, and also in the worst-case setting. Sheffet \citep{dp_covariance_wishart_dist} proposes three differentially private algorithms for approximating the second-moment matrix, each ensuring positive definiteness. The related setting of covariance estimation has been studied in the worst-case setting when the data comes from a bounded $\ell_2$ ball by several works for approximate differential privacy~\citep{achlioptas2007fast, blum2005practical, dwork2014analyze, mangoubi2022re,mangoubi2023private,upadhyay2018price} with the works of Amin \etal \citep{dp_covariance} and Kapralov \& Talwar \citep{kapralov2013differentially} presenting a pure differentially private algorithm. 
Covariance estimation is also used as a subroutine in mean-estimation work under various distributional assumptions~\citep{kamath2024broader}. However, none of these approaches are directly applicable to the continual release model 
and they offer improvements over the Gaussian mechanism only in a very high privacy regime. Precise sensitivity analysis, as a technique to improve differential privacy guarantees, has also been intensively used in the literature~\citep{lebeda2024correlated, wilkins2024exact, joseph2024privately, lebeda2025better}. The matrix mechanism has also gained a lot of attention recently due to its application of continually releasing prefix sums in private online optimization~\cite{BandedMatrix,Kairouz,efficient_streaming, henzinger2025improved, fichtenberger2023constant, henzinger2023almost, kalinin2025back, henzinger2025binned}.

\section{Summary and Discussion}
In this work we studied the problem of jointly estimating the first and second moment of a continuous data stream in a differentially private way.
We presented the \method (\acronym) method, which solves the problem 
by exploiting the recently proposed matrix mechanism with carefully 
tuned noise level. 
As a result, \acronym produces unbiased estimates of both moments
while requiring less noise to be added than baseline methods, at 
least in the high privacy regime. 
We applied \acronym to private Gaussian density estimation and model 
training with Adam, demonstrating improved performance in high-privacy 
regimes both theoretically and practically. 

Despite the promising results, several open question remain. 
In particular, we would like to explore if postprocessing is indeed 
the optimal strategy in a low-privacy regime, or if a better 
privacy-utility trade-off is still possible.
Furthermore, we plan to explore the possibility of problem-specific 
factorizations, which could be fused with the proposed method. 

\section*{Acknowledgments}
We thank Monika Henzinger for her valuable feedback and insightful discussions on earlier versions of this draft. We are also grateful to Mher Safaryan for his contributions to discussions on DP-Adam. Additionally, we thank Ryan McKenna for suggesting Joint Clipping as a baseline.
Jalaj Upadhyay’s research was funded by the Rutgers Decanal Grant no. 302918 and an unrestricted gift from Google.
A part of this work was done while visiting the Institute of
Science and Technology Austria (ISTA). Christoph Lampert’s research was partially funded by the Austrian Science
Fund (FWF) 10.55776/COE12.

\bibliography{arxiv_version}
\bibliographystyle{abbrv}

\clearpage
\appendix

\section{Proofs of the main theorems}\label{sec:proofs}
In this section, we provide proofs of \Cref{thm:mainprivacy}
and \Cref{thm:main} from \Cref{sec:method} as well as \Cref{thm:Adam_sens} from \Cref{sec:applications}.

\subsection{Proof of Theorem \ref{thm:mainprivacy} and \cref{lem:jointsensitivity}}

The privacy of Algorithm~\ref{alg:JME} follows from the 
properties of matrix mechanism \citep{li2015matrix}, with 
a precise estimate of the \emph{sensitivity} of the joint 
estimation process that we will introduce and discuss later
in this section.

By means of the matrix factorization mechanism with $A_1=B_1C_1$
and $A_2=B_2C_2$, we write the joint moment estimate as, 
\begin{align}
\begingroup\setlength{\arraycolsep}{2pt} 
    \begin{pmatrix}Y& \boldsymbol{0}\\ \boldsymbol{0}&S\end{pmatrix}
\endgroup
    &=
\begingroup\setlength{\arraycolsep}{1pt}
\begin{pmatrix}B_1& \boldsymbol{0} \\ \boldsymbol{0} & \lambda^{-\frac{1}{2}} B_2\end{pmatrix}
\endgroup
\begingroup\setlength{\arraycolsep}{1pt}
    \begin{pmatrix}C_1 X& \boldsymbol{0} \\ \boldsymbol{0} & \lambda^{\frac{1}{2}} C_2(X\face X)\end{pmatrix}
\endgroup
\label{eq:jointmoments}
\end{align}
where $\lambda>0$ is an arbitrary trade-off parameter that 
we will adjust optimally later. 
To make the estimate private, we privatize the rightmost matrix
in \eqref{eq:jointmoments}, which contains the data, 
by adding suitably scaled Gaussian noise. %
The subsequent matrix multiplication then acts on private 
data, so its result is also private. 

It remains to show that the noise level specified in Algorithm~\ref{alg:JME} suffices to guarantee $(\epsilon,\delta)$-privacy.

For this, we denote by $\mathcal{X}=\{(x_1,\dots,x_n)\;:\;\max_i \|x_i\|_2\leq \zeta \wedge x_i \in\mathbb{R}^{d}\}$ 
be the set of \emph{input sequences} where each $d$-dimensional vector has bounded $\ell_2$ norm.
The \emph{sensitivity} of a computation is the maximal 
amount by which the result differs between two input 
sequences $X,X'\in\mathcal{X}$, which are identical 
except for a single data vector at an arbitrary 
index (denoted by $X\sim X'$). 

For \acronym, the relevant sensitivity is the 
one of the matrix we want to privatize, \ie (in the squared form)
\begin{align}
    &\operatorname{sens}_{\lambda}^2(C_1, C_2)
    \!=\!\! \sup_{X \sim X'} 
    \left\|
    \begingroup
    \setlength{\arraycolsep}{-1pt} 
    \begin{pmatrix}
        C_1 & \textbf{0}\\
        \textbf{0} & \sqrt{\lambda} C_2\\
    \end{pmatrix}\endgroup \begin{pmatrix}
        X - X'\\
        X\face X\!-\!X'\face X'
    \end{pmatrix}\right\|_{\Fr}^2   
    \notag\\
    &\quad =\sup_{X \sim X'}\!\!\! \left[\|C_1(X\!-\!X')\|_{\Fr}^2 +  \lambda \|C_2
    (X\face X\!-\!X'\face X')\|_{\Fr}^2\right]
    \label{eq:sens_complete}
\end{align}
Due to the linearity of the operations and the condition imposed by $X\sim X'$, 
most terms in \eqref{eq:sens_complete} cancel out and the value 
of $\operatorname{sens}^2_{\lambda}(C_1,C_2)$ simplifies into 
the solution of the following optimization problem.

\begin{problem}[Sensitivity for Joint Moment Estimation]\label{prb:cov_sens}
\begin{equation}
   \max_{i=1, \dots, n} \max_{\substack{\|x\|_2 \le \zeta\\\|y\|_2 \le \zeta}} \alpha^2_i \|x - y\|_2^2 + \lambda \beta^2_i \|x \otimes x - y \otimes y\|_2^2,\label{eq:cov_sens}
\end{equation}
where $\alpha^2_i = \|(C_1)_i\|_2^2$ and $\beta^2_i = \|(C_2)_i\|_2^2$ are the column norms of the matrices $C_1$ and $C_2$, respectively. 
\end{problem}

To study Problem~\ref{prb:cov_sens}, we introduce as an intermediate object the formulation of \eqref{eq:cov_sens} in the special case of $\zeta=1$ and $\alpha_i=\beta_i=1$ for $i=1,\dots,n$, treated as a function of $\lambda$:
\begin{equation}\label{def:r_d_appendix}
    r_d(\lambda) := \max_{\substack{x,y\in\mathbb{R}^d\\\|x\|_2 \le 1\\\|y\|_2 \le 1}} \|x - y\|^2_2 + \lambda \|x \otimes x - y \otimes y\|_{\Fr}^2.
\end{equation}

The following theorems provide specific values for $r_d$ in 
the special case of $d=1$ (Theorem~\ref{thm:coef_sup}) and 
for general $d\geq 2$ (Theorem~\ref{thm:cov_matrix_sens}):

\begin{restatable}[$d=1$]{theorem}{theoremforkequalone}
\label{thm:coef_sup}
    For any $\lambda > 0$, it holds:
    \begin{equation}
        \label{eq:r_k=1}
        r_1(\lambda)    
        \!=\!\begin{cases}
            4  & \text{if } \lambda \le \frac{11 + 5\sqrt{5}}{8}, \\
            \frac{1}{8} (3- \tau)^2 (\lambda \tau + 1 + \lambda)   & \text{otherwise.} 
        \end{cases}
    \end{equation}
where $\tau = \sqrt{1 - 2 /\lambda}$. Moreover, the function $r_1(\lambda)$ is a continuous function with respect to the parameter $\lambda > 0$. 
\end{restatable}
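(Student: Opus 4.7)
The plan is to reduce the two-variable constrained maximization to a single-variable calculus problem, and then identify the bifurcation value of $\lambda$ by solving an explicit cubic. The key algebraic observation is that $x^2 - y^2 = (x-y)(x+y)$, so the objective factors as $F(x,y) = (x-y)^2\bigl[1 + \lambda(x+y)^2\bigr]$; this separates the ``difference'' and ``sum'' modes and is the pivot for the rest of the argument.

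First, I would rule out the interior. Computing $\nabla F$, one finds $\partial_x F - \partial_y F = 4(x-y)[1 + \lambda(x+y)^2]$, so any interior critical point must satisfy $x = y$, at which $F = 0$. Hence the maximum is attained on the boundary of $[-1,1]^2$. The symmetries $(x,y) \mapsto (-x,-y)$ and $(x,y) \mapsto (y,x)$ both preserve $F$, so without loss of generality $x = 1$. Substituting $t = 1+y \in [0,2]$ reduces the problem to maximizing $h(t) := (2-t)^2(1 + \lambda t^2)$ over $[0,2]$, with $h'(t) = -2(2-t)(2\lambda t^2 - 2\lambda t + 1)$.

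The quadratic $2\lambda t^2 - 2\lambda t + 1$ has discriminant $4\lambda(\lambda-2)$. When $\lambda \le 2$ the quadratic is nonnegative, so $h$ is monotonically nonincreasing on $[0,2]$ and the maximum is $h(0) = 4$. When $\lambda > 2$ the quadratic has positive roots $\tfrac{1}{2}(1 \pm \tau)$ with $\tau = \sqrt{1 - 2/\lambda}$, and the unique interior local maximum of $h$ sits at $t^\star = \tfrac{1}{2}(1+\tau)$. Using $\lambda\tau^2 = \lambda - 2$ and direct substitution yields $h(t^\star) = \tfrac{1}{8}(3-\tau)^2(\lambda\tau + 1 + \lambda)$, matching the second branch of the theorem. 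The global maximum is thus $\max\{4,\, h(t^\star)\}$.

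To locate the threshold I would use the identity $\lambda(1+\tau) = 2/(1-\tau)$ (immediate from $\tau^2 = 1 - 2/\lambda$) to rewrite $h(t^\star) = (3-\tau)^3/[8(1-\tau)]$. Setting this equal to $4$ and letting $s = 3 - \tau$ yields the cubic $s^3 - 32 s + 64 = 0$, which factors as $(s-4)(s^2 + 4s - 16)$. The only root with $s \in (2,3]$ (equivalently $\tau \in [0,1)$) is $s = -2 + 2\sqrt{5}$, giving $\tau = 5 - 2\sqrt{5}$ and, after rationalizing, $\lambda^\star = (11 + 5\sqrt{5})/8$. Strict monotonicity of $h(t^\star)$ in $\lambda$ for $\lambda > \lambda^\star$, combined with agreement of the two branches at $\lambda = \lambda^\star$ by construction, delivers both the case split and the claimed continuity. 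The main obstacle is purely algebraic: the simplification of $h(t^\star)$ and the identification of the factor $(s-4)$ in the cubic are tedious but mechanical, and no conceptually deeper step seems to be needed.
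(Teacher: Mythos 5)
Your proof is correct and follows essentially the same strategy as the paper: reduce to a single-variable optimization on the boundary (the paper does this via the implicit observation that $(x-y)^2\bigl[1+\lambda(x+y)^2\bigr]$ grows with $|x+y|$ at fixed $x-y$, fixing $y=1$; you do it by ruling out interior critical points and then fixing $x=1$), identify the interior critical point of the resulting quartic from the quadratic factor of its derivative, and compare its value against the endpoint value $4$. The only substantive difference is the closing algebra: you use $\lambda(1+\tau)=2/(1-\tau)$ to rewrite the interior value as $(3-\tau)^3/\bigl[8(1-\tau)\bigr]$ and factor the cubic $s^3-32s+64$ with $s=3-\tau$, whereas the paper substitutes $1/\lambda=(1-\tau^2)/2$ and reduces the comparison to the sign of $(1+\tau)^2(\tau^2-10\tau+5)$; both yield $\tau=5-2\sqrt5$ and $\lambda^\star=(11+5\sqrt5)/8$.
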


\begin{restatable}[Joint Sensitivity for Moments Estimation]{theorem}{theoremcovariancematrixsensitivity}
\label{thm:cov_matrix_sens}

For $d > 1$ and $\lambda > 0$:
\begin{align}
    r_d(\lambda) 
    &= \begin{cases}
        4 \quad &\text{if} \quad \lambda \leq \frac{1}{2},\\
        2 + 2\lambda + \frac{1}{2\lambda} \quad  & \text{otherwise.}\\
    \end{cases}
    \label{eq:rd_lambda}
\end{align}
\end{restatable}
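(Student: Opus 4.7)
The plan is to reduce the maximization from pairs $(x, y)$ to three scalar variables and then optimize in two stages. Using $\|x \otimes x - y \otimes y\|_{\Fr}^2 = \|x\|_2^4 + \|y\|_2^4 - 2\langle x, y\rangle^2$, the objective depends on $(x, y)$ only through $a := \|x\|_2^2$, $b := \|y\|_2^2$, and $c := \langle x, y\rangle$:
\[
g(a, b, c) = (a + b - 2c) + \lambda(a^2 + b^2 - 2c^2),
\]
subject to $a, b \in [0, 1]$ and $|c| \le \sqrt{ab}$ by Cauchy--Schwarz. Conversely, for $d \ge 2$ every such triple is realized by some admissible pair $(x, y)$, which is the only place the hypothesis $d > 1$ enters.

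First I would maximize over $c$ with $a, b$ fixed. The function $g$ is a concave quadratic in $c$ with unconstrained maximizer $c^* = -1/(2\lambda)$, yielding two regimes. \textbf{Case A} ($\sqrt{ab} \ge 1/(2\lambda)$): $c^*$ is feasible and $\max_c g = a + b + \lambda(a^2 + b^2) + 1/(2\lambda)$. \textbf{Case B} ($\sqrt{ab} < 1/(2\lambda)$): the optimum is at the boundary $c = -\sqrt{ab}$ and $\max_c g = (\sqrt{a} + \sqrt{b})^2 + \lambda(a - b)^2$.

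Next I would optimize over $(a, b) \in [0, 1]^2$ in each case. In Case A the expression is strictly increasing in each variable, so its supremum is at $a = b = 1$ with value $2 + 2\lambda + 1/(2\lambda)$; this corner lies in Case A exactly when $\lambda \ge 1/2$. For Case B with $\lambda \le 1/2$, the side constraint $ab < 1/(4\lambda^2)$ is vacuous on the unit box; fixing $a = 1$ by symmetry, the derivative in $b$ of $(1 + \sqrt{b})^2 + \lambda(1 - b)^2$ equals $1 + 1/\sqrt{b} - 2\lambda + 2\lambda b$, which is strictly positive on $(0, 1]$ when $\lambda \le 1/2$, so the Case B maximum is $4$ at $a = b = 1$. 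For Case B with $\lambda > 1/2$, by symmetry assume $a \ge b$; the $a$-derivative of the Case B expression equals $1 + \sqrt{b/a} + 2\lambda(a - b) > 0$, pushing the optimum onto either the segment $a = 1$, $b \in [0, 1/(4\lambda^2)]$ or the hyperbolic arc $ab = 1/(4\lambda^2)$, $a \in [1/(2\lambda), 1]$. On the segment, a monotonicity check (using convexity of $h(b) := 1/\sqrt{b} + 2\lambda b$ and a lower bound at its minimum over the feasible range) shows the optimum is at the right endpoint $b = 1/(4\lambda^2)$; on the hyperbolic arc, the substitution $s = a + b$ collapses the Case B expression to $\lambda s^2 + s$, maximized at $s = 1 + 1/(4\lambda^2)$ with value $1 + \lambda + 1/(2\lambda) + 1/(4\lambda^2) + 1/(16\lambda^3)$. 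Subtracting from the Case A value yields $1 + \lambda - 1/(4\lambda^2) - 1/(16\lambda^3)$, which vanishes at $\lambda = 1/2$ and has positive $\lambda$-derivative thereafter, so Case A dominates.

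The main obstacle is the Case B optimization when $\lambda > 1/2$: the constraint region is nonconvex and the Case B expression is not globally monotone on the unit box, so a naive corner argument fails. The key enabling step is the $s = a + b$ parameterization of the hyperbolic boundary, which turns the Case B comparison into a one-line algebraic check; together with the monotonicity analysis on the segment $a = 1$, this rules out interior maximizers inside the feasible region.
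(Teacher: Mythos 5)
Your proof is correct, and it takes a genuinely different and more laborious route than the paper's. The paper uses the same identity $\|x\otimes x - y\otimes y\|_{\Fr}^2 = \|x\|_2^4 + \|y\|_2^4 - 2\langle x,y\rangle^2$, but then observes that the objective is \emph{additively separable}: it equals $f(a,b)+g(c)$ with $f(a,b) = a + b + \lambda(a^2+b^2)$ and $g(c) = -2c - 2\lambda c^2$, so $\sup(f+g) \le \sup f + \sup g$, and for $d\ge 2$ the pair $(a,b,c)=(1,1,c^*)$ is feasible (any $c\in[-1,1]$ is realizable with unit vectors), so the two suboptimizations can be done independently and their maxima attained simultaneously. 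That collapses the whole proof to ``set $\|x\|=\|y\|=1$, then maximize $-\beta - 2\lambda\beta^2$ over $\beta\in[-1,1]$.'' You instead optimize conditionally: fix $(a,b)$, maximize over $c$ subject to $|c|\le\sqrt{ab}$, which splits into your Cases A and B, and then maximize over $(a,b)\in[0,1]^2$. This is the honest KKT-style analysis; it is more work (you have to handle the nonconvex Case~B region, including the hyperbolic arc $ab = 1/(4\lambda^2)$ via the $s=a+b$ substitution and the segment $a=1$ via a monotonicity bound), but it does not rely on noticing the separability shortcut. Both approaches correctly identify $d>1$ as entering only through the realizability of arbitrary $(a,b,c)$ with $|c|\le\sqrt{ab}$. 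One small quibble: ``fixing $a=1$ by symmetry'' in the $\lambda\le\tfrac12$ branch is not literally justified by symmetry alone (a symmetric function can peak at $a=b$ in the interior); what you actually need, and what holds, is that the $\partial_a$-derivative $1 + \sqrt{b/a} + 2\lambda(a-b)$ is positive throughout the box for $\lambda\le\tfrac12$, which pushes $a$ to $1$. The conclusion is right, just state the monotonicity rather than appealing to symmetry.
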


The proofs of both theorems can be found further in the appendix.
Theorem~\ref{thm:coef_sup} requires only straightforward optimization. 
For theorem~\ref{thm:cov_matrix_sens}, we rewrite the Frobenius norm of 
the difference of Kronecker products and optimize over all possible values
for $\langle x, y \rangle$.

As a corollary of Theorems \ref{thm:coef_sup} and \ref{thm:cov_matrix_sens},
we obtain a characterization of the general solutions to Problem~\ref{prb:cov_sens}.

\begin{corollary}[Solution to Problem \ref{prb:cov_sens}]
\label{cor:dif_mat}
Assume that the matrices $C_1$ and $C_2$ have norm-decreasing columns. Then,
for any scaling parameter $\lambda > 0$, it holds that 
\begin{align}
     \text{sens}_{\lambda}^2(C_1, C_2) &= \zeta^2\alpha^2_1 r_d(\lambda \zeta^2 \beta^2_1/\alpha^2_1) 
     \label{eq:sens_from_rd}
\end{align}
where $r_d$ is specified in \eqref{eq:r_k=1} or \eqref{eq:rd_lambda}, 
and $\alpha^2_1$ and $\beta^2_1$ are the squared norms of the first columns of the matrices $C_1$ and $C_2$,
respectively. %
\end{corollary}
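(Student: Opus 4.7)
The plan is to derive the formula by a short chain of reductions: first from the definition of $\operatorname{sens}_\lambda^2$ down to Problem~\ref{prb:cov_sens}, then from Problem~\ref{prb:cov_sens} to the function $r_d$ through a rescaling argument, at which point Theorems~\ref{thm:coef_sup} and~\ref{thm:cov_matrix_sens} finish the computation by plugging in the closed forms. The bulk of the substance already sits inside those two theorems, so the corollary itself will essentially be bookkeeping.

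First I would start from the definition of $\operatorname{sens}_\lambda^2(C_1,C_2)$ and use that any neighbors $X \sim X'$ differ in exactly one row, say row $i$, with entries $x := x_i$ and $y := x_i'$. Then $X-X'$ has only its $i$-th row nonzero, so $C_1(X-X') = (C_1)_{i}(x-y)^{\top}$ and $\|C_1(X-X')\|_{\Fr}^2 = \alpha_i^2\|x-y\|_2^2$. The same argument applied to the face-splitting rows gives $\|C_2(X\face X - X'\face X')\|_{\Fr}^2 = \beta_i^2\|x\otimes x - y\otimes y\|_2^2$, so the supremum over $X\sim X'$ reduces to the outer maximum over $i\in\{1,\dots,n\}$ and an inner maximum over $(x,y)$ with $\|x\|_2,\|y\|_2\le \zeta$, which is exactly Problem~\ref{prb:cov_sens}.

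Next I would remove the outer maximum over $i$. Since the columns of $C_1$ and $C_2$ are assumed norm-decreasing, the sequences $\alpha_i^2$ and $\beta_i^2$ are both nonincreasing in $i$, and they enter the objective with nonnegative weights. Because the inner constraint set $\{(x,y):\|x\|_2,\|y\|_2\le\zeta\}$ does not depend on $i$, the $\max_i$ may be pulled inside and is attained at $i=1$. It remains to maximize $\alpha_1^2\|x-y\|_2^2 + \lambda\beta_1^2\|x\otimes x - y\otimes y\|_2^2$ over $\|x\|_2,\|y\|_2\le\zeta$. Substituting $x = \zeta\tilde x$, $y = \zeta\tilde y$, together with $\|x\otimes x - y\otimes y\|_2^2 = \zeta^4\|\tilde x\otimes\tilde x - \tilde y\otimes\tilde y\|_2^2$, and factoring out $\alpha_1^2\zeta^2$, the objective becomes $\alpha_1^2\zeta^2\bigl[\|\tilde x-\tilde y\|_2^2 + \nu\,\|\tilde x\otimes\tilde x - \tilde y\otimes\tilde y\|_2^2\bigr]$ with $\nu := \lambda\zeta^2\beta_1^2/\alpha_1^2$. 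By the definition~\eqref{def:r_d_appendix}, the inner supremum equals $r_d(\nu)$, and invoking Theorem~\ref{thm:coef_sup} (for $d=1$) or Theorem~\ref{thm:cov_matrix_sens} (for $d\ge 2$) yields the explicit value and hence \eqref{eq:sens_from_rd}.

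There is no serious obstacle: once the inner maximization is encapsulated in $r_d$, the only nontrivial point is verifying that the norm-decreasing assumption legitimately places the outer maximum at $i=1$. Since both weight sequences are nonincreasing and the feasible set for $(x,y)$ is $i$-independent, this exchange is immediate, and the corollary drops out as a direct consequence.
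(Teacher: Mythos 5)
Your proposal is correct and follows essentially the same route as the paper's own (very terse) proof: reduce the neighboring-dataset supremum to the single-row optimization of Problem~\ref{prb:cov_sens}, observe that the norm-decreasing column assumption places the outer maximum at $i=1$, and rescale by $\zeta$ to identify the inner supremum with $r_d(\lambda\zeta^2\beta_1^2/\alpha_1^2)$. Your write-up simply spells out the steps the paper labels ``a straightforward calculation,'' including the justification for pulling the $\max_i$ inside, so there is nothing to add.
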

\begin{proof}
A straightforward calculation shows 
\begin{equation}
\begin{aligned}
\text{sens}_{\lambda}^2(C_1, C_2) &=\max_{i=1, \dots, n} \sup_{\|x\| \le \zeta, \|y\| \le \zeta} \alpha^2_i \|x - y\|_2^2 + \lambda \beta^2_i \|x \otimes x - y \otimes y\|_2^2\\
    &\;=  \zeta^2\alpha^2_1\left[\sup_{\|x\| \le 1, \|y\| \le 1} \|x - y\|_2^2 + \frac{\lambda\zeta^2 \beta^2_1}{\alpha^2_1}\|x \otimes x - y \otimes y\|_2^2\right]\\
    &\;= \zeta^2\alpha^2_1 r_d(\lambda \zeta^2 \beta^2_1/\alpha^2_1) 
\end{aligned}
\end{equation}
completing the proof of \Cref{cor:dif_mat}.
\end{proof}

Corollary~\ref{cor:dif_mat} implies that the joint estimate~(\ref{eq:jointmoments}) 
will be $(\epsilon,\delta)$-private, if we use noise of strength at least, 
$\sigma=\zeta^2\alpha_1 r_d(\lambda \zeta^2 \beta_1/\alpha_1)\sigma_{\epsilon,\delta}$, 
where $\sigma_{\epsilon,\delta}$ is the noise strength required for the Gaussian mechanism 
with sensitivity $1$. This finishes the proof of the \cref{lem:jointsensitivity}.

The claim of Theorem~\ref{thm:mainprivacy} follows, because Algorithm~\ref{alg:JME} 
corresponds exactly to the above construction, only making use of the 
identity $B(CX+Z) = A(X+C^{-1}Z)$, and for the special case of $\lambda:=\lambda^{*}$, defined as 
\begin{equation}
    \label{def:crit_lambda}
    \lambda^{*} := \frac{\alpha^2_1}{\beta^2_1 \zeta^2 c_d},
\end{equation}
with $c_d = 2$ if $d > 1$ and $\frac{8}{11 + 5\sqrt{5}}$ for $d = 1$,
which is the smallest values for $\lambda$, such that $r_d\left(\frac{\lambda \zeta^2 \beta^2_1}{\alpha^2_1}\right) = 4$. 
The sensitivity is then equal to $\text{sens}_{\lambda^*}(C_1, C_2) = 2\zeta \alpha_1 = 2\zeta \|C_1\|_{1 \to 2}$, where the last identity holds because of $C_1$'s decreasing column norm structure.

We furthermore note that this value is exactly the sensitivity of estimating 
the first moment alone, because 
\begin{align}
\max_{X\sim X'}\|C_1X\!-\!C_1X'\|^2 \!=\! \max_{i=1,\dots,n}\max_{\substack{\|x\|\leq \zeta\\\|y\|\leq \zeta}}
\alpha_i\|x-y\|^2 = 4\alpha_1^2\zeta^2
\end{align}
This means that \acronym estimates the second moment without increasing the noise for 
the first moment, proving our claim that we obtain the \textbf{second moment privacy for free}. 
  
\subsection{Proof of Theorem~\ref{thm:main}}
To prove Theorem~\ref{thm:main}, we have to determine how the left-hand side of \eqref{eq:jointmoments} changes
in expectation due to the added noise on the right-hand
side. Due to the additive nature of the moment estimation
process, we can do so explicitly. 

For $Z_1\sim[\mathcal{N}(0,\sigma^2\text{I})]^{d}$ it holds that 
\begin{align}
    \mathbb{E}_{Z}\|Y - \hat{Y}\|^2_{\Fr} &=\mathbb{E}_{Z} \|B_1 Z_1\|^2
    = d\sigma^2 \|B_1\|^2_{\Fr}.
\end{align}
Inserting $B_1=A_1C_1^{-1}$ and $\sigma=2\zeta\sigma_{\epsilon,\delta}\|C_1\|_{1\to 2}$, 
Equation~\eqref{eq:JMEquality1} follows.
Analogously, for $Z_2\sim[\mathcal{N}(0,\sigma^2\text{I})]^{d\times d}$, 
\begin{align}
    \mathbb{E}_{Z}\|S - \hat{S}\|^2_{\Fr} &=\mathbb{E}_{Z} \|\frac{1}{\sqrt{\lambda^*}}B_2 Z_2\|^2
    = \frac{d^2\sigma^2}{\lambda^*}\|B_2\|^2_{\Fr}.
\end{align}
With $\sigma$ as above, Equation~\eqref{eq:JMEquality2} follows from $B_2=A_2C_2^{-1}$, 
and JME's specific choice of $\lambda^*=\frac{\|C_1\|^2_{1\to 2}}{\|C_2\|^2_{1\to 2} \zeta^2 c_d}$.

\subsection{Proof of Theorem~\ref{thm:Adam_sens}}
\label{sec:adam_sens_proof}

The proof of Theorem~\ref{thm:Adam_sens} goes similarly to \ref{thm:mainprivacy}, but now we estimate only the diagonal of the second moment matrix. We will show that the sensitivity remains unchanged by proving that the corresponding function $r_d(\lambda)$ in \eqref{def:r_d_appendix} also remains unchanged. Specifically:

\begin{align}
r^{\text{diag}}_d(\lambda) = \sup\limits_{\|x\|_2 \leq 1, \|y\|_2 \leq 1}\!\!\!\big[\|x - y\|_2^2 + \lambda \|\diag(x \otimes x) - \diag(y \otimes y)\|_2^2\big] = r_d(\lambda)
\end{align}

For simplicity, we denote $x \circ x = \diag(x \otimes x)$.

In dimension $d = 1$, the two functions are identical by construction. In dimension $d = 2$, we compute the new function explicitly using the following lemma:

\begin{restatable}{lemma}{lemmaforkequaltwo}
\label{lem:d=2}
Consider $x, y\in \mathbb{R}^2$, and let $\lambda > 0$ then,

 \begin{equation}
 r_2^{\text{diag}}(\lambda)= \sup_{\|x\|_2 \leq 1, \|y\|_2 \leq 1} \left[\|x - y\|_2^2 + \lambda \|x \circ x - y \circ y\|_2^2\right] = 
 \begin{cases}
     4, & \text{if } \lambda \leq \frac{1}{2},\\
     2 + 2\lambda + \frac{1}{2\lambda}, & \text{if } \lambda > \frac{1}{2}.
 \end{cases}
\end{equation}

\end{restatable}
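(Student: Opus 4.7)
The plan is to sandwich $r_2^{\text{diag}}(\lambda)$ between two matching bounds, one from above using \Cref{thm:cov_matrix_sens} and one from below via an explicit construction. The upper bound is immediate: for any $x, y \in \mathbb{R}^2$, the vector $x \circ x - y \circ y$ is precisely the diagonal of the $2 \times 2$ matrix $xx^\top - yy^\top$, so $\|x \circ x - y \circ y\|_2^2 \leq \|xx^\top - yy^\top\|_{\Fr}^2$. Adding $\|x - y\|_2^2$ to both sides and taking the supremum over the unit-ball product yields $r_2^{\text{diag}}(\lambda) \leq r_2(\lambda)$, and the latter is exactly the piecewise formula asserted in the statement.

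For the lower bound I would exhibit an explicit family of optimizers. The key observation is that the single off-diagonal entry of $xx^\top - yy^\top$ in dimension two is $x_1 x_2 - y_1 y_2$, and if this quantity vanishes then $\|xx^\top - yy^\top\|_{\Fr}^2 = \|x \circ x - y \circ y\|_2^2$, so the diagonal problem recovers the full problem. A symmetric way to force the off-diagonal to vanish is to take $y$ to be the coordinate swap of $x$, i.e., $y = (x_2, x_1)$, which automatically gives $\|y\|_2 = \|x\|_2$ and $x_1 x_2 = y_1 y_2$. Restricting to $\|x\|_2 = 1$ and substituting $u = x_1 - x_2$, $v = x_1 + x_2$ (so that $u^2 + v^2 = 2$), the objective collapses to
\[
2u^2\big(1 + \lambda v^2\big) = 4s + 8\lambda s(1-s), \qquad s := u^2/2 \in [0,1],
\]
a concave quadratic in $s$ whose unconstrained maximizer is $s^\ast = \tfrac{1}{2} + \tfrac{1}{4\lambda}$. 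A routine case split on whether $s^\ast \leq 1$ (equivalently $\lambda \geq \tfrac{1}{2}$) delivers $\max F = F(1) = 4$ in the low-$\lambda$ regime and $F(s^\ast) = 2 + 2\lambda + \tfrac{1}{2\lambda}$ in the high-$\lambda$ regime, matching the upper bound on the nose.

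The main obstacle is spotting the swap construction $y = (x_2, x_1)$. Without it, one would have to optimize directly on the four-dimensional constraint set $\{(x,y) : \|x\|_2, \|y\|_2 \leq 1\}$, where the two unit-ball constraints couple the coordinates and preclude the clean reduction to a scalar optimization. Once the swap is in hand, the rest of the argument recycles the analysis already used for \Cref{thm:cov_matrix_sens} and adds only a short calculus exercise to close the case split.
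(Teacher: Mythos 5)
Your proof is correct, and it takes a genuinely different and markedly shorter route than the paper's. The paper attacks \Cref{lem:d=2} head-on: after reducing to $x=(x_1,\sqrt{1-x_1^2})$ and $y=(-y_1,-y_2)$, it works through five explicit cases (interior critical point, boundary strata $y_1^2+y_2^2=1$ or $y_1=0$, and corners) using first- and second-order conditions. You instead sandwich $r_2^{\text{diag}}(\lambda)$: the upper bound $r_2^{\text{diag}}(\lambda)\le r_2(\lambda)$ follows because $\|x\circ x-y\circ y\|_2^2$ drops only the nonnegative off-diagonal contribution of $\|xx^\top-yy^\top\|_{\Fr}^2$, and the value of $r_2(\lambda)$ is already supplied by \Cref{thm:cov_matrix_sens}; the lower bound comes from the swap $y=(x_2,x_1)$, which simultaneously preserves feasibility ($\|y\|_2=\|x\|_2$) and annihilates the off-diagonal entry $x_1x_2-y_1y_2$, so the restricted objective coincides with the full one. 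The algebra checks out: with $s=u^2/2$ the restricted objective is $F(s)=(4+8\lambda)s-8\lambda s^2$, a concave quadratic with vertex $s^*=\tfrac12+\tfrac1{4\lambda}$, which lies in $[0,1]$ precisely when $\lambda\ge\tfrac12$ and there gives $F(s^*)=\tfrac{(4+8\lambda)^2}{32\lambda}=2+2\lambda+\tfrac1{2\lambda}$, while for $\lambda<\tfrac12$ the max on $[0,1]$ is $F(1)=4$; both branches match the upper bound, so the sandwich closes. What you gain is brevity and conceptual transparency (the swap makes it manifest that the diagonal restriction loses nothing for $d=2$); what you give up is self-containment, since the argument leans on \Cref{thm:cov_matrix_sens}. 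That dependence is harmless here, as the theorem is proved independently and earlier, and the paper's direct optimization delivers the same optimizer as your construction (up to sign flips) at the cost of considerably more casework.
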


Next, we use a dimension reduction argument to prove that $r^{\text{diag}}_d(\lambda) = r^{\text{diag}}_2(\lambda)$ for all $d \geq 2$, via the following lemma:

\begin{restatable}[Dimension Reduction]{lemma}{lemmadimensionreduction}
    \label{lem:dim_red}
    For any vectors $x, y \in \mathbb{R}^d$, where  $d \ge 3$, there exist vectors $x', y'\in \mathbb{R}^{d - 1}$ that for any $\lambda > 0$ satisfies the inequality:
    \begin{equation}
    \|x - y\|_2^2 + \lambda \|x \circ x - y \circ y\|_2^2 \le \|x'- y'\|_2^2 + \lambda \|x' \circ x' - y' \circ y'\|_2^2.
    \end{equation}
    
\end{restatable}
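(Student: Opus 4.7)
The plan is to construct $x', y' \in \mathbb{R}^{d-1}$ by merging two carefully chosen coordinates of $x$ and $y$ into a single coordinate, while keeping the remaining $d-2$ coordinates unchanged. Concretely, for an index pair $(i,j)$ still to be chosen, I will replace $(x_i, x_j)$ by the scalar $\tilde x := \sqrt{x_i^2 + x_j^2}$ and $(y_i, y_j)$ by $\tilde y := -\sqrt{y_i^2 + y_j^2}$, using opposite signs deliberately in order to inflate the linear gap. This construction also preserves the squared norms, $\|x'\|_2^2 = \|x\|_2^2$ and $\|y'\|_2^2 = \|y\|_2^2$, which is the feature that makes the lemma useful in the constrained sensitivity problem.

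Since the unchanged coordinates contribute identically to both sides, the claimed inequality reduces to two per-coordinate estimates. For the quadratic summand, I expand $(\tilde x^2 - \tilde y^2)^2 = \bigl((x_i^2 - y_i^2) + (x_j^2 - y_j^2)\bigr)^2 = (x_i^2 - y_i^2)^2 + (x_j^2 - y_j^2)^2 + 2(x_i^2 - y_i^2)(x_j^2 - y_j^2)$, so the required bound $(\tilde x^2 - \tilde y^2)^2 \ge (x_i^2 - y_i^2)^2 + (x_j^2 - y_j^2)^2$ holds exactly when the cross term is non-negative, that is, when $x_i^2 - y_i^2$ and $x_j^2 - y_j^2$ share a sign (either can be zero). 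A pigeonhole argument on the $d \ge 3$ signs in $\{x_k^2 - y_k^2\}_{k=1}^d$ (classifying into nonnegative and nonpositive) guarantees such a pair exists, and I take that $(i,j)$.

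For the linear summand, the opposite sign choice gives $\tilde x - \tilde y = \sqrt{x_i^2 + x_j^2} + \sqrt{y_i^2 + y_j^2}$. A direct expansion shows that $(\tilde x - \tilde y)^2 - (x_i - y_i)^2 - (x_j - y_j)^2 = 2\sqrt{(x_i^2 + x_j^2)(y_i^2 + y_j^2)} + 2(x_i y_i + x_j y_j)$, which is non-negative by Cauchy--Schwarz applied to the two-dimensional vectors $(x_i, x_j)$ and $(y_i, y_j)$. Crucially, this bound holds for any index pair, so it is compatible with whichever pair the pigeonhole step picks. Adding the two estimates, weighted by $1$ and $\lambda$ respectively, yields the inequality stated in the lemma.

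The main obstacle I anticipate is keeping the sign bookkeeping consistent: the quadratic term dictates the pigeonhole choice of index pair while the linear term dictates the opposite sign convention between $\tilde x$ and $\tilde y$, and one must check that a single construction satisfies both inequalities simultaneously (including degenerate cases where some $x_k^2 - y_k^2$ vanish, which are absorbed by either sign class). Once that index pair is fixed, the remaining verification is elementary. Applied iteratively $d - 2$ times, this lemma then reduces the supremum defining $r_d^{\text{diag}}(\lambda)$ to the $d = 2$ case handled in \Cref{lem:d=2}, which is the intended downstream use.
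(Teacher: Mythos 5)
Your proposal is correct and matches the paper's own proof in essentially every particular: both merge a pigeonhole-chosen pair $(i,j)$ with $(x_i^2-y_i^2)(x_j^2-y_j^2)\geq 0$ into $\tilde x = \sqrt{x_i^2+x_j^2}$ and $\tilde y = -\sqrt{y_i^2+y_j^2}$, and both rely on the Cauchy--Schwarz inequality $2\sqrt{(x_i^2+x_j^2)(y_i^2+y_j^2)} + 2(x_iy_i+x_jy_j)\geq 0$ for the linear term. The only cosmetic difference is that you separate the linear and quadratic contributions, whereas the paper verifies the single combined inequality $f-g\geq 0$.
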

We apply this lemma recursively to prove that $r^{\text{diag}}_d(\lambda) = r^{\text{diag}}_2(\lambda)$ for all $d \geq 2$. The proof of the lemma can be found later in the appendix.

By combining these lemmas, we conclude the proof of the theorem.

\clearpage

\section{Additional materials}

 \begin{table}[h!]
 \caption{Common workload matrices for moment estimation in~\eqref{eq:moments}.}
    \centering
    \begin{tabular}{c|c|c}
       \textbf{method} & \textbf{symbol} & \textbf{coefficients} \\\hline
         prefix sum  & $E_1$ &  $\mathbbm{1}{\{i \leq t\}}$ \\  
         exponential (weight $\beta$) & $E_{\beta}$ & $\beta^{t-i}\mathbbm{1}\{i \leq t\}$ \\
         standard average & $V$ & $\frac{1}{t}\mathbbm{1}\{i \leq t\}$ \\
         sliding window (size $k$) & $W_k$ &  $\frac{1}{k}\mathbbm{1}\{t-k\,<\,i\,\leq\,t\}$ \\
    \end{tabular}
    \label{tab:matrixentries}
 \end{table}

\begin{figure}[thb]
    \centering\scriptsize
    \begin{subfigure}[c]{.48\linewidth}\scriptsize
        \centering
        \includegraphics[width=0.8\linewidth]{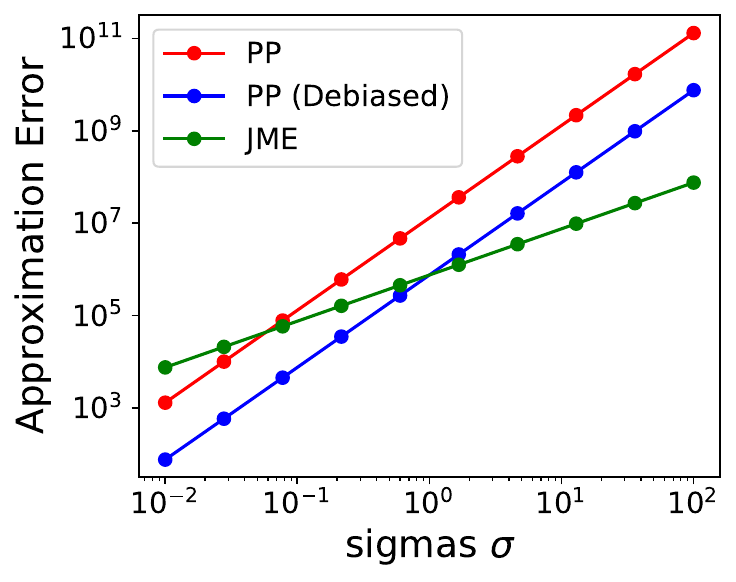}
        \subcaption{trivial factorization} 
    \end{subfigure}
    \begin{subfigure}[c]{0.48\linewidth}
        \centering
        \includegraphics[width=0.8\linewidth]{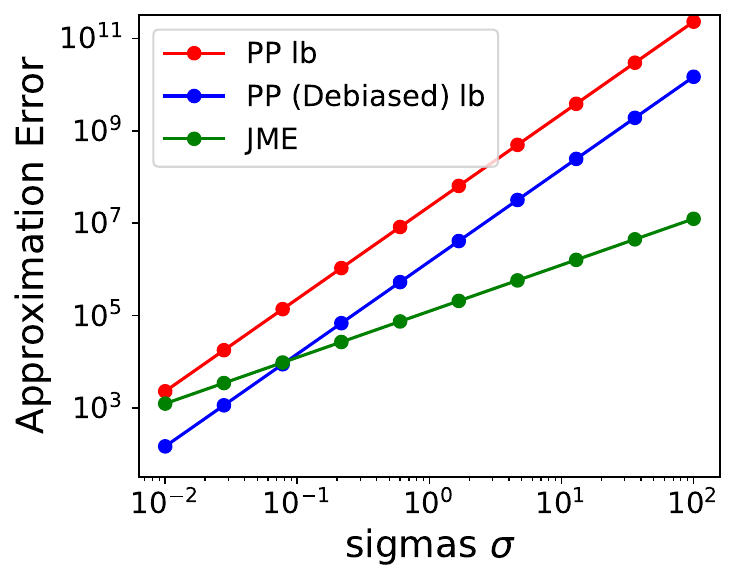}
        \subcaption{square root factorization}
    \end{subfigure}
    \caption{Expected error of the estimated covariance vector for Adam with JME versus PP
    ($d=10^6$; $n=1000$). JME provides a better estimate in the mid to high privacy regime.}
    \label{fig:DP-Adam_error}
\end{figure}

\begin{figure}[thb]
    \centering\scriptsize
    
    \begin{subfigure}[c]{.4\linewidth}\scriptsize
    \centering
        \includegraphics[width=\linewidth]{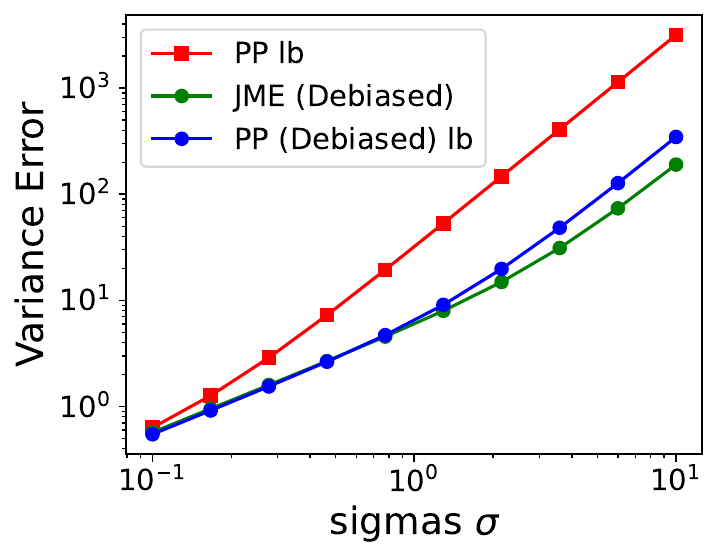}
        \subcaption{$d=1$, $n=1000$}
    \end{subfigure}
    \begin{subfigure}[c]{0.4\linewidth}
    \centering
        \includegraphics[width=\linewidth]{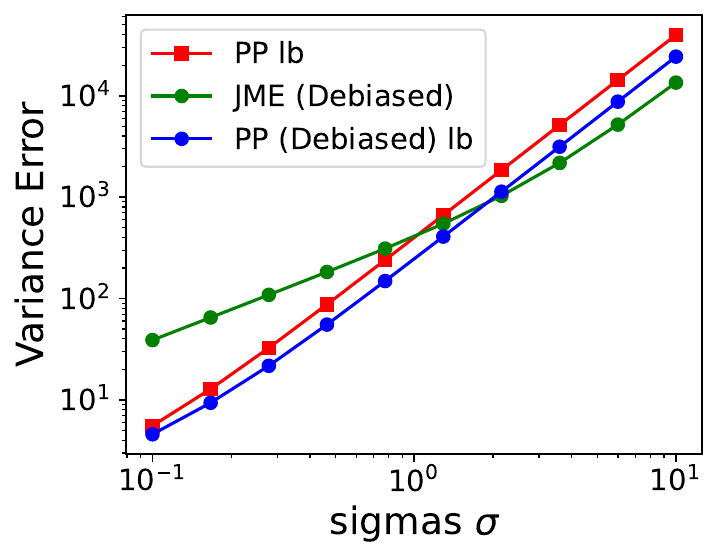}
        \subcaption{$d=100$, $n=1000$}
    \end{subfigure}
    \caption{Expected error of the estimated \emph{covariance matrix} with JME versus PP
    (with trivial factorizations). For $d=1$, JME consistently achieves quality better 
    than or on par with PP. 
    For $d=100$, JME is preferable to PP only in the high privacy regime.
    }
    \label{fig:covariance_error}
\end{figure}

\begin{figure*}[htb]
    \begin{center}
        \begin{subfigure}[c]{0.24\textwidth}
        
        \includegraphics[width=\linewidth]{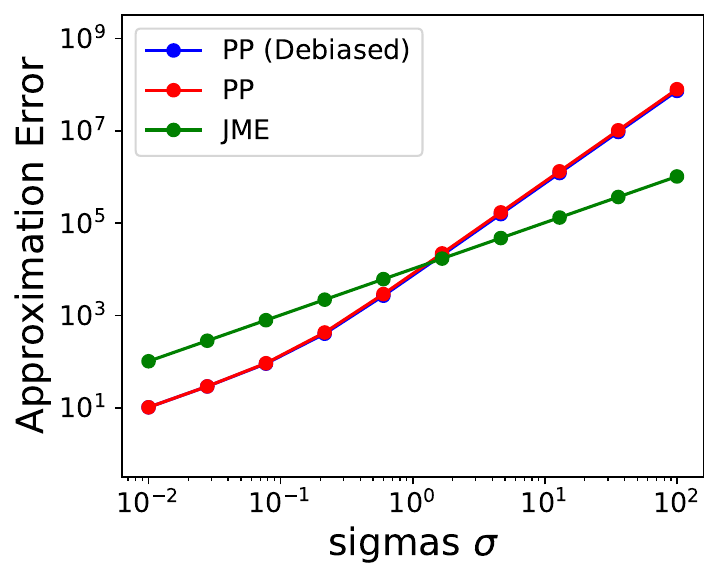}
        \subcaption{
        \begin{minipage}{\linewidth}
        \vspace{-0.4cm}
        \centering $d = 100, \beta=0.9$,\\  $C_1=C_2=I$
        \end{minipage}
        }
        \end{subfigure}
        \begin{subfigure}[c]{0.24\textwidth}
        \includegraphics[width=\linewidth]{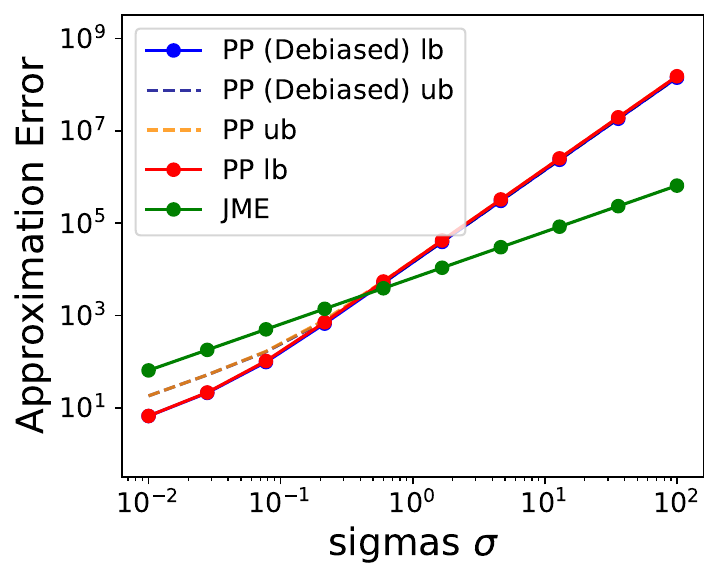}
        \subcaption{
        \begin{minipage}{\linewidth}
        \vspace{-0.4cm}
        \centering $d = 100, \beta=0.9$,\\  $C_1=C_2=A_{\beta}^{1/2}$
        \end{minipage}
        }
        \end{subfigure}
        \begin{subfigure}[c]{0.24\textwidth}
        \includegraphics[width=\linewidth]{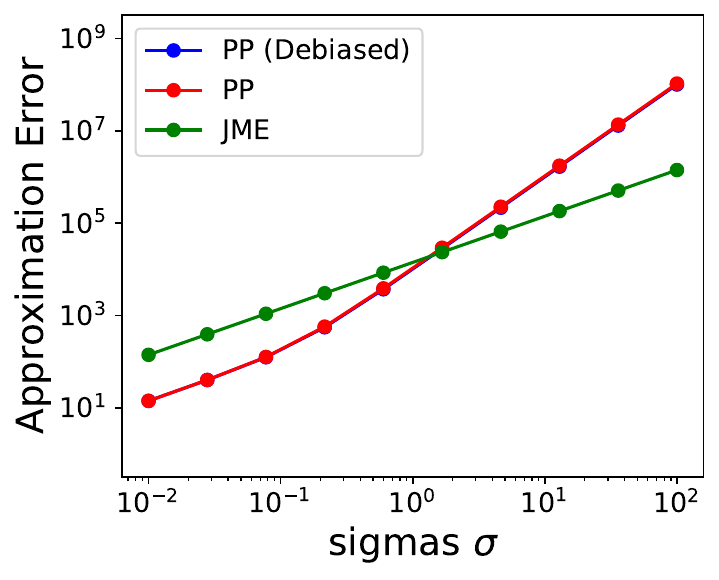}
        \subcaption{
        \begin{minipage}{\linewidth}
        \vspace{-0.4cm}
        \centering $d=100, w=10,$\\  $C_1=C_2=I$
        \end{minipage}
        }
        \end{subfigure}
        \begin{subfigure}[c]{0.24\textwidth}
        \includegraphics[width=\linewidth]{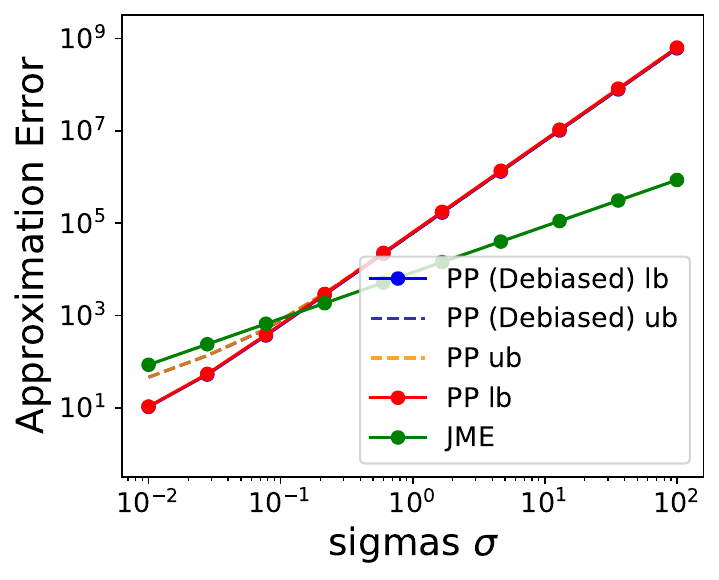}
        \subcaption{
        \begin{minipage}{\linewidth}
        \vspace{-0.4cm}
        \centering $d=100, w=10,$\\  $C_1=C_2=A_w^{1/2}$
        \end{minipage}
        }
        \end{subfigure}
    \caption{Expected error of second moment estimation with JME versus PP under different workload settings. The scenarios include exponential decay ($\beta=0.9$) and sliding window ($w=10$) workloads and $d=100, n=1000$, both trivial and square root matrix factorization. 
    In line with our analysis, incorporating matrix factorization significantly improves the quality of both methods, particularly in the high privacy regime.}
    \label{fig:moments_exp_window}
    \end{center}
\end{figure*}

\begin{table}[htb]
\caption{CIFAR-10 experiments with two different privacy budgets, $\varepsilon \approx 1.7$ and $\varepsilon \approx 0.16$ for $\delta = 10^{-6}$,  for four methods: DP-Adam with and without debiasing, JME, and Joint Clipping. The average and standard deviation errors are based on 3 runs.}
\centering
\resizebox{\textwidth}{!}{%
\begin{tabular}{llcccccccccc}
\toprule
 & \textbf{Method} & \textbf{Epoch 1} & \textbf{Epoch 2} & \textbf{Epoch 3} & \textbf{Epoch 4} & \textbf{Epoch 5} & \textbf{Epoch 6} & \textbf{Epoch 7} & \textbf{Epoch 8} & \textbf{Epoch 9} & \textbf{Epoch 10} \\
\midrule
\multirow{4}{*}{$\varepsilon \approx 0.16$} 
& DP-Adam-JME     & 12.43 $\pm$ 3.95 & 19.18 $\pm$ 2.27 & 23.29 $\pm$ 1.25 & 25.83 $\pm$ 0.48 & 27.25 $\pm$ 0.20 & 27.81 $\pm$ 0.14 & 28.50 $\pm$ 0.40 & 28.89 $\pm$ 0.54 & 29.61 $\pm$ 0.51 & 30.38 $\pm$ 0.62 \\
& DP-Adam-Clip    & 13.54 $\pm$ 3.31 & 19.68 $\pm$ 2.76 & 24.92 $\pm$ 1.48 & 26.24 $\pm$ 1.04 & 26.60 $\pm$ 0.25 & 27.27 $\pm$ 0.55 & 27.56 $\pm$ 0.46 & 27.98 $\pm$ 0.26 & 27.78 $\pm$ 0.17 & 28.14 $\pm$ 0.33 \\
& DP-Adam-Debiased & 12.41 $\pm$ 4.14 & 16.91 $\pm$ 1.36 & 23.78 $\pm$ 0.70 & 24.92 $\pm$ 0.55 & 26.20 $\pm$ 0.65 & 26.96 $\pm$ 1.19 & 27.46 $\pm$ 0.90 & 28.14 $\pm$ 0.72 & 28.55 $\pm$ 0.92 & 29.14 $\pm$ 0.60 \\
& DP-Adam         & 10.29 $\pm$ 0.51 & 10.32 $\pm$ 0.56 & 10.32 $\pm$ 0.56 & 10.34 $\pm$ 0.58 & 10.39 $\pm$ 0.67 & 10.41 $\pm$ 0.70 & 10.42 $\pm$ 0.73 & 10.45 $\pm$ 0.79 & 10.45 $\pm$ 0.78 & 10.48 $\pm$ 0.83 \\
\midrule
\multirow{4}{*}{$\varepsilon \approx 1.7$} 
& DP-Adam-JME     & 32.64 $\pm$ 0.71 & 39.19 $\pm$ 1.52 & 42.28 $\pm$ 0.48 & 44.44 $\pm$ 0.57 & 45.12 $\pm$ 0.81 & 45.61 $\pm$ 0.73 & 46.91 $\pm$ 1.21 & 47.04 $\pm$ 0.98 & 47.24 $\pm$ 0.79 & 47.94 $\pm$ 0.85 \\
& DP-Adam-Clip    & 30.51 $\pm$ 0.72 & 38.03 $\pm$ 1.00 & 40.66 $\pm$ 0.75 & 42.88 $\pm$ 0.46 & 43.30 $\pm$ 0.71 & 43.52 $\pm$ 0.50 & 44.51 $\pm$ 0.70 & 44.45 $\pm$ 0.24 & 44.76 $\pm$ 0.22 & 44.34 $\pm$ 0.28 \\
& DP-Adam-Debiased & 34.21 $\pm$ 0.95 & 41.38 $\pm$ 1.54 & 45.20 $\pm$ 0.44 & 46.91 $\pm$ 1.25 & 48.06 $\pm$ 0.36 & 48.38 $\pm$ 0.83 & 48.26 $\pm$ 1.05 & 48.38 $\pm$ 1.21 & 47.81 $\pm$ 0.73 & 48.53 $\pm$ 0.99 \\
& DP-Adam         & 26.58 $\pm$ 0.43 & 31.45 $\pm$ 0.43 & 35.02 $\pm$ 1.16 & 36.39 $\pm$ 0.58 & 38.15 $\pm$ 0.95 & 39.31 $\pm$ 0.52 & 40.33 $\pm$ 0.33 & 41.47 $\pm$ 1.05 & 42.42 $\pm$ 0.94 & 43.07 $\pm$ 0.86 \\
\bottomrule
\end{tabular}
}
\label{tab:accuracy_results}
\end{table}

\begin{table}[thb]
\caption{Hyperparameters for CIFAR-10 Experiments. Medium-privacy experiments use a batch size of $256$, compared to $1$ in the high-privacy regime, while using a noise multiplier of $\sigma_{\varepsilon, \delta} = 1$ for the medium-privacy regime and $\sigma_{\varepsilon, \delta} = 2$ for the high-privacy regime. JME and joint clipping require an additional hyperparameter—scaling—which is optimized to find the best value for those runs. We also find it helpful to clip the updates; for this, we use the same clipping norm.}
\centering
\renewcommand{\arraystretch}{1} %
\setlength{\tabcolsep}{4pt}      %
\begin{tabular}{lcclcccc}
\toprule
 & \textbf{Noise Mult} & \textbf{Batch Size} & \textbf{Method} & \textbf{lr}   & \textbf{Scaling ($\lambda, \tau)$} & \textbf{eps} & \textbf{Clip Norm} \\ 
\midrule
\multirow{4}{*}{\(\varepsilon \approx 1.7\)} & \multirow{4}{*}{1} & \multirow{4}{*}{256} 
& DP-Adam-JME      & $10^{-3}$ & 1  & $10^{-6}$ & 1 \\ 
& & & DP-Adam-Clip     & $10^{-3}$ & 0.5 & $10^{-6}$ & 1 \\ 
& & & DP-Adam-Debiased & $10^{-3}$ & -     & $10^{-6}$ & 1 \\ 
& & & DP-Adam          & $10^{-3}$ & -     & $10^{-8}$ & 1 \\ 
\midrule
\multirow{4}{*}{\(\varepsilon \approx 0.16\)} & \multirow{4}{*}{2} & \multirow{4}{*}{1} 
& DP-Adam-JME      & $10^{-7}$ & 1 & $10^{-7}$ & 1 \\ 
& & & DP-Adam-Clip     & $10^{-7}$ & 0.5 & $10^{-7}$ & 1 \\ 
& & & DP-Adam-Debiased & $10^{-7}$ & -     & $10^{-7}$ & 1 \\ 
& & & DP-Adam          & $10^{-7}$ & -     & $10^{-8}$ & 1 \\ 
\bottomrule
\end{tabular}

\label{tab:hyperparameters}
\end{table}

\clearpage
\section{Algorithms}

\begin{algorithm}[ht]
\caption{Differentially Private JME Adam}\label{alg:jme_adam} 
\begin{algorithmic}
\STATE {\bfseries Input:} Initial model $\theta_0 \in \mathbb{R}^d$, dataset $D$, batchsize $b$, matrices $C_{\beta_1}, C_{\beta_2} \in \mathbb{R}^{n \times n}$, model loss $\ell(\theta,d)$, clipnorm $\zeta$, noise multiplier $\sigma_{\epsilon, \delta} \ge 0$, learning rate $\alpha > 0$, and parameters $\beta_1 = 0.9$, $\beta_2 = 0.999$, $\varepsilon = 10^{-8}$.

\STATE $m_0 \leftarrow 0$ \qquad// first moment initialization.
\STATE $v_0 \leftarrow 0$ \qquad// second moment initialization.

\STATE $\lambda, s_\lambda= \cdot \textit{Joint-sens}(C_{\beta_1}, C_{\beta_2})$ \qquad// joint sensitivity 
\STATE $Z_1, Z_2 \sim N(0, \sigma_{\epsilon, \delta}^2s_\lambda^2I_d) $ \qquad// noise generating
\FOR{$i = 1, 2, \dots ,n$}
\STATE $S_i \leftarrow \{d_1, \dots, d_m\} \subseteq D$\quad select a data batch
\STATE $g_j \leftarrow \nabla_\theta \ell(\theta_{i-1}, d_j)) \quad\text{for $j=1,\dots,m$}$
\STATE $x_i \leftarrow \sum_{j = 1}^{m} \min(1, \zeta/||g_j||)g_j $ 
\STATE $\widehat{x}_i \leftarrow x_i + \zeta [C_{\beta_1}^{-1}Z_1]_{[i,\cdot]}$
\STATE $\widehat{x^2}_i \leftarrow x_i^2 + \lambda^{-1/2}\zeta [C_{\beta_2}^{-1}Z_2]_{[i,\cdot]}$
\STATE $m_i \leftarrow m_{i - 1} \beta_1 + (1 - \beta_1)\widehat{x}_i$

\STATE $v_i \leftarrow v_{i - 1} \beta_2 + (1 - \beta_2)\widehat{x^2}_i$
\STATE $\widehat{m}_i = m_i / (1 - \beta_1^i)$ \qquad// bias-correction
\STATE $\widehat{v}_i = v_i / (1 - \beta_2^i)$ \qquad// bias-correction

\STATE $\theta_{i} \leftarrow \theta_{i - 1} - \alpha \widehat{m}_i / (\sqrt{\widehat{v}_i} +\epsilon)$
\ENDFOR
\OUTPUT $\Theta=(\theta_1,\dots,\theta_n)$

\end{algorithmic}
\end{algorithm}

\begin{algorithm}[ht]
\caption{$\lambda$-\acronym}\label{alg:lambdaJME}
\begin{algorithmic}
\INPUT input stream vectors $x_1, \dots, x_n\in \mathbb{R}^d$ with \mbox{$\|x_t\|_2 \leq \zeta$} for $\zeta > 0$
\INPUT workload matrices $A_1=(a^t_{k}), A_2=(b^t_{k}) \in \mathbb{R}^{n \times n}$
\INPUT noise shaping matrices $C_1, C_2$ (lower triangular, invertible, decreasing column norms) \quad (default: $\text{I}_{n\times n}$)
\INPUT privacy parameters $(\epsilon,\delta)$
\smallskip
\STATE $\sigma_{\epsilon, \delta}\leftarrow$ noise strength for $(\epsilon,\delta)$-dp Gaussian mechanism

\STATE $s\leftarrow \zeta \|C_1\|_{1 \to 2}r_d\left(\frac{\lambda \zeta^2 \|C_2\|_{1 \to 2}^2}{\|C_1\|_{1 \to 2}^2}\right)^{1/2}$ \hfill// joint sensitivity
\STATE $Z_1\sim \big[\mathcal{N}(0, \sigma_{\epsilon, \delta}^2 s^2) \big]^{n\times d}$ \hfill// 1st moment noise
\STATE $Z_2\sim \big[\mathcal{N}(0, \sigma_{\epsilon, \delta}^2 s^2)
\big]^{n\times d\times d}$ \hfill// 2nd moment noise 
\smallskip
\FOR{$t = 1, 2, \dots ,n$}
\STATE $\widehat{x_t} \leftarrow x_t +  [C_{1}^{-1}Z_1]_{[t,\cdot]}$
\STATE $\widehat{x_t \otimes x_t} \leftarrow x_t \otimes x_t + \lambda^{-1/2}[C_{2}^{-1}Z_2]_{[t,\cdot,\cdot]}$
\STATE \textbf{yield} \quad$\widehat{Y_t}=\sum\limits_{k = 1}^{t} a^t_k \widehat{x_k}, \quad \widehat{S_t}=\sum\limits_{k = 1}^{t} b^t_k \widehat{x_k \otimes x_k}$
\ENDFOR
\end{algorithmic}
\end{algorithm}

\begin{algorithm}[ht]
\caption{$\alpha$-IME (IME with budget split parameter $\alpha\in(0,1)$)}\label{alg:alphaIME}
\begin{algorithmic}
\INPUT input stream of vectors $x_1, \dots, x_n\in \mathbb{R}^d$ with \mbox{$\|x_t\|_2 \leq \zeta$} for $\zeta > 0$
\INPUT workload matrices $A_1=(a^t_{k}), A_2=(b^t_{k}) \in \mathbb{R}^{n \times n}$
\INPUT noise shaping matrices $C_1, C_2$ (lower triangular, invertible, decreasing column norms)\quad (default: $\text{I}_{n\times n}$)
\INPUT privacy parameters $(\epsilon,\delta)$
\INPUT $\sigma \leftarrow$ noise strength for $(\epsilon,\delta)$-dp Gaussian mechanism
\INPUT privacy trade-off $\alpha\in(0,1)$
\smallskip
\STATE $\sigma_1\leftarrow \frac{\sigma}{\sqrt{\alpha}}$
\STATE $s_1\leftarrow 2\zeta \|C_1\|_{1\to 2}$ \hfill// sensitivity of 1st moment
\STATE $Z_1\sim \big[\mathcal{N}(0, \sigma^2_1 s^2_1) \big]^{n\times d}$ \hfill// 1st moment noise
\STATE $\sigma_2\leftarrow \frac{\sigma}{\sqrt{1 - \alpha}}$ \STATE $s_2\leftarrow \sqrt{2}\zeta^2 \|C_2\|_{1\to 2}$ \hfill// sensitivity of 2nd moment
\STATE $Z_2\sim \big[\mathcal{N}(0, \sigma^2 s_2^2) \big]^{n\times d\times d}$ \hfill// 2nd moment noise
\smallskip
\FOR{$t = 1, 2, \dots ,n$}
\STATE $\widehat{x_t} \leftarrow x_t +  [C_{1}^{-1}Z_1]_{[t,\cdot]}$
\STATE $\widehat{x_t \otimes x_t} \leftarrow x_t \otimes x_t + [C_{2}^{-1}Z_2]_{[t,\cdot,\cdot]}$
\STATE \textbf{yield} \quad$\widehat{Y_t}=\sum\limits_{k = 1}^{t} a^t_k \widehat{x_k}, \quad \widehat{S_t}=\sum\limits_{k = 1}^{t} b^t_k \widehat{x_k \otimes x_k}$
\ENDFOR
\end{algorithmic}
\end{algorithm}

\begin{algorithm}[ht]
\caption{$\tau$-CS (CS with 2nd moment rescaling parameter $\tau>0$)}\label{alg:tauCS}
\begin{algorithmic}
\INPUT input stream of vectors $x_1, \dots, x_n\in \mathbb{R}^d$ with \INPUT workload matrices $A=(a^t_{k}) \in \mathbb{R}^{n \times n}$
\INPUT input dimension $d$, bound on input norm $\zeta>0$
\INPUT privacy parameters $(\epsilon,\delta)$
\INPUT (optional) noise shaping matrix $C$ (lower triangular, invertible, decreasing column norms)\quad (default: $\text{I}_{n\times n}$)
\smallskip
\STATE $\sigma_{\epsilon,\delta}\leftarrow$ noise strength for $(\epsilon,\delta)$-dp Gaussian mechanism
\STATE $s\leftarrow 2\zeta\sqrt{1 + \tau \zeta^2}$ \hfill// sensitivity based on norm of concatenated data
\STATE $Z\sim \big[\mathcal{N}(0, \sigma_{\epsilon,\delta}^2 s^2) \big]^{n\times d(d+1)}$ \hfill // noise matrix for concatenated data
\smallskip
\FOR{$t = 1, 2, \dots ,n$}
\STATE $\widetilde{x_t} = \big(\, x_t, \sqrt{\tau} \textsf{vec}(x_t\otimes x_t)\,\big)$ 
\STATE $\widehat{\widetilde{x_t}} \leftarrow\widetilde{x_t} +  [C^{-1}Z]_{[t,\cdot]}$
\STATE \textbf{yield} \quad $\widehat{Y_t}=\sum\limits_{k = 1}^{t} a^t_k [\widehat{\tilde x_t}]_{1:d}$\quad $\widehat{S_t}=\frac{1}{\sqrt{\tau}}\sum\limits_{k = 1}^{t} b^t_k [\widehat{\tilde x_t}]_{(d+1):d(d+1)}$
\ENDFOR
\end{algorithmic}
\end{algorithm}

\begin{algorithm}[ht]
\caption{PP}\label{alg:PP}
\begin{algorithmic}
\INPUT input stream of vectors $x_1, \dots, x_n\in \mathbb{R}^d$ with \mbox{$\|x_t\|_2 \leq \zeta$} for $\zeta > 0$
\INPUT workload matrices $A_1=(a^t_{k}), A_2=(b^t_{k}) \in \mathbb{R}^{n \times n}$
\INPUT noise shaping matrix $C_1$ (lower triangular, invertible, decreasing column norms)\quad (default: $\text{I}_{n\times n}$)
\INPUT privacy parameters $(\epsilon,\delta)$
\smallskip
\STATE $\sigma_{\epsilon,\delta}\leftarrow$ noise strength for $(\epsilon,\delta)$-dp Gaussian mechanism
\STATE $s\leftarrow 2\zeta \|C_1\|_{1\to2}$ \hfill// sensitivity of 1st moment
\STATE $Z\sim \big[\mathcal{N}(0, \sigma_{\epsilon,\delta}^2 s^2) \big]^{n\times d}$ \hfill// 1st moment noise
\smallskip
\FOR{$t = 1, 2, \dots ,n$}
\STATE $\widehat{x_t} \leftarrow x_t +  [C_{1}^{-1}Z]_{[t,\cdot]}$
\STATE $b_t \leftarrow I_{d \times d} \times \sigma_{\epsilon, \delta}^2 \|C_1\|_{1\to 2}^2 \sum\limits_{k = 1}^{n} (A_2)_{t, k}(C_1C_1^{\top})^{-1}_{k, k}$ \hfill// bias term (optional)
\STATE $\widehat{x_t \otimes x_t} \leftarrow \widehat{x_t} \otimes \widehat{x_t} - b_t$ 
\STATE \textbf{yield} \quad$\widehat{Y_t}=\sum\limits_{k = 1}^{t} a^t_k \widehat{x_k}, \quad \widehat{S_t}=\sum\limits_{k = 1}^{t} b^t_k \widehat{x_k \otimes x_k}$
\ENDFOR
\end{algorithmic}
\end{algorithm}

\clearpage
\section{Technical Proofs}

\begin{definition}[Face-Splitting Product] 
Let $A = (a_{i}^\top)_{i = 1}^{n}$ with $a_i \in \mathbb{R}^{d_1}$ and $B = (b_i^\top)_{i = 1}^{n}$ with $b_i \in \mathbb{R}^{d_2}$. 
The \emph{Face-Splitting Product} of $A$ and $B$, denoted by $A \face B$, is defined as:
\begin{equation}
    A \face B = (a_i \otimes b_i)_{i=1}^n = (\operatorname{vec}(a_ib_i^\top))_{i=1}^n,
\end{equation}
where $\otimes$ denotes the Kronecker product, and $\operatorname{vec}$ denotes vectorization.

\end{definition}
The result is a matrix of size $n \times (d_1 d_2)$, where each row corresponds to the vectorized Kronecker product of $a_i$ and $b_i$. A few properties of the Face-Splitting Product that we will use further:
\begin{itemize}
    \item  \emph{Bilinearity} $A\face (B+C)=A\face B+A\face C$ and $(A+B)\face C=A\face C+B\face C)$.
    \item \emph{Associativity} $(A\face B)\face C=A\face (B\face C)$.
    \item \emph{Frobenius norm} $\|A\face B\|_{\Fr}=\|B\face A\|_{\Fr}$, even though $A \face B \ne B \face A$.
\end{itemize}

\subsection{Bounds on the expected approximation error for PP with non-trivial factorization}\label{sec:supX_upperlowerbound}
\begin{lemma}\label{lem:supX_upperlowerbound}
\begin{align}
\|A_2 C_1^{-1}\|^2_{\Fr}
& \leq \sup_{X\in\mathcal{X}} \tr\big((A_2^\top A_2 \circ C_1^{-1}C_1^{-\top})XX^\top\big)
 \leq \sum_{ij} \big|\,[A_2^\top A_2 \circ C_1^{-1}C_1^{-\top}]_{i,j}\,\big|
\end{align}
\end{lemma}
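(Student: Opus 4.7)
The plan is to reduce both inequalities to a single identity for the matrix $M := A_2^\top A_2 \circ C_1^{-1}C_1^{-\top}$, which is symmetric as the Hadamard product of two symmetric matrices. Since the rows of $X$ are $x_1^\top,\dots,x_n^\top$, one has $(XX^\top)_{ij}=\langle x_i,x_j\rangle$. Using symmetry of $M$ together with the standard identity $\tr(BQ)=\sum_{i,j}(B\circ Q)_{ij}$ for symmetric $B,Q$, I would first rewrite
\[
\tr\bigl((A_2^\top A_2 \circ C_1^{-1}C_1^{-\top})\,XX^\top\bigr) \;=\; \sum_{i,j=1}^n M_{ij}\,\langle x_i,x_j\rangle .
\]
Applying the same identity to the target on the left, plus the cyclic property of the trace, I would also obtain
\[
\|A_2 C_1^{-1}\|_{\Fr}^2 \;=\; \tr\!\bigl(A_2^\top A_2 \cdot C_1^{-1}C_1^{-\top}\bigr) \;=\; \sum_{i,j=1}^n M_{ij} .
\]

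For the upper bound, I would apply Cauchy--Schwarz termwise: since every $x_i\in\mathcal{X}$ satisfies $\|x_i\|_2\le 1$ (the lemma is used in the $\zeta=1$ regime of Theorem~\ref{thm:JME_error_with_bias_correction}), we get $|\langle x_i,x_j\rangle|\le 1$ for all $i,j$. Hence each term in the sum is bounded by $|M_{ij}|$, giving
\[
\sup_{X\in\mathcal{X}}\tr(MXX^\top) \;\le\; \sum_{i,j} |M_{ij}| \;=\; \sum_{i,j}\big|[A_2^\top A_2 \circ C_1^{-1}C_1^{-\top}]_{i,j}\big|,
\]
which is the right-hand inequality.

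For the lower bound, I would exhibit a single admissible witness. The natural choice is $X^\star=\mathbf{1}_n u^\top$, where $\mathbf{1}_n$ is the all-ones vector and $u\in\mathbb{R}^d$ is any fixed unit vector; every row of $X^\star$ has unit norm, so $X^\star\in\mathcal{X}$, and $X^\star (X^\star)^\top=\mathbf{1}_n\mathbf{1}_n^\top$. Substituting into the identity from step~1 yields $\tr(M X^\star (X^\star)^\top)=\mathbf{1}_n^\top M\mathbf{1}_n=\sum_{i,j} M_{ij}$, which by step~2 equals $\|A_2 C_1^{-1}\|_{\Fr}^2$. This gives the left-hand inequality by taking the supremum.

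There is no serious obstacle in this argument; it is essentially a one-line Cauchy--Schwarz bound together with a rank-one test matrix. The only point requiring any care is the algebraic simplification identifying $\|A_2 C_1^{-1}\|_{\Fr}^2$ with the plain sum of entries of $M$, which relies on both $A_2^\top A_2$ and $C_1^{-1}C_1^{-\top}$ being symmetric so that $\tr(A_2^\top A_2 \cdot C_1^{-1}C_1^{-\top})=\sum_{i,j}(A_2^\top A_2\circ C_1^{-1}C_1^{-\top})_{ij}$.
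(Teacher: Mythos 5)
Your proof is correct and follows essentially the same route as the paper's: Cauchy--Schwarz on $(XX^\top)_{ij}\in[-1,1]$ for the upper bound (with $\zeta=1$, as you correctly note), and the rank-one witness with identical unit-norm rows for the lower bound. The only difference is that you spell out the identity $\|A_2C_1^{-1}\|_{\Fr}^2=\tr(A_2^\top A_2\,C_1^{-1}C_1^{-\top})=\sum_{ij}M_{ij}$ explicitly, which the paper leaves implicit.
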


\begin{proof}[Proof of \Cref{lem:supX_upperlowerbound}]
The proof is elementary. For the lower bound. consider the 
specific choice for $X=(x_1,\dots,x_n)$ with all rows identical 
with $\|x_i\|=1$, such that $XX^\top$ is the constant matrix 
of all $1$s. 
For the upper bound, we observe that $[XX^\top]_{i,j}\in[-1,1]$, so 
\begin{align}
\tr\big((A_2^\top A_2 \circ C_1^{-1}C_1^{-\top})XX^\top\big)
&=\sum_{i,j} [A_2^\top A_2 \circ C_1^{-1}C_1^{-\top}]_{i,j}[XX^\top]_{i,j}
\leq \sum_{i,j} \big|\,[A_2^\top A_2 \circ C_1^{-1}C_1^{-\top}]_{i,j}\,\big|.
\end{align}
\end{proof}

\theoremforkequalone*

\begin{proof}

We recall that the function $r_1(\lambda)$ is defined as

\begin{equation}
    r_1(\lambda) = \sup_{|x|, |y| \le 1} [(x - y)^2 + \lambda (x^2 - y^2)^2].
\end{equation}
Given the difference it is an increasing function of $x + y$ therefore without any loss of generality we can assume $y = 1$. Then the derivative of this expression would give us

\begin{equation}
    2(x - 1)(1 + 2\lambda (x^2 + x))
\end{equation}

The optimal value will depend on the roots of the quadratic polynomial. The determinant is $4\lambda^2 - 8\lambda$ therefore for $\lambda < 2$ there are no roots, the function is decreasing on the segment $[-1, 1]$ reaching its maximal value at $x = -1$ with the value $4$. Otherwise we have the following roots $\frac{-1 \pm \sqrt{1 - 2/\lambda}}{2}$. We observe that for large $\lambda \gg 1$ we have the optimal $x = 0$ which corresponds to the maximal value for the second term. The maximum could be in both $x = -1$ and $x = \frac{-1 + \sqrt{1 - 2/\lambda}}{2} $ therefore we will need to compare them. By substituting it into the expression we get $\frac{1}{8} (\tau - 3)^2 (\lambda \tau + 1 + \lambda)$, where $\tau = \sqrt{1 - 2 /\lambda}$. We should compare this expression with $4$ to find a boundary when $x = -1$ is an optimal solution. 

\begin{equation}
\frac{1}{8} (\tau - 3)^2 (\lambda \tau + 1 + \lambda) \le 4
\end{equation}
We can express $1 / \lambda = \frac{1 - \tau^2}{2}$. Therefore we get the following inequality of variable $0 \le \tau < 1$:

\begin{align}
\begin{split}
    \frac{1}{8}(\tau - 3)^2 (\tau + \frac{1 - \tau^2}{2} + 1) - 4 \frac{1 - \tau^2}{2}
     &= \frac{1}{16}(3 - \tau)^3(1 + \tau) - 2 (1 - \tau) (1 + \tau)\\
    \quad &= \frac{1}{16}(1 + \tau)\left[27 -27\tau + 9\tau^2 - \tau^3 - 32 + 32\tau\right]\\
    \quad &= -\frac{1}{16}(1 + \tau)^2(\tau^2 - 10\tau + 5)  \le 0
\end{split}
\end{align}
Which is less than $0$ for $\tau \le 5 - 2\sqrt{5} \Rightarrow \lambda \le \frac{11 + 5\sqrt{5}}{8}  \approx 2.77$ which concludes the proof.
\end{proof}

\theoremcovariancematrixsensitivity*
\begin{proof}

First, we decompose the Kronecker product as follows:
\begin{equation}
\begin{aligned}
    \|x \otimes x - y \otimes y\|_{\Fr}^2 &= \|xx^T - yy^T\|_{\Fr}^2 = \tr(xx^Txx^T) - 2\tr(xx^Tyy^T) + \tr(yy^Tyy^T)\\
    &= \|x\|_2^4 + \|y\|_2^4 - 2\langle x, y \rangle^2.
\end{aligned}
\end{equation}

Similarly, the squared norm of the difference is

\begin{equation}
    \|x - y\|_2^2 = \|x\|_2^2 + \|y\|_2^2 - 2\langle x, y\rangle.
\end{equation}

Therefore, the objective function becomes

\begin{equation}
    \|x - y\|^2 + \lambda \|x \otimes x - y \otimes y\|_{\Fr}^2 = \|x\|_2^2 + \|y\|_2^2 + \lambda \|x\|_2^4 + \lambda \|y\|_2^4 - 2\langle x, y\rangle - 2\lambda \langle x, y\rangle^2.
\end{equation}

The first four terms are maximized when $\|x\| = \|y\| = 1$, yielding $2 + 2\lambda$. The last two terms can then be optimized independently over the scalar product $\langle x, y \rangle$ in dimension $d > 1$.

Let $\beta = \langle x, y \rangle$, where $-1 \leq \beta \leq 1$. The expression $-2\beta - 2\lambda\beta^2$ is maximized at $\beta = -\frac{1}{2\lambda}$ when $\lambda \geq \frac{1}{2}$, or at $\beta = -1$ when $\lambda < \frac{1}{2}$.  If $\beta = -1$, then $x = -y$ and the objective function becomes $4$. If $\beta = -\frac{1}{2\lambda}$, the objective function is equal to $2 + 2\lambda + \frac{1}{2\lambda}$, which concludes the proof.

\end{proof}

\theoremJMEvsIME*
\begin{proof}

We begin the proof with the following observation: $\lambda$-\acronym and IME introduce \textit{additive} noise to the first and second moments. Therefore, for the Frobenius approximation error, it is sufficient to compare just the variances of the noise introduced by those methods. We use an instance of a Gaussian mechanism, and the privacy guarantees are more appropriately characterized by the notion of Gaussian privacy. For the sake of the proof, we assume that $(\epsilon, \delta)$-DP is equivalent to $\mu$-GDP for a specific choice of $\mu$. The sensitivity of $\lambda$-\acronym depends on the dimensionality. Here, we assume $d > 1$ to address the hardest case, as $d = 1$ follows trivially.

To estimate $x$ and $x \otimes x$ simultaneously in a differentially private way via composition theorem, we need to split the privacy budget between the components. Using the Gaussian mechanism, we split the budget as $\mu_1$-GDP for the first component and $\mu_2$-GDP for the second component such that $\mu_1^2 + \mu_2^2 = \mu^2$. The squared sensitivity of $x$ is $4\zeta^2$, and the squared sensitivity of $x \otimes x$ is $2\zeta^4$, assuming $\|x\|_2 \le \zeta$. Therefore, the variance of noise added to the first and second components is given by:
\begin{equation}
\left(\frac{4\zeta^2}{\mu_1^2}, \frac{2\zeta^4}{\mu^2 - \mu_1^2}\right).
\end{equation}

Our analysis for \acronym yields the pair of variances, for $\lambda\zeta^2 \geq \frac{1}{2}$:
\begin{equation}
\left(\frac{\zeta^2(2 + 2\lambda\zeta^2 + \frac{1}{2\lambda\zeta^2})}{\mu^2}, \frac{\zeta^2(2 + 2\lambda\zeta^2 + \frac{1}{2\lambda\zeta^2})}{\lambda\mu^2}\right).
\end{equation}

We aim to show that for a given variance in the first component, our variance for the second component is smaller. Specifically, we need to prove:
\begin{equation}
\frac{\zeta^2(2 + 2\lambda\zeta^2 + \frac{1}{2\lambda\zeta^2})}{\lambda\mu^2} < \frac{2\zeta^4}{\mu^2 - \mu_1^2}, \quad \text{where} \quad \frac{4\zeta^2}{\mu_1^2} = \frac{\zeta^2(2 + 2\lambda\zeta^2 + \frac{1}{2\lambda\zeta^2})}{\mu^2}.
\end{equation}

By substituting $\frac{\mu^2}{\mu_1^2} = \frac{1}{2} + \frac{\lambda\zeta^2}{2} + \frac{1}{8\lambda\zeta^2}$ into the inequality, we obtain:
\begin{equation}
\frac{2\lambda\zeta^2 }{2 + 2\lambda\zeta^2 + \frac{1}{2\lambda\zeta^2}} > 1 - \frac{1}{\frac{1}{2} + \frac{\lambda\zeta^2}{2} + \frac{1}{8\lambda\zeta^2}}.
\end{equation}

Multiplying both sides by the denominator yields:
\begin{equation}
2\lambda\zeta^2 > 2 + 2\lambda\zeta^2 + \frac{1}{2\lambda\zeta^2} - 4.
\end{equation}

Simplifying, we find $\lambda\zeta^2 > \frac{1}{4}$,
which is satisfied by the initial assumption.
\end{proof}

\lemmaclippingvsjointmomentestimation*
\begin{proof}

We begin the proof with the following observation: $\lambda$-\acronym and CS introduce \textit{additive} noise to the first and second moments. Therefore, for the Frobenius approximation error, it is sufficient to compare just the variances of the noise introduced by those methods. We use an instance of a Gaussian mechanism, and the privacy guarantees are more appropriately characterized by the notion of Gaussian privacy. For the sake of the proof, we assume that $(\epsilon, \delta)$-DP is equivalent to $\mu$-GDP for a specific choice of $\mu$. The sensitivity of $\lambda$-\acronym depends on the dimensionality. Here, we assume $d > 1$ to address the hardest case, as $d = 1$ follows trivially.

We aim to show that Joint Moment Estimation (JME) introduces less noise to the $(x \otimes x)$ component than CS under the same privacy budget $\mu$-GDP. Specifically, we compare the variances in the second coordinate under the assumption that the noise variances in the first coordinate are equal.

The combined vector of $x_i$ and $x_i \otimes x_i$ can be represented as:
\begin{equation}
(x_i, \sqrt{\tau} x_i \otimes x_i),
\end{equation}
where $\tau > 0$ is a scaling parameter. If the input vectors $x_i$ are bounded by $\|x_i\|_2 \leq \zeta$, the resulting vector will have $l_2$ norm bounded by $\sqrt{\zeta^2 + \tau \zeta^4}$.
The squared sensitivity of the vector is therefore:
$4\zeta^2(1 + \tau \zeta^2)$.
This results in the following variances for the first and second coordinates under noise addition:
\begin{equation}
\left( \frac{4\zeta^2(1 + \tau \zeta^2)}{\mu^2}, \frac{4\zeta^2(1 + \tau \zeta^2)}{\tau \mu^2} \right).
\end{equation}

For \acronym, we analyze the variances and obtain the pair of variances as:
\begin{equation}
\left( \frac{\zeta^2(2 + 2\zeta^2\lambda + \frac{1}{2\zeta^2\lambda})}{\mu^2}, \frac{\zeta^2(2 + 2\zeta^2\lambda + \frac{1}{2\zeta^2\lambda})}{\lambda \mu^2} \right),
\end{equation}
where we assume \(\zeta^2 \lambda \geq \frac{1}{2}\).

To compare the noise introduced to the second component, we aim to show:
\begin{equation}
\frac{\zeta^2(2 + 2\zeta^2\lambda + \frac{1}{2\zeta^2\lambda})}{\lambda \mu^2} < \frac{4\zeta^2(1 + \tau \zeta^2)}{\tau \mu^2},
\end{equation}
under the condition that the variances in the first component are equal:
\begin{equation}
\frac{\zeta^2(2 + 2\zeta^2\lambda + \frac{1}{2\zeta^2\lambda})}{\mu^2} = \frac{4\zeta^2(1 + \tau \zeta^2)}{\mu^2}.
\end{equation}

Given the equality, inequality is equivalent to $\lambda > \tau$. Simplifying the equality in the first component gives:
\begin{equation}
2 + 2\zeta^2\lambda + \frac{1}{2\zeta^2\lambda} = 4 + 4\tau \zeta^2.
\end{equation}

Rearranging terms, we isolate \(\frac{1}{2\zeta^2\lambda}\):
\begin{equation}
\frac{1}{2\zeta^2\lambda} = 2 + 2(2\tau - \lambda)\zeta^2.
\end{equation}

From the assumption \(\zeta^2\lambda \geq \frac{1}{2}\), we know that \(\frac{1}{2\zeta^2\lambda} < 1\). For this inequality to hold, the term \(2 + 2(2\tau - \lambda)\zeta^2\) must also be less than 1. Therefore, \(2\tau - \lambda < 0\), therefore $\lambda > \tau$.

Thus, we conclude that for the same variance in the first component, JME introduces less noise variance to the second component compared to CS.
\end{proof}

\lemmaexpectedsecondmomentPostPr*
\begin{proof}
We aim to evaluate the expected squared Frobenius norm of the error:
\begin{equation}
\begin{aligned}
&\sup_{X \in \mathcal{X}}\mathbb{E}\| A_2 ((X + C_1^{-1}Z_1) \face (X + C_1^{-1}Z_1)) - A_2 (X \face X) \|_{\Fr}^2\\
&\quad =2 \underbrace{\sup_{X \in \mathcal{X}}\mathbb{E} \|A_2 (X \face C_1^{-1} Z_1)\|_{\Fr}^2}_{\mathcal S_1} + 2 \underbrace{\sup_{X \in \mathcal{X}}\mathbb{E}\langle A_2 (X \face C_1^{-1} Z_1), A_2 ( C_1^{-1} Z_1 \face X) \rangle_{\Fr}}_{\mathcal S_2}\\
&\qquad\qquad+ \underbrace{\mathbb{E} \|A_2 ((C_1^{-1} Z_1) \face (C_1^{-1} Z_1))\|_{\Fr}^2}_{\mathcal S_3}.
\end{aligned}
\end{equation}

We compute those terms separately. We add component-wise independent Gaussian noise $Z_1 \sim \mathcal{N}(0, \sigma^2)$ to the first moment, where $\sigma = 2\zeta \sigma_{\epsilon, \delta}$.

\paragraph{Step 1. Bound $\mathcal S_1$.} 
\begin{align}
\label{eq:S_1postprocessing}
\begin{split}
\mathcal S_1 &= \sup_{X \in \mathcal{X}}\mathbb{E} \|A_2 (X \face C_1^{-1} Z_1)\|_{\Fr}^2 \\
    &=  \sup_{X \in \mathcal{X}}\mathbb{E} \sum\limits_{k = 1}^{n}\sum\limits_{i, j}^{d} \left(\sum\limits_{t = 1}^{n} (A_2)_{k, t} X_{t, i} \sum\limits_{r = 1}^{n} (C_1^{-1})_{t, r} (Z_1)_{r, j}\right)^2\\
    &=  \sup_{X \in \mathcal{X}} \sum\limits_{k = 1}^{n}\sum\limits_{i, j}^{d} \sum\limits_{t_1, t_2}^{n} (A_2)_{k, t_1}(A_2)_{k, t_1} X_{t_1, i}X_{t_2, i} \sum\limits_{r = 1}^{n} (C_1^{-1})_{t_1, r}(C_1^{-1})_{t_2, r}  \mathbb{E}(Z_1)_{r, j}^2\\
    &= \sigma^2 \|C_1\|_{1\to2}^2\sup_{X \in \mathcal{X}} \sum\limits_{k = 1}^{n}\sum\limits_{i,j}^{d} \sum\limits_{t_1, t_2}^{n} (A_2)_{k, t_1}(A_2)_{k, t_2} X_{t_1, i}X_{t_2, i} \sum\limits_{r = 1}^{n} (C_1^{-1})_{t_1, r}(C_1^{-1})_{t_2, r}\\
    &= d\sigma^2 \|C_1\|_{1\to2}^2\sup_{X \in \mathcal{X}} \sum\limits_{t_1, t_2}^{n} \langle(A_2^\top)_{t_1}, (A_2^\top)_{t_2}\rangle \langle X_{t_1}, X_{t_2}\rangle  \langle(C_1^{-1})_{t_1}, (C_1^{-1})_{t_2} \rangle\\
    &= d\sigma^2 \|C_1\|_{1\to2}^2\sup_{X \in \mathcal{X}} \tr((A_2^TA_2 \circ Q)XX^T), 
\end{split}
\end{align}
where $Q = (C_1C_1^{T})^{-1}$. 

\paragraph{Step 2. Bound $\mathcal S_2$.} 
\begin{align}
\label{eq:S_2postprocessing}
\begin{split}
\mathcal S_2 &= \sup_{X \in \mathcal{X}}\mathbb{E}\langle A_2 (X \face C_1^{-1} Z_1), A_2 ( C_1^{-1} Z_1 \face X) \rangle_{\Fr} \\
&=  \sup_{X \in \mathcal{X}}\mathbb{E} \sum\limits_{k = 1}^{n}\sum\limits_{i, j}^{d} \left(\sum\limits_{t = 1}^{n} (A_2)_{k, t} X_{t, i} (C_1^{-1}Z_1)_{t, j} \right)\left(\sum\limits_{t = 1}^{n} (A_2)_{k, t} X_{t, j} (C_1^{-1}Z_1)_{t, i} \right)\\
&=  \sup_{X \in \mathcal{X}}\mathbb{E} \sum\limits_{k = 1}^{n}\sum\limits_{i, j}^{d} \sum\limits_{t_1, t_2}^{n} (A_2)_{k, t_1} X_{t_1, i} (C_1^{-1}Z_1)_{t_1, j}  (A_2)_{k, t_2} X_{t_2, j} (C_1^{-1}Z_1)_{t_2, i} \\
&= \sup_{X \in \mathcal{X}}\mathbb{E} \sum\limits_{k = 1}^{n}\sum\limits_{j = 1}^{d} \sum\limits_{t_1, t_2}^{n} (A_2)_{k, t_1} X_{t_1, j} (C_1^{-1}Z_1)_{t_1, j}  (A_2)_{k, t_2} X_{t_2, j} (C_1^{-1}Z_1)_{t_2, j} \\
&=\frac{1}{d}\sup_{X \in \mathcal{X}}\mathbb{E} \|A_2 (X \face C_1^{-1} Z_1)\|_{\Fr}^2 = \frac{S_1}{d}
\end{split}
\end{align}

\textbf{Step 3. Bounding $\mathcal S_3$}
We expand this term:
\begin{align}
\label{eq:S_3postprocessing}
\begin{split}
    \mathcal S_3 &= \mathbb{E} \|A_2 ((C_1^{-1} Z_1) \face (C_1^{-1} Z_1))\|_{\Fr}^2 \\
    &= \mathbb{E} \sum\limits_{k = 1}^{n}\sum\limits_{i,j}^{d} \left(\sum\limits_{t = 1}^{n}(A_2)_{k, t} (C_1^{-1}Z_1)_{t, i}(C_1^{-1}Z_1)_{t, j}\right)^2\\
    &= \mathbb{E} \sum\limits_{k = 1}^{n}\sum\limits_{i,j}^{d} \sum\limits_{t_1, t_2}^{n}(A_2)_{k, t_1}(A_2)_{k, t_2} (C_1^{-1}Z_1)_{t_1, i}(C_1^{-1}Z_1)_{t_1, j}(C_1^{-1}Z_1)_{t_2, i}(C_1^{-1}Z_1)_{t_2, j} \\
    &= \mathbb{E} \sum\limits_{k = 1}^{n}\sum\limits_{t_1, t_2}^{n}\sum\limits_{i,j}^{d} 
    (A_2)_{k, t_1}(A_2)_{k, t_2} (C_1^{-1}Z_1)_{t_1, i}(C_1^{-1}Z_1)_{t_1, j}(C_1^{-1}Z_1)_{t_2, i}(C_1^{-1}Z_1)_{t_2, j}\\
    &= \mathbb{E} \sum\limits_{k = 1}^{n}\sum\limits_{t_1, t_2}^{n} (A_2)_{k, t_1}(A_2)_{k, t_2} \sum\limits_{i,j}^{d} 
     (C_1^{-1}Z_1)_{t_1, i}(C_1^{-1}Z_1)_{t_1, j}(C_1^{-1}Z_1)_{t_2, i}(C_1^{-1}Z_1)_{t_2, j} \\
     &=  \sum\limits_{k = 1}^{n}\sum\limits_{t_1, t_2}^{n} (A_2)_{k, t_1}(A_2)_{k, t_2} \sum\limits_{i,j}^{d} \mathbb{E}  
     (C_1^{-1}Z_1)_{t_1, i}(C_1^{-1}Z_1)_{t_1, j}(C_1^{-1}Z_1)_{t_2, i}(C_1^{-1}Z_1)_{t_2, j}.
\end{split}
\end{align}

We now deal with two cases. When $i = j$, we compute:
\begin{align}
\begin{split}
\mathbb{E}&(C_1^{-1}Z_1)^2_{t_1, j}(C_1^{-1}Z_1)^2_{t_2, j}\\ 
&=\mathbb{E}\sum\limits_{r = 1}^{n} (C_1^{-1})^2_{t_1, r}(C_1^{-1})^2_{t_2, r}(Z_1)_{r, j}^4 + \mathbb{E}\sum\limits_{r_1\ne r_2}^{n} (C_1^{-1})^2_{t_1, r_1}(C_1^{-1})^2_{t_2, r_2}(Z_1)_{r_1, j}^2(Z_1)_{r_2, j}^2\\
&\quad + 2\mathbb{E}\sum\limits_{r_1\ne r_2}^{n} (C_1^{-1})_{t_1, r_1}(C_1^{-1})_{t_1, r_2}(C_1^{-1})_{t_2, r_1}(C_1^{-1})_{t_2, r_2}(Z_1)_{r_1, j}^2(Z_1)_{r_2, j}^2\\
&= 3\sigma^4 \|C_1\|_{1 \to 2}^4 \sum\limits_{r = 1}^{n} (C_1^{-1})^2_{t_1, r}(C_1^{-1})^2_{t_2, r} + \sigma^4 \|C_1\|_{1 \to 2}^4\sum\limits_{r_1 \ne r_2}^{n} (C_1^{-1})^2_{t_1, r_1}(C_1^{-1})^2_{t_2, r_2}\\
&\quad +2 \sigma^4 \|C_1\|_{1 \to 2}^4\sum\limits_{r_1\ne r_2}^{n} (C_1^{-1})_{t_1, r_1}(C_1^{-1})_{t_1, r_2}(C_1^{-1})_{t_2, r_1}(C_1^{-1})_{t_2, r_2}\\
&= \sigma^4 \|C_1\|_{1 \to 2}^4Q_{t_1, t_1} Q_{t_2, t_2} +2 \sigma^4 \|C_1\|_{1 \to 2}^4Q_{t_1, t_2}^2.
\end{split}
\label{eq:caseieqj}
\end{align}

On the other hand, when $i \ne j$, then 
\begin{align}
\begin{split}    
\mathbb{E}(C_1^{-1}Z_1)_{t_1, i}(C_1^{-1}Z_1)_{t_1, j}(C_1^{-1}Z_1)_{t_2, i}(C_1^{-1}Z_1)_{t_2, j} &= \sigma^4 \|C_1\|_{1 \to 2}^4 \left(\sum\limits_{r = 1}^{n}(C_1^{-1})_{t_1, r}(C_1^{-1})_{t_2, r}\right)^2\\
&= \sigma^4 \|C_1\|_{1 \to 2}^4 Q_{t_1, t_2}^2.
\end{split}
\label{eq:caseineqj}
\end{align}

Using equation~\eqref{eq:caseieqj} and equation~\eqref{eq:caseineqj} in equation~\eqref{eq:S_3postprocessing}, we get 
\begin{equation}
\begin{aligned}
     \mathcal S_3 &= \mathbb{E} \|A_2 ((C_1^{-1} Z_1) \face (C_1^{-1} Z_1))\|_{\Fr}^2\\
     &= d\sigma^4 \|C_1\|_{1 \to 2}^4\sum\limits_{k = 1}^{n} \sum\limits_{t_1, t_2}^{n}(A_2)_{k, t_1}(A_2)_{k, t_2} (Q_{t_1,t_1}Q_{t_2, t_2} + (d + 1)Q_{t_1, t_2}^2)\\
     &= d\sigma^4 \|C_1\|_{1 \to 2}^4\sum\limits_{k = 1}^{n} \sum\limits_{t_1, t_2}^{n}(A_2)_{k, t_1}(A_2)_{k, t_2} (Q_{t_1,t_1}Q_{t_2, t_2} + (d + 1)Q_{t_1, t_2}^2)\\
     &= d\sigma^4 \|C_1\|_{1 \to 2}^4 \tr (A_2^\top A_2E_Q) +d(d + 1)\sigma^4 \|C_1\|_{1 \to 2}^4 \tr (A_2^\top A_2(Q\circ Q)),
\end{aligned}
\end{equation}
where $E_Q = \diag(Q)\diag^{\top}(Q)$. 

Adding equation~\eqref{eq:S_1postprocessing} to equation~\eqref{eq:S_3postprocessing}, we obtain:
\begin{equation}
\begin{aligned}
\sup_{X \in \mathcal{X}}\mathbb{E}\|S - \widehat{S}_{\text{PP}}\|^2_{\Fr} &= d(d + 1)\sigma^4 \|C_1\|_{1 \to 2}^4 \cdot \tr(A_2^\top A_2 (Q \circ Q)) \\
    &\quad + 2(d + 1)\sigma^2 \|C_1\|_{1 \to 2}^2 \cdot \sup_{X \in \mathcal{X}} \tr((A_2^TA_2 \circ Q)XX^T)\\
&\quad+d\sigma^4 \|C_1\|_{1 \to 2}^4 \cdot \tr(A_2^\top A_2 E_Q)
\end{aligned}
\end{equation}

\textbf{Bias Correction.}

The expectation of $A_2 ((C_1^{-1}Z_1) \face (C_1^{-1}Z_1))]_{k, i, j}$ introduces a bias:
\begin{equation}
\begin{aligned}
\label{eq:PP_bias_term}
[\mathbb{E} A_2 ((C_1^{-1}Z_1) \face (C_1^{-1}Z_1))]_{k, i, j} &= \mathbb{E}\sum\limits_{t = 1}^{n} (A_2)_{k, t} \left(\sum\limits_{r =1}^{n}(C_1^{-1})_{t, r} (Z_1)_{r, i}\right)\left(\sum\limits_{r =1}^{n}(C_1^{-1})_{t, r} (Z_1)_{r, j}\right)\\
&= \sigma^2\delta_{i,j}\|C_1\|_{1\to2}^2\sum\limits_{t = 1}^{n}\sum\limits_{r =1}^{n} (A_2)_{k, t} (C_1^{-1})^2_{t, r} \\
&=\sigma^2\delta_{i,j}\|C_1\|_{1\to2}^2\sum\limits_{t = 1}^{n} (A_2)_{k, t} Q_{t, t}.
\end{aligned}
\end{equation}
The Frobenius norm of this bias is:
\begin{equation}
\begin{aligned}
\|\mathbb{E}A_2 ((C_1^{-1}Z_1) \face (C_1^{-1}Z_1))\|_{\Fr}^2 &= \sigma^4\|C_1\|_{1\to2}^4 \sum\limits_{k = 1}^{n} \sum\limits_{j = 1}^{d} \sum\limits_{t_1, t_2}^{n} (A_2)_{k, t_1}(A_2)_{k, t_2} Q_{t_1, t_1}Q_{t_2, t_2}\\
&= d\sigma^4 \|C_1\|_{1 \to 2}^4 \tr (A_2^\top A_2E_Q)
\end{aligned}
\end{equation}

If we subtract this bias from the estimate, it will increase the error by the aforementioned quantity due to the Frobenius norm of the bias but will decrease the error by two scalar products with the $A_2 ((C_1^{-1} Z_1) \face (C_1^{-1} Z_1))$ term:

\begin{equation}
    \mathbb{E}\langle A_2 ((C_1^{-1}Z_1) \face (C_1^{-1}Z_1)), \mathbb{E}A_2 ((C_1^{-1}Z_1) \face (C_1^{-1}Z_1)) \rangle = \|\mathbb{E}A_2 ((C_1^{-1}Z_1) \face (C_1^{-1}Z_1))\|_{\Fr}^2.
\end{equation}

Thus, we can eliminate the last term ($d\sigma^4 \|C_1\|_{1 \to 2}^4 \tr (A_2^\top A_2E_Q)$) in the error sum via bias correction. Substituting back $\sigma = 2\zeta \sigma_{\epsilon, \delta}$, we obtain the proposed equality.
\end{proof}

We collect some useful proposition that would be useful for our analysis.

\begin{proposition}
\label{prop:normofVZ}
    Let $V$ and $X$ be a given fixed matrix and $Z$ be a Gaussian matrix of appropriate dimension. Then 
    \begin{equation}
    \mathbb{E}_{Z}\|VZ \face VZ\|_{\Fr}^2 = d(d+2)\sigma^4 \sum\limits_{k = 1}^{n}  \left(\sum\limits_{t = 1}^{n} V_{k, t}^2\right)^2.
    \end{equation}    
\end{proposition}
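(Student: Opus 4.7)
The plan is to expand the left-hand side row by row, recognize each row's squared norm as a scaled chi-squared random variable, and then apply the standard second-moment formula for chi-squared.

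First I would unfold the face-splitting product. By definition, $(VZ \face VZ)_k = (VZ)_k \otimes (VZ)_k$, where $(VZ)_k$ denotes the $k$-th row of $VZ$. Since $\|a \otimes a\|_{\Fr}^2 = \|a\|_2^4$ for any vector $a$, this gives the identity
\begin{equation*}
\|VZ \face VZ\|_{\Fr}^2 = \sum_{k=1}^n \|(VZ)_k\|_2^4,
\end{equation*}
which reduces the problem to computing $\mathbb{E}\|(VZ)_k\|_2^4$ for each $k$ independently.

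Next I would analyze the distribution of each row. Writing $(VZ)_{k,j} = \sum_{t=1}^n V_{k,t} Z_{t,j}$, the columns $j=1,\dots,d$ of $(VZ)_k$ are independent Gaussians, each with variance $w_k := \sigma^2 \sum_{t=1}^n V_{k,t}^2$. Hence $\|(VZ)_k\|_2^2 = \sum_{j=1}^d (VZ)_{k,j}^2$ is distributed as $w_k \cdot U$ with $U \sim \chi^2_d$. The classical identity $\mathbb{E}[U^2] = \mathrm{Var}(U) + (\mathbb{E}U)^2 = 2d + d^2 = d(d+2)$ then yields
\begin{equation*}
\mathbb{E}\|(VZ)_k\|_2^4 = w_k^2 \, d(d+2) = d(d+2)\sigma^4 \Bigl(\sum_{t=1}^n V_{k,t}^2\Bigr)^2.
\end{equation*}

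Finally I would sum over $k$ and interchange expectation and summation, which is justified since all terms are nonnegative, to arrive at the claimed identity. No step seems to be a real obstacle here; if one prefers to avoid invoking the chi-squared second moment, the same computation can be carried out directly via Isserlis' theorem, splitting the fourth-moment expectation $\mathbb{E}[(VZ)_{k,i}(VZ)_{k,i}(VZ)_{k,j}(VZ)_{k,j}]$ into the three pairings and summing, which produces the factor $d^2 + 2d$ via the diagonal ($i=j$) versus off-diagonal cases, exactly as in the analogous computation of $\mathcal{S}_3$ in the proof of \Cref{lem:PostPr_Moments}.
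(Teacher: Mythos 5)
Your proof is correct and takes a genuinely different route from the paper. The paper's proof expands $\|VZ \face VZ\|_{\Fr}^2$ as a fourfold sum over $k, i, j, t$ and evaluates the Gaussian fourth moments directly by case analysis (the $i=j$ diagonal terms contribute $3\sigma^4$ factors, the $i\ne j$ off-diagonal terms contribute $\sigma^4$, plus cross terms in $t$), assembling the $d(d+2)$ factor by counting. You instead observe that the $k$-th row of $VZ \face VZ$ is $(VZ)_k \otimes (VZ)_k$, hence $\|VZ \face VZ\|_{\Fr}^2 = \sum_k \|(VZ)_k\|_2^4$, and that $\|(VZ)_k\|_2^2$ is $w_k$ times a $\chi^2_d$ variable with $w_k = \sigma^2 \sum_t V_{k,t}^2$, so the second moment identity $\E[(\chi^2_d)^2]=d(d+2)$ gives the result immediately. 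Your reduction is cleaner and more conceptual, isolating the probabilistic content into a single known identity, whereas the paper's computation is more mechanical but is also stylistically consistent with the neighboring term-by-term Isserlis-style calculations in the same appendix (notably the $\mathcal S_3$ bound in Lemma~\ref{lem:PostPr_Moments}), which you correctly identify as the fallback if one does not want to invoke the chi-squared moment. One minor cosmetic point: the statement of the proposition mentions a matrix $X$, but $X$ plays no role in either proof and does not appear in the conclusion; you are right to silently ignore it.
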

\begin{proof}
Recalling the Face-Splitting product, we have 
\begin{equation}
\begin{aligned}
    \mathbb{E}_{Z}\|VZ \face VZ\|_{\Fr}^2 &= \mathbb{E}_{Z}\sum\limits_{k = 1}^{n} \sum\limits_{i,j}^{d} (VZ)^2_{k,i}(VZ)_{k,j}^2 
    = \mathbb{E}_{Z}\sum\limits_{k = 1}^{n} \sum\limits_{i, j}^{d} \left(\sum\limits_{t = 1}^{n}V_{k,t}Z_{t, i}\right)^2\left(\sum\limits_{t = 1}^{n}V_{k,t}Z_{t, j}\right)^2\\
    &= 3\sigma^4 \sum\limits_{k = 1}^{n} \sum\limits_{i = 1}^{d}\sum\limits_{t = 1}^{n} V_{k, t}^4 + \sigma^4 \sum\limits_{k = 1}^{n} \sum\limits_{i \ne j}^{d} \left(\sum\limits_{t = 1}^{n} V_{k, t}^2\right)^2 +  3\sigma^4 \sum\limits_{k = 1}^{n} \sum\limits_{i =1}^{d} \sum\limits_{j \ne \ell}^{n} V_{k, j}^2V_{k, \ell}^2\\
    &= d(d+2)\sigma^4 \sum\limits_{k = 1}^{n}  \left(\sum\limits_{t = 1}^{n} V_{k, t}^2\right)^2,
\end{aligned}
\end{equation}
This completes the proof of the proposition.
\end{proof}

\begin{proposition}
\label{prop:normofVZVX}
    Let $V$ and $X$ be a given fixed matrix and $Z$ be a Gaussian matrix of appropriate dimension. Then 
    \begin{equation}
    \mathbb{E}_{Z} \langle(VZ) \face (VX), (VX) \face (VZ)\rangle = \frac{1}{d}\mathbb{E}_{Z}\|VZ_1 \face VX\|_{\Fr}^2.
    \end{equation}
\end{proposition}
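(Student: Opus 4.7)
The plan is a direct componentwise expansion using the definition $(A \face B)_{k,(i,j)} = A_{k,i} B_{k,j}$, followed by the elementary second-moment formula for Gaussians. First I would write out both sides. The inner product on the left expands to
\[
\langle (VZ) \face (VX), (VX) \face (VZ) \rangle_{\Fr} = \sum_{k=1}^{n} \sum_{i,j=1}^{d} (VZ)_{k,i} (VX)_{k,j} (VX)_{k,i} (VZ)_{k,j},
\]
while the squared Frobenius norm on the right expands to
\[
\|(VZ) \face (VX)\|_{\Fr}^2 = \sum_{k=1}^{n} \sum_{i,j=1}^{d} (VZ)_{k,i}^2 (VX)_{k,j}^2.
\]

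Next, since the entries of $Z$ are i.i.d. $\mathcal{N}(0, \sigma^2)$, writing $(VZ)_{k,i} = \sum_{t} V_{k,t} Z_{t,i}$ and applying the standard Gaussian second-moment identity gives
\[
\mathbb{E}_Z (VZ)_{k,i} (VZ)_{k,j} = \sigma^2 \delta_{i,j} \sum_{t=1}^{n} V_{k,t}^2.
\]
Applied to the left-hand side, the $(VX)$ factors are deterministic and can be pulled out of the expectation; the Kronecker delta $\delta_{i,j}$ then collapses the $(i,j)$ double sum onto its diagonal, yielding
\[
\mathbb{E}_Z \langle (VZ) \face (VX), (VX) \face (VZ) \rangle_{\Fr} = \sigma^2 \sum_{k=1}^{n} \Big(\sum_{t=1}^{n} V_{k,t}^2\Big) \sum_{i=1}^{d} (VX)_{k,i}^2.
\]
For the right-hand side, $\mathbb{E}_Z (VZ)_{k,i}^2 = \sigma^2 \sum_t V_{k,t}^2$ is independent of $i$, so summing over $i$ contributes an extra factor of $d$ and the expectations factor across $i$ and $j$:
\[
\mathbb{E}_Z \|(VZ) \face (VX)\|_{\Fr}^2 = d \sigma^2 \sum_{k=1}^{n} \Big(\sum_{t=1}^{n} V_{k,t}^2\Big) \sum_{j=1}^{d} (VX)_{k,j}^2.
\]
Dividing this by $d$ exactly reproduces the expression obtained for the left-hand side, establishing the identity.

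There is no serious obstacle; the entire argument is a two-line Gaussian moment computation. The factor $1/d$ appears precisely because in the cross-term only the $d$ ``diagonal'' index pairs $i = j$ survive (the rest are annihilated by $\delta_{i,j}$), whereas the squared norm receives nonzero contributions from all $d^2$ pairs $(i,j)$. The only care required is in tracking the row-wise indexing of the face-splitting product and in recognizing that the $(VX)$-factors, being deterministic, commute freely with $\mathbb{E}_Z$.
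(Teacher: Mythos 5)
Your proof is correct and follows essentially the same route as the paper: componentwise expansion of the face-splitting products followed by the elementary Gaussian second-moment identity, so that only the diagonal $i=j$ terms survive the expectation on the left while all $d^2$ pairs contribute on the right. The only cosmetic difference is that the paper collapses the cross-term by regrouping it into $\sum_k\bigl(\sum_i (VZ)_{k,i}(VX)_{k,i}\bigr)^2$ before taking the expectation, whereas you invoke $\mathbb{E}_Z[(VZ)_{k,i}(VZ)_{k,j}]=\sigma^2\delta_{i,j}\sum_t V_{k,t}^2$ directly to kill the off-diagonal terms; these are equivalent, and you also spell out the expectation of the right-hand side, which the paper leaves as an asserted identification.
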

\begin{proof}
The result follows using the following calculation:
\begin{equation}
\begin{aligned}
\mathbb{E}_{Z} \langle(VZ) \face (VX), (VX) \face (VZ)\rangle 
&= \mathbb{E}_{Z}\sum\limits_{k = 1}^{n} \sum\limits_{i,j}^{d} (VZ_1)_{k,i}(VX)_{k,j}(VZ_1)_{k,j}(VX)_{k,i}  \\
&= \mathbb{E}_{Z}\sum\limits_{k = 1}^{n} \left(\sum\limits_{i=1}^{d} (VZ_1)_{k,i}(VX)_{k,i}\right)^2\\
&=\sigma^2\sum\limits_{k = 1}^{n}  \sum\limits_{t = 1}^nV_{k,t}^2\sum\limits_{j=1}^{d}(VX)_{k,j}^2 = \frac{1}{d}\mathbb{E}_{Z}\|VZ_1 \face VX\|_{\Fr}^2
\end{aligned}
\end{equation}
completing the proof. 
\end{proof}

\theoremJMEwithbiascorrection*
\begin{proof}

We first recall that, if $Z \sim \mathcal N(\mu,\Sigma)$, then for any matrix $A$, we have $AZ \sim \mathcal N(A \mu, A \Sigma A^\top)$. This implies that, when $\Sigma=\mathbb I$ and $\mu=0$, we have 
\begin{equation}
\mathbb{E}_{Z}\|AZ\|_{\Fr}^2 = \|A\|_{\Fr}^2.
\end{equation}
Recall that $\widehat{\mu} = V(X + Z_1)$ is a running mean and $\widehat{\Sigma} = V(X \face X + \lambda^{-1/2} Z_2) - (V(X + Z_1) \face V(X + Z_1))$ is a running covariance matrix, with independent noise $Z_1, Z_2 \in \mathcal{N}(0, \sigma^2)^{n \times d^2}$, the clipping norm $\zeta=1$, and $\lambda = \lambda^{*} = c_d^{-1}$ as defined in equation~\eqref{def:crit_lambda}. Using the associativity of Face-splitting product and the Pythogorean theorem, we have  
\begin{equation}
\begin{aligned}
&\mathbb{E}_{Z_1, Z_2} \|V(X \face X + \lambda^{-1/2}Z_2) - (V(X+Z_1) \face V(X + Z_1)) - V(X\face X) + (VX)\face(VX)\|_{\Fr}^2\\
&\quad =\mathbb{E}_{Z_1, Z_2} \|\lambda^{-1/2}VZ_2 - (VZ_1) \face (VX) - (VX) \face (VZ_1) - (VZ_1) \face (VZ_1)) \|_{\Fr}^2\\
&\quad =\mathbb{E}_{Z_2}\|\lambda^{-1/2}VZ_2\|_{\Fr}^2 + 2\mathbb{E}_{Z_1}\| (VZ_1) \face (VX) \|_{\Fr}^2 + \mathbb{E}_{Z_1}\|(VZ_1) \face (VZ_1) \|_{\Fr}^2\\
&\qquad\qquad+ 2\mathbb{E}_{Z_1} \langle(VZ_1) \face (VX), (VX) \face (VZ_1)\rangle\\
&\quad = c_d\sigma^2 d^2 \|V\|_{\Fr}^2 + 2{\mathbb{E}_{Z_1}\| (VZ_1) \face (VX) \|_{\Fr}^2} + {\mathbb{E}_{Z_1}\|(VZ_1) \face (VZ_1) \|_{\Fr}^2}\\
&\qquad\qquad+ 2{\mathbb{E}_{Z_1} \langle(VZ_1) \face (VX), (VX) \face (VZ_1)\rangle} \\
\end{aligned}
\end{equation}

Using Proposition \ref{prop:normofVZ} and Proposition \ref{prop:normofVZVX}, we therefore have
\begin{align}
\begin{split}
\label{eq:simplying}
&\mathbb{E}_{Z_1, Z_2} \|V(X \face X + \lambda^{-1/2}Z_2) - (V(X+Z_1) \face V(X + Z_1)) - V(X\face X) + (VX)\face(VX)\|_{\Fr}^2 \\
&\quad =  c_d\sigma^2 d^2 \|V\|_{\Fr}^2 + 2 \left(1 + {1\over d} \right) {\mathbb{E}_{Z_1}\| (VZ_1) \face (VX) \|_{\Fr}^2}  +  d(d+2)\sigma^4 \sum\limits_{k = 1}^{n}  \left(\sum\limits_{t = 1}^{n} V_{k, t}^2\right)^2.  
\end{split}
\end{align}

Now using the properties of $V$, we have 
\begin{align}
\begin{split}
\label{eq:evaluatingV}    
\|V\|_{\Fr}^2 = \sum\limits_{k = 1}^{n}\sum\limits_{t = 1}^{n}V_{k,t}^2  = \sum\limits_{k = 1}^{n}\frac{1}{k} = H_{n,1} \quad \text{and} \quad 
\sum\limits_{k = 1}^{n}  \left(\sum\limits_{t = 1}^{n} V_{k, t}^2\right)^2 =  \sum\limits_{k = 1}^{n} \frac{1}{k^2} = H_{n,2}.
\end{split}
\end{align}

Using equation~(\ref{eq:evaluatingV}) in equation~(\ref{eq:simplying}), we get 
\begin{align}
\begin{split}
\label{eq:simplified}
&\mathbb{E}_{Z_1, Z_2} \|V(X \face X + \lambda^{-1/2}Z_2) - (V(X+Z_1) \face V(X + Z_1)) - V(X\face X) + (VX)\face(VX)\|_{\Fr}^2 \\
&\quad =   2 \left(1 + {1\over d} \right) {\mathbb{E}_{Z_1}\| (VZ_1) \face (VX) \|_{\Fr}^2} +  d(d+2)\sigma^4 \sum\limits_{k = 1}^{n} \frac{1}{k^2} + 2\sigma^2 d^2\sum\limits_{k = 1}^{n}\frac{1}{k} \\
&\quad =   2 \left(1 + {1\over d} \right) {\mathbb{E}_{Z_1}\| (VZ_1) \face (VX) \|_{\Fr}^2} +  d(d+2)\sigma^4 H_{n,2} + 2\sigma^2 d^2H_{n,1}.    
\end{split}
\end{align}

Let $H_{n,c}$ denote the generalized Harmonic sum, i.e., $H_{n,c} = \sum_{i=1}^n i^{-c}$.  
Therefore, we have  
\begin{align}
\begin{split}
\label{eq:simplifiedend}
&\mathbb{E}_{Z_1, Z_2} \|V(X \face X + \lambda^{-1/2}Z_2) - (V(X+Z_1) \face V(X + Z_1)) - V(X\face X) + (VX)\face(VX)\|_{\Fr}^2 \\
&\quad =   2 \left(1 + {1\over d} \right) \underbrace{\mathbb{E}_{Z_1}\| (VZ_1) \face (VX) \|_{\Fr}^2}_{S(X)}   +  \sigma^4 d(d+2) H_{n,2} + c_d\sigma^2 d^2H_{n,1}.  
\end{split}
\end{align}

Therefore to estimate $\sup_{X \in \mathcal{X}}\mathbb{E}\|\Sigma - \widehat{\Sigma}\|_{\Fr}^2$, it suffices to estimate $\sup_{X \in \mathcal{X}} S(X)$. We do it as follows:

\begin{equation}
\label{eq:supS(X)}
\begin{aligned}
    \sup_{X \in \mathcal{X}} S(X) &= \sup_{X \in \mathcal{X}}\mathbb{E}_{Z_1}\|VZ_1 \face VX\|_{\Fr}^2 = d\sigma^2\sup_{X \in \mathcal{X}}\sum\limits_{k = 1}^{n}\frac{1}{k^3} \sum\limits_{j=1}^{d}\left(\sum\limits_{t = 1}^{k}X_{t, j}\right)^2\\
    &= d\sigma^2\sup_{X \in \mathcal{X}}\sum\limits_{k = 1}^{n}\frac{1}{k^3} \sum\limits_{t_1, t_2}^{k}\langle X_{t_1, :}, X_{t_2, :}\rangle = d\sigma^2 \sum\limits_{k = 1}^{n}\frac{1}{k} = d \sigma^2 H_{n,1},
\end{aligned}
\end{equation}

Plugging equation \eqref{eq:supS(X)} in equation (\ref{eq:simplifiedend}), we get the bound for the biased estimate:
\begin{align}
\label{eq:jme_cov_biased_error}
\sigma^2(c_d d^2 + 2d + 2)H_{n,1} + \sigma^4 d(d+2) H_{n,2}.
\end{align}

Then we need to determine the bias term, all the first-order terms will result in $0$ as the expectation is over a zero mean distribution, so the only term that introduces the bias is
\begin{equation}
\label{eq:JME_cov_bias}
    (\mathbb{E}_{Z_1} (VZ_1) \face (VZ_1))_{k, i, j} = \mathbb{E}_{Z_1}(VZ_1)_{k, i}(VZ_1)_{k, j} = \sigma^2 \delta_{i=j}\sum\limits_{t = 1}^{n}V_{k, t}^2,
\end{equation}
where $\delta_{i=j} = \begin{cases}
    1 & i=j \\
    0 & \text{otherwise}
\end{cases}$ is the Dirac-delta function. 

Then the unbiased approximation error is 
\begin{equation}
\begin{aligned}
\mathbb{E}_{Z_1, Z_2} \|V(X \face X + Z_2) &- (V(X+Z_1) \face V(X + Z_1)) - V(X\face X) + (VX)\face(VX)\\
&+ \mathbb{E}_{Z_1} (VZ_1) \face (VZ_1)\|_{\Fr}^2 
\end{aligned}
\end{equation}
We have already computed the first three terms inside the expectation. We now compute the bias term error:
\begin{equation}
    \|\mathbb{E}_{Z_1} (VZ_1) \face (VZ_1)\|_{\Fr}^2 = \sigma^4\sum\limits_{k = 1}^{n}\sum\limits_{j = 1}^{d} \left(\sum\limits_{t = 1}^{n}V_{k, t}^2\right)^2 = d\sigma^4 \sum\limits_{k = 1}^{n} \frac{1}{k^2} = d \sigma^4 H_{n,2}. 
\end{equation}
Bias reduction procedure decreases the expected error of \eqref{eq:jme_cov_biased_error} by
\begin{equation}
\begin{aligned}
     -2\mathbb{E}_{Z_1} \langle (VZ_1) \face (VZ_1) , \mathbb{E}_{Z_1} (VZ_1) \face (VZ_1)\rangle + \|\mathbb{E}_{Z_1} (VZ_1) \face (VZ_1)\|_{\Fr}^2 &= d\sigma^4 \sum\limits_{k = 1}^{n} \left(\sum\limits_{t = 1}^{n}V_{k, t}^2\right)^2 \\
     &= d\sigma^4 \sum\limits_{k = 1}^{n} \frac{1}{k^2} = d\sigma^4 H_{n,2}.
\end{aligned}
\end{equation}

Resulting in the final error of: 
\begin{equation}
\begin{aligned}
    \sup_{X \in \mathcal{X}}\mathbb{E}\|\Sigma - \widehat{\Sigma}\|_{\Fr}^2 &= \sigma^2(c_d d^2 + 2d + 2)\sum\limits_{k = 1}^{n}\frac{1}{k} + d(d+1)\sigma^4 \sum\limits_{k = 1}^{n} \frac{1}{k^2} \\
    &= \sigma^2(c_d d^2 + 2d + 2)H_{n,1} + d(d+1)\sigma^4  H_{n,2}.
\end{aligned}
\end{equation}
This completes the proof of the theorem.
\end{proof}

\theoremPostPrwithbiascorrection*

\begin{proof}
Let us denote the covariance matrix estimated via Post-Processing (PP) \textbf{without} bias correction as $\widehat{\Sigma}^{b}_{\text{PP}}$. Then, the approximation error  has the following form:
\begin{align}
\begin{split}
\mathbb{E}\|\Sigma - \widehat{\Sigma}^{b}_{\text{PP}}\|_{\Fr}^2 = &\mathbb{E}_{Z_1} \|V((X + Z_1) \face (X + Z_1)) - (V(X+Z_1) \face V(X + Z_1)) - V(X\face X) + (VX)\face(VX)\|_{\Fr}^2\\
&\quad =\underbrace{\mathbb{E}_{Z_1}\|V(Z_1 \face Z_1)\|_{\Fr}^2}_{A_1}  - 2\underbrace{\mathbb{E}_{Z_1} \langle V(Z_1 \face Z_1), (VZ_1) \face (VZ_1)\rangle}_{A_2} + 2 \underbrace{\mathbb{E}_{Z_1}\|V(Z_1 \face X)\|_{\Fr}^2}_{A_3}  \\
&\hspace{2cm} + 2 \underbrace{\mathbb{E}_{Z_1}\langle V(Z_1 \face X), V(X \face Z_1)\rangle}_{A_4} 
 -4 \underbrace{\mathbb{E}_{Z_1}\langle V(Z_1 \face X), (VX) \face (VZ_1)\rangle}_{A_5}    \\
&\hspace{2cm} - 4\underbrace{\mathbb{E}_{Z_1}\langle V(Z_1 \face X), (VZ_1) \face (VX)\rangle}_{A_6}  +2\mathbb{E}_{Z_1}\| (VZ_1) \face (VX) \|_{\Fr}^2 \\
&\hspace{2cm}+ \mathbb{E}_{Z_1}\|(VZ_1) \face (VZ_1) \|_{\Fr}^2\\
&\hspace{2cm} +2\mathbb{E}_{Z_1} \langle(VZ_1) \face (VX), (VX) \face (VZ_1)\rangle
\end{split}
\label{eq:fullsumpostprocessing}
\end{align}
The last three terms in the above expression evaluates to equation~(\ref{eq:simplifiedend}). Therefore, in what follows, we bound
$A_1$ to $A_6$. 
\paragraph{Bounding $A_1$:}
\begin{align}
\begin{split}
    A_1 & = \mathbb{E}_{Z_1}\|V(Z_1 \face Z_1)\|_{\Fr}^2 = \mathbb{E}_{Z_1}\sum\limits_{k = 1}^{n}\sum\limits_{i, j}^{d} \left(\sum\limits_{t = 1}^{n} V_{k, t} (Z_1)_{t, i}(Z_1)_{t, j}\right)^2\\
    & = 3\sigma^4 \sum\limits_{k = 1}^{n} \sum\limits_{j = 1}^{d}\sum\limits_{t = 1}^{n}V_{k, t}^2 + \sigma^4 \sum\limits_{k = 1}^{n} \sum\limits_{j = 1}^{d}\sum\limits_{t_1 \ne t_2}^{n}V_{k, t_1}V_{k, t_2} + \sigma^4 \sum\limits_{k = 1}^{n} \sum\limits_{i \ne j }^{d}\sum\limits_{t = 1}^{n}V_{k, t}^2\\
    &= d(d + 1)\sigma^4 \sum\limits_{k = 1}^{n} \sum\limits_{t = 1}^{n} V_{k, t}^2 + d\sigma^4 \sum\limits_{k = 1}^{n} \left(\sum\limits_{t=1}^{n} V_{k, t}\right)^2\\
\end{split}
\label{eq:A_1}
\end{align}

\paragraph{Bounding $A_2$:}
\begin{align}
\begin{split}
    A_2 &= \mathbb{E}_{Z_1}\langle V(Z_1 \face Z_1), (VZ_1)\face (VZ_1)\rangle \\
    &= \mathbb{E}_{Z_1} \sum\limits_{k = 1}^{n} \sum\limits_{i,j}^{d} \left(\sum\limits_{t = 1}^{n} V_{k, t} (Z_1)_{t, i}(Z_1)_{t, j}\right) \left(\sum\limits_{t}V_{k, t}(Z_1)_{t, i}\right)\left(\sum\limits_{t}V_{k, t}(Z_1)_{t, j}\right)\\
    &= 3\sigma^4 \sum\limits_{k = 1}^{n}\sum\limits_{j = 1}^{d}\sum\limits_{t = 1}^{n} V_{k, t}^3 + \sigma^4 \sum\limits_{k = 1}^{n}\sum\limits_{i \ne j}^{d}\sum\limits_{t = 1}^{n} V_{k, t}^3 + \sigma^4 \sum\limits_{k = 1}^{n}\sum\limits_{j = 1}^{d}\sum\limits_{t_1 \ne t_2}^{n} V_{k, t_1}V_{k, t_2}^2\\
    &= d( d + 1)\sigma^4 \sum\limits_{k = 1}^{n}\sum\limits_{t = 1}^{n} V_{k, t}^3 + d\sigma^4 \sum\limits_{k = 1}^{n}\left(\sum\limits_{t = 1}^{n} V_{k, t}\right)\left(\sum\limits_{t = 1}^{n} V^2_{k, t}\right)
\end{split}
\label{eq:A_2}
\end{align}

\paragraph{Bounding $A_3$:}
\begin{align}
\label{eq:A_3}
    A_3 &= \mathbb{E}_{Z_1}\|V(Z_1 \face X)\|_{\Fr}^2 = \mathbb{E}_{Z_1}\sum\limits_{k = 1}^{n} \sum\limits_{i, j}^{d} \left(\sum\limits_{t = 1}^{n}V_{k, t} (Z_1)_{t, i}X_{t, j}\right)^2 = d\sigma^2 \sum\limits_{k = 1}^{n} \sum\limits_{t = 1}^{n} V_{k, t}^2 \sum\limits_{ j=1}^{d}X_{t, j}^2
\end{align}

\paragraph{Bounding $A_4$:}
\begin{align}
\begin{split}
    A_4 &=\mathbb{E}_{Z_1}\langle V(Z_1 \face X), V(X \face Z_1)\rangle \\
    &=  \mathbb{E}_{Z_1} \sum\limits_{k = 1}^{n}\sum\limits_{i, j}^d \left(\sum\limits_{t = 1}^{n} V_{k, t} (Z_1)_{t, i}X_{t, j}\right)\left(\sum\limits_{t = 1}^{n} V_{k, t} (Z_1)_{t, j}X_{t, i}\right) = \sigma^2\sum\limits_{k = 1}^{n}\sum\limits_{t = 1}^{n} V_{k, t}^2 \sum\limits_{j = 1}^{d} X_{t, j}^2
\end{split}
\label{eq:A_4}
\end{align}

\paragraph{Bounding $A_5$:}
\begin{align}
\begin{split}
    A_5 &= \mathbb{E}_{Z_1}\langle V(Z_1 \face X), (VX) \face (VZ_1)\rangle \\
    &= \mathbb{E}_{Z_1}\sum\limits_{k = 1}^{n}\sum\limits_{i, j}^{d} \left(\sum\limits_{t = 1}^{n} V_{k, t} (Z_1)_{t, i}X_{t, j}\right)\left(\sum\limits_{t = 1}^{n} V_{k, t} X_{t, i}\right)\left(\sum\limits_{t = 1}^{n} V_{k, t} (Z_1)_{t, j}\right)\\
    &= \sigma^2 \sum\limits_{k = 1}^{n}\sum\limits_{j = 1}^{d} \left(\sum\limits_{t = 1}^{n}V_{k, t}^2X_{t, j}\right)\left(\sum\limits_{t = 1}^{n} V_{k, t} X_{t, j}\right) 
\end{split}
\label{eq:A_5}
\end{align}

\paragraph{Bounding $A_6$:}
\begin{align}
\begin{split}
    A_6 &= \mathbb{E}_{Z_1}\langle V(Z_1 \face X), (VZ_1) \face (VX)\rangle \\
    &= \mathbb{E}_{Z_1}\sum\limits_{k = 1}^{n}\sum\limits_{i, j}^{d} \left(\sum\limits_{t = 1}^{n} V_{k, t} (Z_1)_{t, i}X_{t, j}\right)\left(\sum\limits_{t = 1}^{n} V_{k, t} (Z_1)_{t, i}\right)\left(\sum\limits_{t = 1}^{n} V_{k, t} X_{t, j}\right)\\
    &= d\sigma^2 \sum\limits_{k = 1}^{n}\sum\limits_{j=1}^{d}\left(\sum\limits_{t = 1}^{n} V_{k,t}^2X_{t, j}\right) \left(\sum\limits_{t = 1}^{n} V_{k, t} X_{t, j}\right)
\end{split}
\label{eq:A_6}
\end{align}

Plugging equation~(\ref{eq:A_1}) to equation~(\ref{eq:A_6}) in equation~(\ref{eq:fullsumpostprocessing}), we get 
\begin{equation}
\begin{aligned}
   \mathbb{E}\|\Sigma - \widehat{\Sigma}^{b}_{\text{PP}}\|_{\Fr}^2 &= d(d + 1)\sigma^4 \sum\limits_{k = 1}^{n} \sum\limits_{t = 1}^{n} V_{k, t}^2 + d\sigma^4 \sum\limits_{k = 1}^{n} \left(\sum\limits_{t=1}^{n} V_{k, t}\right)^2 -2d( d + 1)\sigma^4 \sum\limits_{k = 1}^{n}\sum\limits_{t = 1}^{n} V_{k, t}^3 \\
    &\quad - 2d\sigma^4 \sum\limits_{k = 1}^{n}\left(\sum\limits_{t = 1}^{n} V_{k, t}\right)\left(\sum\limits_{t = 1}^{n} V^2_{k, t}\right) \\
    &\quad +2(d + 1)\sigma^2\sum\limits_{k = 1}^{n}\sum\limits_{t = 1}^{n} V_{k, t}^2 \sum\limits_{j = 1}^{d} X_{t, j}^2\\ 
    &\quad- 4(d + 1)\sigma^2 \sum\limits_{k = 1}^{n}\sum\limits_{j=1}^{d}\left(\sum\limits_{t = 1}^{n} V_{k,t}^2X_{t, j}\right) \left(\sum\limits_{t = 1}^{n} V_{k, t} X_{t, j}\right)\\
    &\quad + 2(d + 1)\sigma^2\sum\limits_{k = 1}^{n}\sum\limits_{t=1}^{n}V_{k,t}^2 \sum\limits_{j=1}^{d} (VX)_{k,j}^2  + d(d + 2)\sigma^4 \sum\limits_{k = 1}^{n}  \left(\sum\limits_{t = 1}^{n} V_{k, t}^2\right)^2
\end{aligned}
\end{equation}

Recalling that the matrix $V$ is the \emph{averaging} workload matrix $V=(a^t_i)$ with $a^t_i=\frac{1}{t}$ for $1\leq i\leq t$ and $a^t_i=0$ otherwise, we get 
\begin{equation}
\begin{aligned}
     \mathbb{E}\|\Sigma - \widehat{\Sigma}^{b}_{\text{PP}}\|_{\Fr}^2 &= d(d + 1)\sigma^4 \sum\limits_{k = 1}^{n} \frac{1}{k} + dn\sigma^4  -2d( d + 1)\sigma^4 \sum\limits_{k = 1}^{n}\frac{1}{k^2}- 2d\sigma^4 \sum\limits_{k = 1}^{n}\frac{1}{k} \\
    &\quad +2(d + 1)\sigma^2\sum\limits_{k = 1}^{n}\sum\limits_{t = 1}^{k} \frac{1}{k^2} \sum\limits_{j = 1}^{d} X_{t, j}^2 - 4(d + 1)\sigma^2 \sum\limits_{k = 1}^{n}\sum\limits_{j=1}^{d}\left(\sum\limits_{t = 1}^{k} \frac{1}{k^2}X_{t, j}\right) \left(\sum\limits_{t = 1}^{k} \frac{1}{k} X_{t, j}\right)\\
    &\quad + 2(d + 1)\sigma^2\sum\limits_{k = 1}^{n}\frac{1}{k} \sum\limits_{j=1}^{d} \left(\sum\limits_{t = 1}^{k}\frac{1}{k}X_{t, j}\right)^2  + d(d + 2)\sigma^4 \sum\limits_{k = 1}^{n} \frac{1}{k^2}\\
    &= d(d - 1)\sigma^4 \sum\limits_{k = 1}^{n} \frac{1}{k} + dn\sigma^4  -d^2\sigma^4 \sum\limits_{k = 1}^{n}\frac{1}{k^2} \\
    & \quad +2(d + 1)\sigma^2 \underbrace{\sum\limits_{k = 1}^{n}\frac{1}{k^2}\left[\sum\limits_{t = 1}^{k} \langle X_{t, :},X_{t, :}\rangle - \frac{1}{k} \sum\limits_{t_1, t_2}^{k}\langle X_{t_1, :},X_{t_1, :}\rangle\right] }_{T_n(X)} 
\end{aligned}
\end{equation}

Since every term except $T_n(X)$ is independent of $X$, to compute both upper and lower bounds on $\sup_{X \in \mathcal{X}}\mathbb{E}\|\Sigma - \widehat{\Sigma}^{b}_{\text{PP}}\|_{\Fr}^2$, it suffices to bound the supremum of the inner difference over $X$:
\begin{equation}
\begin{aligned}
     T_n := \sup_{X \in \mathcal{X}} T_n(X) \sup_{X \in \mathcal{X}}\sum\limits_{k = 1}^{n}\frac{1}{k^2}\left[\sum\limits_{t = 1}^{k} \langle X_{t, :},X_{t, :}\rangle - \frac{1}{k} \underbrace{\sum\limits_{t_1, t_2}^{k}\langle X_{t_1, :},X_{t_2, :}}_{=\left\|\sum_{t} X_{t}\right\|^2_2 \geq 0}\rangle\right].
\end{aligned}
\end{equation}

Since the double sum is non-negative, this leads to a trivial upper bound  $T_n \le \sum\limits_{k = 1}^n \frac{1}{k}=H_{n,1}$. For the lower bound, consider $X_i = (-1)^i e_1$. In this case, $\|X_i\|_2 = 1$, but $\left\|\sum_{t} X_{t}\right\|^2_2 \leq 1$, so $\sum\limits_{k = 1}^n \left(\frac{1}{k} - \frac{1}{k^3}\right) \leq  T_n$. Therefore, 
\begin{equation}
H_{n,1} - H_{n,3} \leq  T_n \leq  H_{n,1}.
\end{equation}
 
Therefore, we have the following bounds for the approximation error \textbf{without} a bias correction:
\begin{align}
     \sup_{X \in \mathcal{X}}\mathbb{E}\|\Sigma - \widehat{\Sigma}^{b}_{\text{PP}}\|_{\Fr}^2 &\le d(d - 1)\sigma^4 \sum\limits_{k = 1}^{n} \frac{1}{k} + dn\sigma^4  -d^2\sigma^4 \sum\limits_{k = 1}^{n}\frac{1}{k^2} + 2(d + 1)\sigma^2 \sum\limits_{k = 1}^{n}\frac{1}{k}\\
    \sup_{X \in \mathcal{X}}\mathbb{E}\|\Sigma - \widehat{\Sigma}^{b}_{\text{PP}}\|_{\Fr}^2&\ge d(d - 1)\sigma^4 \sum\limits_{k = 1}^{n} \frac{1}{k} + dn\sigma^4  -d^2\sigma^4 \sum\limits_{k = 1}^{n}\frac{1}{k^2} + 2(d + 1)\sigma^2 \sum\limits_{k = 1}^{n}\frac{1}{k} - 2(d + 1)\sigma^2 \sum\limits_{k = 1}^{n}\frac{1}{k^3},
\end{align}
Now we will determine the bias term; let $\delta_{i=j}$ denote the Dirac-delta function, then the bias of $\widehat\Sigma_{\text{PP}}$ is 

\begin{equation}
\label{eq:PP_cov_bias}
    (\mathbb{E}_{Z_1} (V(Z_1 \face Z_1) - \mathbb{E}_{Z_1} (VZ_1) \face (VZ_1))_{k, i, j} = \sigma^2 \delta_{i=j}\sum\limits_{t = 1}^{n}V_{k, t} - V_{k, t}^2 
\end{equation}

We also have the following set of equalities when $Z_1$ is a Gaussian matrix. 
\begin{align}
    \|\mathbb{E}_{Z_1} (VZ_1) \face (VZ_1)\|_{\Fr}^2 &= \sigma^4\sum\limits_{k = 1}^{n}\sum\limits_{j = 1}^{d} \left(\sum\limits_{t = 1}^{n}V_{k, t}^2\right)^2 = d\sigma^4 \sum\limits_{k = 1}^{n} \frac{1}{k^2}\\
    \|\mathbb{E}_{Z_1} V(Z_1 \face Z_1)\|_{\Fr}^2 &= \sigma^4\sum\limits_{k = 1}^{n}\sum\limits_{j = 1}^{d} \left(\sum\limits_{t = 1}^{n}V_{k, t}\right)^2 = dn\sigma^4 \\
    \langle \mathbb{E}_{Z_1} (VZ_1) \face (VZ_1) , \mathbb{E}_{Z_1} V(Z_1 \face Z_1 )\rangle_{\Fr} &= d\sigma^4 \sum\limits_{k = 1}^{n} \left(\sum\limits_{t = 1}^{n}V_{k, t}^2\right)\left(\sum\limits_{t = 1}^{n}V_{k, t}\right) = d\sigma^4 \sum\limits_{k = 1}^{n} \frac{1}{k}
\end{align}

To remove the bias from the error we would need to add the following terms:
\begin{equation}
\begin{aligned}
    &\|\mathbb{E}_{Z_1} V(Z_1 \face Z_1)\|_{\Fr}^2 + \|\mathbb{E}_{Z_1} (VZ_1) \face (VZ_1)\|_{\Fr}^2 - 2\langle \mathbb{E}_{Z_1} (VZ_1) \face (VZ_1) , \mathbb{E}_{Z_1} V(Z_1 \face Z_1 )\rangle \\
    &\quad - 2\mathbb{E}_{Z_1} \langle (VZ_1) \face (VZ_1) , \mathbb{E}_{Z_1} (VZ_1) \face (VZ_1)\rangle_{\Fr} + 4\mathbb{E}_{Z_1} \langle (VZ_1) \face (VZ_1) , \mathbb{E}_{Z_1} V(Z_1 \face Z_1 )\rangle \\
    &\quad -2\mathbb{E}_{Z_1} \langle V(Z_1 \face Z_1) , \mathbb{E}_{Z_1} V(Z_1 \face Z_1 )\rangle_{\Fr}\\
    &= -\|\mathbb{E}_{Z_1} V(Z_1 \face Z_1)\|_{\Fr}^2 - \|\mathbb{E}_{Z_1} (VZ_1) \face (VZ_1)\|_{\Fr}^2 + 2\langle \mathbb{E}_{Z_1} (VZ_1) \face (VZ_1) , \mathbb{E}_{Z_1} V(Z_1 \face Z_1 )\rangle \\
    &=-d\sigma^4\sum\limits_{k = 1}^{n} \left(\sum\limits_{t = 1}^{n}V_{k, t}\right)^2 -d\sigma^4 \sum\limits_{k = 1}^{n} \left(\sum\limits_{t = 1}^{n}V_{k, t}^2\right)^2 + 2d\sigma^4 \sum\limits_{k = 1}^{n} \left(\sum\limits_{t = 1}^{n}V_{k, t}^2\right)\left(\sum\limits_{t = 1}^{n}V_{k, t}\right) \\
    &= -dn\sigma^4 -d\sigma^4 \sum\limits_{k = 1}^{n} \frac{1}{k^2} + 2d\sigma^4\sum\limits_{k = 1}^{n}\frac{1}{k} = -dn\sigma^4 -d\sigma^4 H_{n,2} + 2d\sigma^4 H_{n,1}.
\end{aligned}
\end{equation}

Combining everything together, we obtain:
\begin{equation}
\begin{aligned}
   \mathbb{E}\|\Sigma - \widehat{\Sigma}_{\text{PP}}\|_{\Fr}^2 &= d(d + 1)\sigma^4 \sum\limits_{k = 1}^{n} \sum\limits_{t = 1}^{n} V_{k, t}^2  -2d( d + 1)\sigma^4 \sum\limits_{k = 1}^{n}\sum\limits_{t = 1}^{n} V_{k, t}^3 \\
    &\quad +2(d + 1)\sigma^2\sum\limits_{k = 1}^{n}\sum\limits_{t = 1}^{n} V_{k, t}^2 \sum\limits_{j = 1}^{d} X_{t, j}^2\\
    &\quad - 4(d + 1)\sigma^2 \sum\limits_{k = 1}^{n}\sum\limits_{j=1}^{d}\left(\sum\limits_{t = 1}^{n} V_{k,t}^2X_{t, j}\right) \left(\sum\limits_{t = 1}^{n} V_{k, t} X_{t, j}\right)\\
    &\quad + 2(d + 1)\sigma^2\sum\limits_{k = 1}^{n}\sum\limits_{t=1}^{n}V_{k,t}^2 \sum\limits_{j=1}^{d} (VX)_{k,j}^2  + d(d + 1)\sigma^4 \sum\limits_{k = 1}^{n}  \left(\sum\limits_{t = 1}^{n} V_{k, t}^2\right)^2
\end{aligned}
\end{equation}

Therefore,
\begin{align}
     \sup_{X \in \mathcal{X}}\mathbb{E}\|\Sigma - \widehat{\Sigma}_{\text{PP}}\|_{\Fr}^2 &\le d(d + 1)\sigma^4 H_{n, 1} -d(d + 1)\sigma^4 H_{n, 2} + 2(d + 1)\sigma^2 H_{n, 1} \quad \text{and} \\
     \sup_{X \in \mathcal{X}}\mathbb{E}\|\Sigma - \widehat{\Sigma}_{\text{PP}}\|_{\Fr}^2 &\ge d(d + 1 )\sigma^4 H_{n, 1} -d(d + 1)\sigma^4 H_{n, 2} + 2(d + 1)\sigma^2 H_{n, 1} - 2(d + 1)\sigma^2 H_{n, 3},
\end{align}
which concludes the proof.
\end{proof}

\lemmadimensionreduction*
\begin{proof}
We begin by selecting indices $i$ and $j$  such that the corresponding components $x_i, x_j$ from $x$ and $y_i, y_j$ from $y$ satisfy $(x_i^2 - y_i^2)(x_j^2 - y_j^2) \geq 0$. We can always find such indices because, by the pigeonhole principle for $d \geq 3$, there will be pairs where either both $x_i^2 \geq y_i^2$ and $x_j^2 \geq y_j^2$, or both $x_i^2 \leq y_i^2$ and $x_j^2 \leq y_j^2$. We can compare the impact of these components on the sum with the values $\sqrt{x_i^2 + x_j^2}$ and $-\sqrt{y_i^2 + y_j^2}$, which correspond to vectors in a lower dimension. Consider the difference in the objective function and $f(x_,i,y_i)$ defined below:

\begin{align}
f(x_i,y_i)&:=\left(\sqrt{x_i^2 + x_j^2} + \sqrt{y_i^2 + y_j^2}\right)^2 + \lambda \left(x_i^2 + x_j^2 - y_i^2 - y_j^2\right)^2 \\
g(x_i,y_i) &:= (x_i - y_i)^2 + (x_j - y_j)^2 + \lambda(x_i^2 - y_i^2)^2 + \lambda (x_j^2 - y_j^2)^2.
\end{align}

Note that $f(x_i,y_i)$ is the objective function. Now, after algebraic manipulation, we get 

\begin{align}
f(x_i,y_i) - g(x_i,y_i) &=     2 \sqrt{x_i^2 + x_j^2} \sqrt{y_i^2 + y_j^2} + \lambda (x_i^2 - y_i^2)(x_j^2 - y_j^2) + 2x_iy_i + 2x_jy_j \\
&\geq \lambda (x_i^2 - y_i^2)(x_j^2 - y_j^2) \geq 0,
\end{align}
where the first inequality follows from the Cauchy-Schwarz inequality and the second inequality is from the assumption that $(x_i^2 - y_i^2)(x_j^2 - y_j^2) \ge 0$. 

For this lemma, $d = 2$ is indeed a special case since it is possible to find $x_1^2 > y_1^2 $ and $x_2^2 < y_2^2$, for which the dimension reduction argument would not work.
\end{proof}

\lemmaforkequaltwo*
\begin{proof}

First, we consider the effect of the signs of the components of $x$ and $y$. Multiplying both $x_i$ and $y_i$ by $-1$ does not change the value of the expression. If $x_i$ and $y_i$ have the same sign, then by flipping the sign of one of them, we increase the difference $\|x - y\|_2$, while $\|x \circ x - y \circ y\|_2$ remains unchanged. Therefore, without loss of generality, we can assume $x = (x_1, x_2)$ and $y = (-y_1, -y_2)$ for positive $x_i$ and $y_i$. Next, note that for a fixed difference $\|x - y\|_2$, the functional increases as the sum $x_1 + y_1$ or $x_2 + y_2$ increases. Thus, we can assume $\|x\|_2 = 1$, so $x_2 = \sqrt{1 - x_1^2}$. The norm of $y$, however, can be different. Therefore, we look for a solution of the form $x = (x_1, \sqrt{1 - x_1^2})$ and $y = (-y_1, -y_2)$.

We now consider the functional in the statement of the lemma in the following form:
\begin{equation}
    \mathcal{L}_\lambda(x_1, y_1, y_2) = (x_1 + y_1)^2 + \left(\sqrt{1 - x_1^2} + y_2\right)^2 + \lambda \left(x_1^2 - y_1^2\right)^2 + \lambda \left(1 - x_1^2 - y_2^2\right)^2.
\end{equation}

We now aim to prove that $\sup \mathcal{L}_\lambda(x_1, y_1, y_2) = r_2^{\text{diag}}(\lambda)$ in the constrained domain $y_1 \geq 0$, $y_2 \geq 0$, $y_1^2 + y_2^2 \leq 1$, $0 \leq x_1 \leq 1$. This analysis involves considering up to twenty-four different scenarios for optimization with boundaries, which, due to symmetry, can be reduced to five distinct cases.

\subsubsection*{\textbf{Case I: $y_1^2 + y_2^2 = 1$, $y_1 > 0$, $y_2 > 0$.}}

We can compute $y_2 = \sqrt{1 - y_1^2}$, then the optimization functional is 
\begin{align}
    \mathcal{L}_\lambda\left(x_1, y_1, \sqrt{1 - y_1^2}\right) &= (x_1 + y_1)^2 + \left(\sqrt{1 - x_1^2} + \sqrt{1 - y_1^2}\right)^2 + \lambda \left(x_1^2 - y_1^2\right)^2 + \lambda \left(y_1^2 - x_1^2 \right)^2\\
    &= 2 + 2x_1y_1 + 2 \sqrt{1 - x_1^2}\sqrt{1 - y_1^2} + 2\lambda \left(x_1^2 - y_1^2\right)^2.
\end{align}

The cases where $x_1 = 0$ or $x_1 = 1$ with $\|y\| = 1$ are equivalent to the scenario where $\|x\| = 1$ and $y_1 = 0$, or $y_1 = 1$, which we will consider later. For now, we proceed by computing the derivatives with respect to $y_1$ and $x_1$:
\begin{align}
    \begin{split}
        \frac{\partial \mathcal{L}_\lambda\left(x_1, y_1, \sqrt{1 - y_1^2}\right)}{\partial y_1} = 2x_1 - \frac{2y_1\sqrt{1 - x_1^2}}{\sqrt{1 - y_1^2}} + 4y_1\lambda \left(y_1^2 - x_1^2\right) = 0, \\
        \frac{\partial \mathcal{L}_\lambda\left(x_1, y_1, \sqrt{1 - y_1^2}\right)}{\partial x_1} = 2y_1 - \frac{2x_1\sqrt{1 - y_1^2}}{\sqrt{1 - x_1^2}} + 4x_1\lambda \left(x_1^2 - y_1^2\right) = 0.
    \end{split}
\end{align}

We transform this system by summing and subtracting the equalities:
\begin{align}
    \begin{split}
        \left(x_1 + y_1\right) - \frac{y_1 - y_1x_1^2 + x_1 - x_1y_1^2}{\sqrt{1 - y_1^2}\sqrt{1 - x_1^2}} + 4\lambda \left(x_1^3 + y_1^3\right) - 4\lambda y_1 x_1 \left(y_1 + x_1\right) = 0, \\
        \left(x_1 - y_1\right) - \frac{y_1 - y_1x_1^2 - x_1 + x_1y_1^2}{\sqrt{1 - y_1^2}\sqrt{1 - x_1^2}} + 4\lambda \left(y_1^3 - x_1^3\right) - 4\lambda y_1 x_1 \left(x_1 - y_1\right) = 0.
    \end{split}    
\end{align}

$x_1 + y_1 = 0$ is not a solution under the constraints we are solving. However, $x_1 = y_1$ is a solution that gives $\mathcal{L}_\lambda(x_1, x_1, \sqrt{1 - x_1^2}) = 4$ for any $\lambda$. From now on, consider $y_1 \neq x_1$; then we can divide by the difference, leading to the system:
\begin{align}
    \begin{split}
        1 - \frac{1 - x_1y_1}{\sqrt{1 - y_1^2}\sqrt{1 - x_1^2}} + 4\lambda \left(x_1 - y_1\right)^2 = 0, \\
        1 + \frac{1 + x_1y_1}{\sqrt{1 - y_1^2}\sqrt{1 - x_1^2}} - 4\lambda \left(x_1 + y_1\right)^2 = 0.
    \end{split}    
\end{align}

We  consider again the sum and difference to get:
\begin{align}
    \begin{split}
        2 + \frac{2x_1y_1}{\sqrt{1 - y_1^2}\sqrt{1 - x_1^2}} - 16\lambda x_1 y_1 = 0, \\
        -\frac{2}{\sqrt{1 - y_1^2}\sqrt{1 - x_1^2}} + 8\lambda \left(x_1^2 + y_1^2\right) = 0.
    \end{split}    
\end{align}

We solve it for $\lambda$ to get the following equation:
\begin{align}
    \frac{1}{4\left(x_1^2 + y_1^2\right) \sqrt{1 - y_1^2}\sqrt{1 - x_1^2}} &= \frac{1}{8x_1y_1} + \frac{1}{8\sqrt{1 - y_1^2}\sqrt{1 - x_1^2}}.
\end{align}

We transform it into the form
\begin{equation}
    \frac{2x_1y_1}{x_1^2 + y_1^2} - x_1y_1 = \sqrt{1 - y_1^2 - x_1^2 + x_1^2y_1^2}.
\end{equation}

By squaring both sides and subtracting $x_1^2y_1^2$, we obtain

\begin{equation}
    \frac{4x_1^2y_1^2}{(x_1^2 + y_1^2)^2}(1 - x_1^2 - y_1^2) = 1 - x_1^2 - y_1^2.
\end{equation}

So, either $x_1^2 + y_1^2 = 1$, which implies $y_1 = \sqrt{1 - x_1^2}$, or $x_1^2 + y_1^2 = 2x_1y_1$, which implies $x_1 = y_1$, which we have already discussed. We have found another potentially optimal point $y_1 = \sqrt{1 - x_1^2}$, which we will further investigate. We now consider it as a function of one variable $x_1$:

\begin{equation}
    \mathcal{L}_\lambda\left(x_1, \sqrt{1 - x_1^2}, x_1\right)  = 2 \left(x_1 + \sqrt{1 - x_1^2}\right)^2 + 2\lambda \left(1 - 2x_1^2\right)^2.
\end{equation}

Then the optimum is either $x_1 = 0$, $x_1 = 1$ with the value $4$, or when the derivative is $0$:

\begin{align}
    \frac{\partial \mathcal{L}_\lambda\left(x_1, \sqrt{1 - x_1^2}, x_1\right)}{\partial x_1} &= 4\sqrt{1 - x_1^2} - \frac{4x_1^2}{\sqrt{1 - x_1^2}} - 16\lambda x_1(1-2x_1^2)\\
    &=  \frac{4(1 - 2x_1^2)\left(1 - 4\lambda x_1 \sqrt{1 - x_1^2}\right)}{\sqrt{1 - x_1^2}} = 0.
\end{align}

The first term gives $x_1 = \frac{1}{\sqrt{2}}$, which results in $\mathcal{L}_\lambda = 4$; otherwise, $4\lambda x_1 \sqrt{1 - x_1^2} = 1$, which we can solve by first denoting $x_1^2$
 as a new variable in the equation  $-x_1^4 + x_1^2 - \frac{1}{16\lambda^2} = 0$. Using the quadratic formula, we find the optimal value $x_1^*$, which is given by:

\begin{equation}
    x_1^* = \sqrt{\frac{1}{2} + \frac{1}{2}\sqrt{1 - \frac{1}{4\lambda^2}}}, \quad \sqrt{1 - (x_1^*)^2} = \sqrt{\frac{1}{2} - \frac{1}{2}\sqrt{1 - \frac{1}{4\lambda^2}}}.
\end{equation}

This solution exists only when $\lambda \ge \frac{1}{2}$. Substituting this root back into the function gives us

\begin{equation}
    \mathcal{L}_\lambda\left(x_1^*, \sqrt{1 - (x_1^*)^2}, x_1^*\right) = 2 + \underbrace{4x_1^*\sqrt{1 - (x_1^*)^2}}_{=\frac{1}{\lambda}} + 2\lambda + \underbrace{8\lambda ((x_1^*)^4 - (x_1^*)^2)}_{=-\frac{1}{2\lambda}} = 2 + 2\lambda + \frac{1}{2\lambda}  \geq 4,
\end{equation}
which constitutes the function $r_2^{\text{diag}}(\lambda)$.

\subsubsection*{\textbf{Case II: $y_1^2 + y_2^2 < 1$, $y_1, y_2 > 0$ (Interior).}}

For the optimum, it is important that $\frac{\partial \mathcal{L}_\lambda}{\partial y_1} = 0$ and $\frac{\partial \mathcal{L}_\lambda}{\partial y_2} = 0$. But the necessary condition for the maximum would be that the Hessian is negative semi-definite: $\frac{\partial^2 \mathcal{L}_\lambda}{\partial y_1^2} \leq 0$ and $\frac{\partial^2 \mathcal{L}_\lambda}{\partial y_2^2} \leq 0$. We can compute the second derivatives explicitly:

\begin{align}
    \frac{\partial^2 \mathcal{L}_\lambda}{\partial y_1^2} = 2 + 12\lambda y_1^2 - 4\lambda x_1^2 \leq 0 \Rightarrow y_1^2 \leq \frac{x_1^2}{3}.
\end{align}

Analogously, we can derive that $y_2^2 \leq \frac{1 - x_1^2}{3}$. Then substituting this into the functional, we get:

\begin{align}
    \mathcal{L}_\lambda(x_1, y_1, y_2) &= (x_1 + y_1)^2 + \left(\sqrt{1 - x_1^2} + y_2\right)^2 + \lambda (x_1^2 - y_1^2)^2 + \lambda \left(1 - x_1^2 - y_2^2\right)^2 \\
    & \leq \left(1 + \frac{1}{\sqrt{3}}\right)^2 + \lambda + \frac{\lambda}{9} < r_2(\lambda).
\end{align}

\subsubsection*{\textbf{Case III: $y_1 = 0$, $0 < y_2 < 1$.}}

\begin{align}
    \mathcal{L}_\lambda(x_1, y_1, y_2) &= x_1^2 + \left(\sqrt{1 - x_1^2} + y_2\right)^2 + \lambda x_1^4 + \lambda \left(1 - x_1^2 - y_2^2\right)^2.
\end{align}

As a continuous function of $y_2$ on an open domain, it can reach a maximum only when the second derivative is non-positive, which leads to the same condition as in the previous case: $y_2^2 \leq \frac{1 - x_1^2}{3}$. Since $y_1 = 0 \leq \frac{x_1^2}{3}$, this leads to the same conclusion.

\subsubsection*{\textbf{Case IV: $y_1 = 0$, $y_2 = 0$.}}
\begin{align}
\begin{split}
    \mathcal{L}_\lambda(x_1, 0, 0) &= x_1^2 + 1 + \lambda x_1^4 + \lambda (1 - x_1^2)^2 \leq x_1^2 + \lambda x_1^4 + \lambda(1 - x_1^2)^2 + 1 \leq 1 + \lambda < r_2^{\text{diag}}(\lambda).
\end{split}
\end{align}

\subsubsection*{\textbf{Case V: $y_1 = 1$, $y_2 = 0$.}}

\begin{align}
    \mathcal{L}_\lambda(x_1, 1, 0) &= (x_1 + 1)^2 + 1 - x_1^2 + \lambda(1 - x_1^2)^2 + \lambda (1 - x_1^2)^2 = 2 + 2x_1 + 2\lambda(1 - x_1^2)^2.
\end{align}

First, for $\lambda \leq \frac{1}{2}$, we have:
\begin{align}
     \mathcal{L}_\lambda(x_1, 1, 0) &\leq 2 + 2x_1 + 1 - x_1^2 \leq 4.
\end{align}

For $\lambda > \frac{1}{2}$, we have the following inequality:
\begin{align}
    \mathcal{L}_\lambda(x_1, 1, 0) &\leq 2 + 2x_1 + 2\lambda(1 - x_1^2) = 2 + 2\lambda + 2x_1 - 2\lambda x_1^2 \leq 2 + 2\lambda + \frac{1}{2\lambda}.
\end{align}

This concludes the proof.
\end{proof}

\begin{restatable}[Expected Second Moment Error with PP]{lemma}{lemmaexpectedsecondmomentPostPrDPAdam}
\label{lem:PostPr_DP_Adam}
Given the private estimation of the first moment $A_1(X + C_1^{-1}Z_1)$, where $A_1 = B_1C_1$, and the second moment $ A_2 (X + C_1^{-1}Z_1) \circ (X + C_1^{-1}Z_1)$, where $A_2 = B_2C_2$, with independent noise $Z_1\in \mathcal{N}(0, \|C_1\|_{1\to 2}^{2}\sigma^2)^{n \times d}$, the clipping norm $\zeta=1$, $d > 1$, the expected squared Frobenius norm of the estimation error for the second moment satisfies:
\begin{equation}
\begin{aligned}
\sup_{X \in \mathcal{X}}\mathbb{E}_{Z} \|\widehat D_{\text{PP}}-D\|^2 :&= \sup_{X \in \mathcal{X}}\mathbb{E}\| A_2 ((X + C_1^{-1}Z_1) \circ (X + C_1^{-1}Z_1)) - A_2 (X \circ X) \|_{\Fr}^2 \\
&= 2d\sigma^4 \|C_1\|_{1 \to 2}^4 \cdot \tr(A_2^\top A_2 (Q \circ Q))\\
&\qquad+ 4\sigma^2 \|C_1\|_{1 \to 2}^2 \cdot \sup_{X \in \mathcal{X}} \tr((A_2^TA_2 \circ Q)XX^T)\\ &\qquad +\underbrace{d\sigma^4 \|C_1\|_{1 \to 2}^4 \cdot \tr(A_2^\top A_2 E_Q)}_{\text{bias}}
\end{aligned}
\end{equation}
where $Q = C_1^{-1} C_1^{-\top}$ and $E_Q = \diag(Q)\diag^{\top}(Q)$.
\end{restatable}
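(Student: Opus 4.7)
The plan is to mirror the proof of \Cref{lem:PostPr_Moments}, but with the Face-Splitting product $\face$ replaced by the Hadamard product $\circ$. Since $\circ$ is commutative (unlike $\face$), the expansion is cleaner and the intermediate cross term $\mathcal S_2$ that appears in the Face-Splitting proof will not show up. The key structural difference is that the output now lies in $\mathbb{R}^{n\times d}$ instead of $\mathbb{R}^{n\times d^2}$, which changes several prefactors.

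First I would expand, using commutativity of $\circ$,
\begin{align*}
(X+C_1^{-1}Z_1)\circ(X+C_1^{-1}Z_1) - X\circ X = 2\,X\circ(C_1^{-1}Z_1) + (C_1^{-1}Z_1)\circ(C_1^{-1}Z_1).
\end{align*}
Applying $A_2$ and expanding the squared Frobenius norm yields three terms:
\begin{align*}
&4\,\mathbb{E}\|A_2(X\circ C_1^{-1}Z_1)\|_{\Fr}^2
+ \mathbb{E}\|A_2\bigl((C_1^{-1}Z_1)\circ(C_1^{-1}Z_1)\bigr)\|_{\Fr}^2 \\
&\qquad + 4\,\mathbb{E}\langle A_2(X\circ C_1^{-1}Z_1),\ A_2((C_1^{-1}Z_1)\circ(C_1^{-1}Z_1))\rangle.
\end{align*}
The cross term vanishes because its integrand is, coordinate-wise, an odd polynomial in the Gaussian entries of $Z_1$.

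Next I would compute the first surviving term analogously to $\mathcal S_1$ in \eqref{eq:S_1postprocessing}, but now the outer sum runs only over a single index $j\in[d]$ (instead of the pair $(i,j)\in[d]^2$), since the Hadamard product picks out diagonal entries. Using $\mathbb{E}(C_1^{-1}Z_1)_{t_1,j}(C_1^{-1}Z_1)_{t_2,j}=\sigma^2\|C_1\|_{1\to2}^2\,Q_{t_1,t_2}$, one obtains
\begin{align*}
\mathbb{E}\|A_2(X\circ C_1^{-1}Z_1)\|_{\Fr}^2 = \sigma^2\|C_1\|_{1\to2}^2\,\tr\!\bigl((A_2^\top A_2\circ Q)\,XX^\top\bigr),
\end{align*}
giving the second term of the lemma after taking the sup over $X\in\mathcal X$ and multiplying by $4$.

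Finally, the second surviving term is the analogue of $\mathcal S_3$ from \eqref{eq:S_3postprocessing}. Its contribution reduces, after expanding the square, to $\mathbb{E}(C_1^{-1}Z_1)_{t_1,j}^2 (C_1^{-1}Z_1)_{t_2,j}^2$, which is precisely the $i=j$ moment identity established in \eqref{eq:caseieqj}:
\begin{align*}
\mathbb{E}(C_1^{-1}Z_1)_{t_1,j}^2(C_1^{-1}Z_1)_{t_2,j}^2 = \sigma^4\|C_1\|_{1\to2}^4\bigl(Q_{t_1,t_1}Q_{t_2,t_2} + 2\,Q_{t_1,t_2}^2\bigr).
\end{align*}
Summing over $j\in[d]$ (a single factor of $d$, not $d^2$ as in the Face-Splitting case) and over $k,t_1,t_2$, the two pieces become $d\sigma^4\|C_1\|_{1\to2}^4\,\tr(A_2^\top A_2 E_Q)$ (the bias) and $2d\sigma^4\|C_1\|_{1\to2}^4\,\tr(A_2^\top A_2(Q\circ Q))$, which match the remaining two terms of the claim. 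The only real obstacle is bookkeeping: the absence of the $i\neq j$ off-diagonal contribution (which in \Cref{lem:PostPr_Moments} produced the $d(d+1)$ coefficient) must be carefully tracked to end up with $2d$ and $d$ coefficients rather than $d(d+1)$ and $d$, and the disappearance of $\mathcal S_2$ must be explicitly noted since there is no Face-Splitting analogue here.
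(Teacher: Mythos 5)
Your proposal matches the paper's proof essentially line for line: the paper also decomposes the error into $4\mathcal S_1 + \mathcal S_3$ (the odd-in-$Z_1$ cross term drops, and the $\mathcal S_2$ term from the Face-Splitting proof is absorbed by the factor $4$ because $\circ$ is commutative), computes $\mathcal S_1 = \sigma^2\|C_1\|_{1\to 2}^2 \tr((A_2^\top A_2\circ Q)XX^\top)$, and evaluates $\mathcal S_3$ via the same fourth-moment identity, with a single sum over $j\in[d]$ replacing the double sum over $(i,j)$. The bookkeeping you flag (why the coefficients are $2d$ and $d$ here rather than $d(d+1)$ and $d$) is exactly the reason the paper obtains its stated constants, so the argument is sound and no gap remains.
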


\begin{proof}
We aim to evaluate the expected squared Frobenius norm of the error:
\begin{equation}
\begin{aligned}
&\sup_{X \in \mathcal{X}}\mathbb{E}\| A_2 ((X + C_1^{-1}Z_1) \circ (X + C_1^{-1}Z_1)) - A_2 (X \circ X) \|_{\Fr}^2\\
&\quad =4 \underbrace{\sup_{X \in \mathcal{X}}\mathbb{E} \|A_2 (X \circ C_1^{-1} Z_1)\|_{\Fr}^2}_{S_1} + \underbrace{\mathbb{E} \|A_2 ((C_1^{-1} Z_1) \circ (C_1^{-1} Z_1))\|_{\Fr}^2}_{S_2}
\end{aligned}
\end{equation}
We compute those terms separately.
\begin{equation}
\begin{aligned}
    S_1 &= \sup_{X \in \mathcal{X}}\mathbb{E} \|A_2 (X \circ C_1^{-1} Z_1)\|_{\Fr}^2 =  \sup_{X \in \mathcal{X}}\mathbb{E} \sum\limits_{k = 1}^{n}\sum\limits_{j = 1}^{d} \left(\sum\limits_{t = 1}^{n} (A_2)_{k, t} X_{t, j} \sum\limits_{r = 1}^{n} (C_1^{-1})_{t, r} (Z_1)_{r, j}\right)^2\\
    &=  \sup_{X \in \mathcal{X}} \sum\limits_{k = 1}^{n}\sum\limits_{j = 1}^{d} \sum\limits_{t_1, t_2}^{n} (A_2)_{k, t_1}(A_2)_{k, t_1} X_{t_1, j}X_{t_2, j} \sum\limits_{r = 1}^{n} (C_1^{-1})_{t_1, r}(C_1^{-1})_{t_2, r}  \mathbb{E}(Z_1)_{r, j}^2\\
    &= \sigma^2 \|C_1\|_{1\to2}^2\sup_{X \in \mathcal{X}} \sum\limits_{k = 1}^{n}\sum\limits_{j = 1}^{d} \sum\limits_{t_1, t_2}^{n} (A_2)_{k, t_1}(A_2)_{k, t_2} X_{t_1, j}X_{t_2, j} \sum\limits_{r = 1}^{n} (C_1^{-1})_{t_1, r}(C_1^{-1})_{t_2, r}\\
    &= \sigma^2 \|C_1\|_{1\to2}^2\sup_{X \in \mathcal{X}} \sum\limits_{t_1, t_2}^{n} \langle(A_2^\top)_{t_1}, (A_2^\top)_{t_2}\rangle \langle X_{t_1}, X_{t_2}\rangle  \langle(C_1^{-1})_{t_1}, (C_1^{-1})_{t_2} \rangle\\
    &= \sigma^2 \|C_1\|_{1\to2}^2\sup_{X \in \mathcal{X}} \tr((A_2^TA_2 \circ Q)XX^T),
\end{aligned}
\end{equation}
where $Q = C_1^{-1}C_1^{-T}$.
\begin{equation}
\begin{aligned}
    S_2 &= \mathbb{E} \|A_2 ((C_1^{-1} Z_1) \circ (C_1^{-1} Z_1))\|_{\Fr}^2 = \mathbb{E} \sum\limits_{k = 1}^{n}\sum\limits_{j = 1}^{d} \left(\sum\limits_{t = 1}^{n}(A_2)_{k, t} (C_1^{-1}Z_1)^2_{t, j}\right)^2\\
    &= \mathbb{E} \sum\limits_{k = 1}^{n}\sum\limits_{j = 1}^{d} \sum\limits_{t_1, t_2}^{n}(A_2)_{k, t_1}(A_2)_{k, t_2} (C_1^{-1}Z_1)^2_{t_1, j}(C_1^{-1}Z_1)^2_{t_2, j}.
\end{aligned}
\end{equation}

First, we compute:
\begin{equation}
\begin{aligned}
\mathbb{E}(C_1^{-1}Z_1)^2_{t_1, j}(C_1^{-1}Z_1)^2_{t_2, j} 
&=\mathbb{E}\sum\limits_{r = 1}^{n} (C_1^{-1})^2_{t_1, r}(C_1^{-1})^2_{t_2, r}(Z_1)_{r, j}^4 \\
& \quad + \mathbb{E}\sum\limits_{r_1\ne r_2}^{n} (C_1^{-1})^2_{t_1, r_1}(C_1^{-1})^2_{t_2, r_2}(Z_1)_{r_1, j}^2(Z_1)_{r_2, j}^2\\
&\quad + 2\mathbb{E}\sum\limits_{r_1\ne r_2}^{n} (C_1^{-1})_{t_1, r_1}(C_1^{-1})_{t_1, r_2}(C_1^{-1})_{t_2, r_1}(C_1^{-1})_{t_2, r_2}(Z_1)_{r_1, j}^2(Z_1)_{r_2, j}^2\\
&= 3\sigma^4 \|C_1\|_{1 \to 2}^4 \sum\limits_{r = 1}^{n} (C_1^{-1})^2_{t_1, r}(C_1^{-1})^2_{t_2, r}\\
&\quad+ \sigma^4 \|C_1\|_{1 \to 2}^4\sum\limits_{r_1 \ne r_2}^{n} (C_1^{-1})^2_{t_1, r_1}(C_1^{-1})^2_{t_2, r_2}\\
&\quad +2 \sigma^4 \|C_1\|_{1 \to 2}^4\sum\limits_{r_1\ne r_2}^{n} (C_1^{-1})_{t_1, r_1}(C_1^{-1})_{t_1, r_2}(C_1^{-1})_{t_2, r_1}(C_1^{-1})_{t_2, r_2}\\
&= \sigma^4 \|C_1\|_{1 \to 2}^4Q_{t_1, t_1} Q_{t_2, t_2} +2 \sigma^4 \|C_1\|_{1 \to 2}^4Q_{t_1, t_2}^2.
\end{aligned}
\end{equation}

Plugging it back, we get
\begin{equation}
\begin{aligned}
     \mathbb{E} \|A_2 ((C_1^{-1} Z_1) \circ (C_1^{-1} Z_1))\|_{\Fr}^2 &= d\sigma^4 \|C_1\|_{1 \to 2}^4\sum\limits_{k = 1}^{n} \sum\limits_{t_1, t_2}^{n}(A_2)_{k, t_1}(A_2)_{k, t_2} (Q_{t_1,t_1}Q_{t_2, t_2} + 2Q_{t_1, t_2}^2)\\
     &= d\sigma^4 \|C_1\|_{1 \to 2}^4\sum\limits_{k = 1}^{n} \sum\limits_{t_1, t_2}^{n}(A_2)_{k, t_1}(A_2)_{k, t_2} (Q_{t_1,t_1}Q_{t_2, t_2} + 2Q_{t_1, t_2}^2)\\
     &= d\sigma^4 \|C_1\|_{1 \to 2}^4 \tr (A_2^\top A_2E_Q) +2d\sigma^4 \|C_1\|_{1 \to 2}^4 \tr (A_2^\top A_2(Q\circ Q)),
\end{aligned}
\end{equation}
where $E_Q = \diag(Q)\diag^{\top}(Q)$. 

Adding these terms together we obtain:
\begin{equation}
\begin{aligned}
\sup_{X \in \mathcal{X}}\mathbb{E}_{Z} \|\widehat D_{\text{PP}}-D\|^2&= 2d\sigma^4 \|C_1\|_{1 \to 2}^4 \cdot \tr(A_2^\top A_2 (Q \circ Q))\\
&\quad+ 4\sigma^2 \|C_1\|_{1 \to 2}^2 \cdot \sup_{X \in \mathcal{X}} \tr((A_2^TA_2 \circ Q)XX^\top)\\
&\quad+d\sigma^4 \|C_1\|_{1 \to 2}^4 \cdot \tr(A_2^\top A_2 E_Q)
\end{aligned}
\end{equation}

\textbf{Bias Correction.}

The expectation of $A_2 ((C_1^{-1}Z_1) \circ (C_1^{-1}Z_1))]_{k, j}$ introduces a bias:
\begin{equation}
\begin{aligned}
[\mathbb{E} A_2 ((C_1^{-1}Z_1) \circ (C_1^{-1}Z_1))]_{k, j} &= \mathbb{E}\sum\limits_{t = 1}^{n} (A_2)_{k, t} \left(\sum\limits_{r =1}^{n}(C_1^{-1})_{t, r} (Z_1)_{r, j}\right)^2\\
&= \sigma^2\|C_1\|_{1\to2}^2\sum\limits_{t = 1}^{n}\sum\limits_{r =1}^{n} (A_2)_{k, t} (C_1^{-1})^2_{t, r}\\
&=\sigma^2\|C_1\|_{1\to2}^2\sum\limits_{t = 1}^{n} (A_2)_{k, t} Q_{t, t}.
\end{aligned}
\end{equation}
The Frobenius norm of this bias is:
\begin{equation}
\begin{aligned}
\|\mathbb{E}A_2 ((C_1^{-1}Z_1) \circ (C_1^{-1}Z_1))\|_{\Fr}^2 &= \sigma^4\|C_1\|_{1\to2}^4 \sum\limits_{k = 1}^{n} \sum\limits_{j = 1}^{d} \sum\limits_{t_1, t_2}^{n} (A_2)_{k, t_1}(A_2)_{k, t_2} Q_{t_1, t_1}Q_{t_2, t_2}\\
&= d\sigma^4 \|C_1\|_{1 \to 2}^4 \tr (A_2^\top A_2E_Q)
\end{aligned}
\end{equation}

If we subtract this bias from the estimate, it will increase the error by the aforementioned quantity due to the Frobenius norm of the bias but will decrease the error by two scalar products with the $A_2 ((C_1^{-1} Z_1) \circ (C_1^{-1} Z_1))$ term:

\begin{equation}
    \mathbb{E}\langle A_2 ((C_1^{-1}Z_1) \circ (C_1^{-1}Z_1)), \mathbb{E}A_2 ((C_1^{-1}Z_1) \circ (C_1^{-1}Z_1)) \rangle = \|\mathbb{E}A_2 ((C_1^{-1}Z_1) \circ (C_1^{-1}Z_1))\|_{\Fr}^2.
\end{equation}

Thus, we can eliminate the last term ($d\sigma^4 \|C_1\|_{1 \to 2}^4 \tr (A_2^\top A_2E_Q)$) in the error sum via bias correction.
\end{proof}

\end{document}